\title{Federated Learning on Riemannian Manifolds}
\author{%
  Jiaxiang Li\\% \thanks{} \\
  Department of Mathematics\\
  University of California, Davis\\
  Davis, CA 95616 \\
  \texttt{jxjli@ucdavis.edu} \\
  % examples of more authors
    \And
  Shiqian Ma\thanks{\url{https://www.math.ucdavis.edu/\~sqma}} \\
  Department of Mathematics\\
  University of California, Davis\\
  Davis, CA 95616 \\
  \texttt{sqma@ucdavis.edu} \\
  % \AND
  % Coauthor \\
  % Affiliation \\
  % Address \\
  % \texttt{email} \\
  % \And
  % Coauthor \\
  % Affiliation \\
  % Address \\
  % \texttt{email} \\
  % \And
  % Coauthor \\
  % Affiliation \\
  % Address \\
  % \texttt{email} \\
}
\begin{document}

\maketitle

\begin{abstract}
Federated learning (FL) has found many important applications in smart-phone-APP based machine learning applications. Although many algorithms have been studied for FL, to the best of our knowledge, algorithms for FL with nonconvex constraints have not been studied. This paper studies FL over Riemannian manifolds, which finds important applications such as federated PCA and federated kPCA. We propose a Riemannian federated SVRG (\texttt{RFedSVRG}) method to solve federated optimization over Riemannian manifolds. We analyze its convergence rate under different scenarios. Numerical experiments are conducted to compare \texttt{RFedSVRG} with the Riemannian counterparts of \texttt{FedAvg} and \texttt{FedProx}. We observed from the numerical experiments that the advantages of \texttt{RFedSVRG} are significant. 
%In the numerical experiments, we test the performance of RFedSVRG on PCA and kPCA in federated setting, and compare it with Riemannian Fedprox (RFedProx) and Riemannian Fedavg (RFedAvg), which are natural extensions of their Euclidean counterpart. The results show that our RFedSVRG significantly outperforms RFedProx and RFedAvg.
\end{abstract}

\section{Introduction}\label{section_intro}
Federated learning (FL) has drawn lots of attentions recently due to its wide applications in modern machine learning. Canonical FL aims at solving the following finite-sum problem~\cite{konevcny2016federated,mcmahan2017communication,kairouz2021advances}:
\begin{equation}\label{finite_sum}
\min_{x\in\mathbb{R}^d} f(x):=\frac{1}{n}\sum_{i=1}^{n}f_i(x),
\end{equation}
where each of the $f_i$ (or the data associated with $f_i$) is stored in different client/agent that could have different physical locations and different hardware. This makes the mutual connection impossible~\cite{konevcny2016federated}. Therefore, there is a central server that can collect the information from different agents and output a consensus that minimizes the summation of the loss functions from all the clients. The aim of such a framework is to utilize the computation resources of different agents while still maintain the data privacy by not sharing data among all the local agents. Thus the communication is always between the central server and local servers. This setting is commonly observed in modern smart-phone-APP based machine learning applications~\cite{konevcny2016federated}. We emphasize that we always consider the heterogeneous data scenario where the functions $f_i$'s might be different and have different optimal solutions. This problem is inherently hard to solve because each local minima will empirically diverge the update from the global optimum~\cite{li2020federated,mitra2021linear}.

In this paper, we consider the following FL problem over a Riemannian manifold $\M$:
\begin{equation}\label{problem_finite_sum}
\min_{x\in\M} f(x):=\frac{1}{n}\sum_{i=1}^{n}f_i(x)
\end{equation}
where $f_i:\M\rightarrow\RR$ are smooth but not necessarily (geodesically) convex. It is noted that most FL algorithms are designed for the unconstrained setting and convex constraint setting \cite{konevcny2016federated,mcmahan2017communication, karimireddy2020scaffold, li2019convergence, malinovskiy2020local, charles2021convergence, pathak2020fedsplit, mitra2021linear}, and FL problems with nonconvex constraints such as \eqref{problem_finite_sum} have not been considered. The main difficulty for solving \eqref{problem_finite_sum} lies in aggregating points over a nonconvex set, which may lead to the situation where the averaging point is outside of the constraint set.

One motivating application of \eqref{problem_finite_sum} is the federated kPCA problem 
\begin{equation}\label{problem_kPCA}
\min_{X\in\St(d, r)} f(X):=\frac{1}{n}\sum_{i=1}^{n}f_i(X),\ \mbox{ where } f_i(X)=-\frac{1}{2}\tr(X^\top A_i X),
\end{equation}
where $\St(d, r)=\{X\in\RR^{d\times r}| X^\top X=I_r\}$ denotes the Stiefel manifold, and $A_i$ is the covariance matrix of the data stored in the $i$-th local agent. When $r=1$, \eqref{problem_kPCA} reduces to classical PCA
\begin{equation}\label{problem_PCA}
\min_{\|x\|_2=1} f(x):=\frac{1}{n}\sum_{i=1}^{n}f_i(x),\ \mbox{ where } f_i(x)=-\frac{1}{2}x^\top A_i x.
\end{equation}
Existing FL algorithms are not applicable to \eqref{problem_kPCA} and \eqref{problem_PCA} due to the difficulty on aggregating points on nonconvex set. 

%\eqref{problem_PCA} of calculating the largest eigenvalue and the corresponding eigenvector (known as the first principal direction). Our goal in this paper is to study the algorithms for solving \eqref{problem_finite_sum} with a suitable consensus step. We aim at algorithms not only have good numerical performance, but also have convergence theory with good rate of convergence guarantees.

%However we still can observe such nonconvex-constrained problems in machine-learning application. Perhaps the most famous and ubiquitous example is the Principal Component Analysis (PCA) under FL setting, which could be formulated as minimizing sum of quadratic functions over the unit sphere:

%where $A_i$ is the covariance matrix for the data stored in the $i$-th local agent, which is prohibited from sending directly to the central server or other agents. PCA is arguably the most classical tool in discovering linear structure, dimension reduction and has become essential in modern-day machine learning~\cite{jolliffe2016principal,grammenos2020federated}. Existing Fl algorithms are not able to solve \eqref{problem_PCA} due to the fore-mentioned difficulty on aggregating points on nonconvex set, we thus are looking for new tools and frameworks that are capable to do so.

\subsection{Main Contributions}
We focus on designing efficient federated algorithms for solving \eqref{problem_finite_sum}. Our main contributions are: %The non-convexity introduced by the manifold constraint $x\in\M$ makes existing federated learning methods unsuitable as fore-mentioned. As a result, we combine the manifold optimization methods with existing federated optimization techniques to propose two algorithms suitable for solving \eqref{problem_finite_sum}. Another contribution of this paper is the consensus steps on the manifold, which is generally time-consuming for any algorithmic attempt along the federated learning framework. Our key innovation is to keep all the consensus step in the tangent spaces of the previous iterate, which greatly reduce the computing time as well as keeping a good performance in both the theory and the numerical experiments. We summarize our contributions as follows:

\begin{enumerate}[leftmargin=*]
\item We propose a Riemannian federated SVRG algorithm (\texttt{RFedSVRG}) for solving \eqref{problem_finite_sum}. We prove that the convergence rate of our RFedSVRG algorithm is $\mathcal{O}(1/\epsilon^2)$ for obtaining an $\epsilon$-stationary point. This result matches that of its Euclidean counterparts~\cite{mitra2021linear}. To the best of our knowledge, this is the first algorithm for solving FL problems over Riemannian manifolds with convergence guarantees. %Notably,  our convergence result does not need some of the restrictive assumptions in the literature such as bounded dissimilarity~\cite{li2020federated} and $(G,B)$-BGD~\cite{karimireddy2020scaffold}.

    \item The main novelty of our \texttt{RFedSVRG} algorithm is a consensus step on the tangent space of the manifold. We compare this new approach with the widely used Karcher mean approach. We show that our method achieves certain "regularization" property and performs very well in practice. 
    \item We conduct extensive numerical experiments on our method for solving the PCA \eqref{problem_PCA} and kPCA \eqref{problem_kPCA} problems with both synthetic and real data. The numerical results demonstrate that our \texttt{RFedSVRG} algorithm significantly outperforms the Riemannian counterparts of two widely used FL algorithms: \texttt{FedAvg} \cite{mcmahan2017communication} and \texttt{FedProx} \cite{li2020federated}.
\end{enumerate}

%Our proposed methods are the first to solve the manifold-constrained federated learning problem \eqref{problem_finite_sum} and are thus the first with rate of convergence guarantees. Developing algorithms for lower the communication cost, better scalability and sparse solutions will be our next working directions.

\subsection{Related Work}

\textbf{Federated optimization.} %Finite sum problem \eqref{finite_sum} has been extensively studied in recent years due to its wide applications in machine learning. From an optimization point of view, one of the most important line of works is the variance reduction techniques and their role in accelerating the convergence~\cite{allen2017katyusha,defazio2014saga,johnson2013accelerating,lan2018optimal,lin2015universal,schmidt2017minimizing,nguyen2017sarah,fang2018spider}. Meanwhile, despite the same optimization expression as the general finite sum problem \eqref{finite_sum}, federated learning problems impose more restrictions over the communication over devices~\cite{konevcny2016federated}, thus make the convergence theory more difficult to obtain. 
The most natural idea for FL is the \texttt{FedAvg} algorithm \cite{mcmahan2017communication}, which averages local gradient descent updates and yields a good empirical convergence. However in the data heterogeneous situation, \texttt{FedAvg} suffers from the client-drift effect that each local client will drift the solution towards the minimum of their own local loss function \cite{karimireddy2020scaffold, li2019convergence, malinovskiy2020local, charles2021convergence, pathak2020fedsplit, mitra2021linear}. Many ideas were studied to resolve this issue. For example, \cite{li2020federated} proposed the \texttt{FedProx} algorithm, which regularizes each of the local gradient descent update to ensure that the local iterates are not far from the previous consensus point. The \texttt{FedSplit}~\cite{pathak2020fedsplit} was proposed later to further mitigate the client-drift effect and convergence results were obtained for convex problems. %However, such a framework only works for convex or strongly convex setting, and thus not suitable to deal with the (usually) nonconvex problem \eqref{problem_finite_sum}\footnote{For complete compact Riemannian manifolds, nontrivial continuous convex functions don't exist~\cite{yau1974non}. Notice that the Stiefel manifold $\St(n,r)$ is an example that lies in this scope.}. 
\texttt{FedNova}~\cite{wang2020tackling} was also proposed to improve the performance of \texttt{FedAvg}, however it still suffers from a fundamental speed-accuracy conflict under objective heterogeneity~\cite{mitra2021linear}. Variance reduction techniques were also incorporated to FL leading to two new algorithms: federated SVRG (\texttt{FSVRG}) \cite{konevcny2016federated} and \texttt{FedLin} \cite{mitra2021linear}. These two algorithms require transmitting the full gradient from the central server to each local client for local gradient updates, therefore require more communication between clients and the central server. Nevertheless, \texttt{FedLin} achieves the theoretical lower bound for strongly convex objective functions~\cite{mitra2021linear} with an acceptable amount of increase in the communication cost.

\textbf{Decentralized optimization on manifolds.} Decentralized distributed optimization on manifold has also drawn attentions in recent years~\cite{chen2021decentralized, shah2017distributed,alimisis2021distributed}. Under this setting, each local agent solves a local problem and then the central server takes the consensus step. The consensus step is usually done by calculating the Karcher mean on the manifold~\cite{tron2012riemannian,shah2017distributed}, or calculating the minimizer of the sum of the square of the Euclidean distances in the embedded submanifold case~\cite{chen2021decentralized}. Such consensus steps usually require solving an additional problem inexactly with no exact convergence rate guarantee~\cite{tron2012riemannian, chen2021local}. 

It is worth mentioning that the PCA problem under federated learning setting has been considered in the literature \cite{grammenos2020federated}. The proposed method in~\cite{grammenos2020federated} relies on the SVD of data matrices and a subspace merging technique, which is very different from our method. The aim of the algorithm in~\cite{grammenos2020federated} is to achieve $(\epsilon,\delta)$-differential privacy. In contrast, we mainly consider the convergence rate of our method. Therefore our work is totally different from \cite{grammenos2020federated}. 

\section{Preliminaries on Riemannian Optimization}
%\subsection{Basics for optimization on Riemannian manifolds}

In this part, we briefly review the basic tools we use for optimization on Riemannian manifolds~\cite{absil2009optimization,lee2006riemannian,Tu2011manifolds,boumal2022intromanifolds}. Due to the limit of space, more detailed discussions are given in supplementary material \ref{appendix_manifold}. Suppose $\M$ is an $m$-dimensional Riemannian manifold with Riemannian metric $g:T\M\times T\M\rightarrow\RR$. We first review the notion of the Riemannian gradients.
\begin{definition}[Riemannian gradients]
    For a Riemannian manifold with Riemannian metric $g$, the Riemannian gradient for $f\in C^\infty(\M)$ is the unique tangent vector $\grad f(x)\in T_x\M$ such that $df(\xi) = g(\grad f, \xi),\ \forall \xi\in T_x\M$, where $d f$ is the differential of function $f$ defined as $d f(\xi):=\xi(f)$.
\end{definition}

For the convergence analysis, we also need the notion of exponential mapping and parallel transport. We first review the definition of exponential mapping
\begin{definition}[Exponential mapping]
    Given $x\in\M$ and $\xi\in T_x\M$, the exponential mapping $\Exp_x$ is defined as a mapping from $T_x\M$ to $\M$ s.t. $\Exp_x(\xi):= \gamma(1)$ with $\gamma$ being the geodesic with $\gamma(0)=x$, $\Dot{\gamma}(0)=\xi$. A natural corollary is $\Exp_x(t\xi):= \gamma(t)$ for $t\in[0, 1]$. Another useful fact is $d(x,\Exp_x(\xi))=\|\xi\|_x$ since $\gamma'(0)=\xi$ which preserves the speed.
\end{definition}
Throughout this paper, we always assume that $\M$ is complete, so that $\Exp_x$ is always defined for every $\xi\in T_x\M$. For $\forall x,y\in\M$, the inverse of the exponential mapping $\Exp_{x}^{-1}(y)\in T_x\M$ is called the logarithm mapping, and we have $d(x,y)=\|\Exp_{x}^{-1}(y)\|_x$, which will be a useful fact in the convergence analysis. We now present the definition of parallel transport.
\begin{definition}[Parallel transport]
    Given a Riemannian manifold $(\M, g)$ and two points $x,y\in\M$, the parallel transport $P_{x\rightarrow y}:T_x\M\rightarrow T_y\M$\footnote{Notice that the existence of parallel transport depends on the curve connecting $x$ and $y$, which is not a problem for complete Riemannian manifold because we always take the unique geodesic that connects $x$ and $y$.} is a linear operator which keeps the inner product: $\forall \xi,\zeta\in T_x\M$, we have $\langle P_{x\rightarrow y}\xi, P_{x\rightarrow y}\zeta\rangle_y = \langle\xi, \zeta\rangle_x$.
\end{definition}
Parallel transport is useful since the Lipschitz condition for the Riemannian gradient requires moving the gradients in different tangent spaces "parallel" to the same tangent space.

We now present the definition of Lipschitz smoothness and convexity on Riemannian manifolds, which will be utilized in our convergence analysis.
\begin{definition}[$L$-smoothness on manifolds]\label{assumption_manifold_smooth}
$f$ is called Lipschitz smooth on manifold $\M$ if there exists $L\geq0$ such that the following inequality holds for function $f$:
\begin{equation}\label{eq:lgsmoothness1}
    \|\grad f(y) - P_{y\rightarrow x}\grad f(x)\|\leq L d(x,y).
\end{equation}
For complete Riemannian manifold, we have~\cite{zhang2016first}:
\begin{equation}\label{eq:lgsmoothness2}
    f(y) \leq f(x)+\left\langle g_{x}, \Exp_{x}^{-1}(y)\right\rangle_{x}+\frac{L_{g}}{2} d^{2}(x, y),\ \forall x,y\in\M.
\end{equation}
\end{definition}

The definition of geodesic convexity is given below (see, e.g., \cite{zhang2016first}).
\begin{definition}[Geodesic convex]\label{assumption_geodesic_convex}
    A function $f\in C^1(\M)$ is geodesically convex if for all $x,y\in\M$, there exists a geodesic $\gamma$ such that $\gamma(0)=x$, $\gamma(1)=y$ and 
    $$
    f(\gamma(t))\leq (1-t)f(x)+t f(y),\ \forall t\in[0,1].
    $$
    Or equivalently,
    $$
    f(y)\geq f(x) + \langle \grad f(x), \Exp_{x}^{-1}(y) \rangle_x.
    $$
    % Moreover, if $\exists\mu >0$ such that
    % $$
    %     f(\gamma(t))\leq (1-t)f(x)+t f(y)-\mu t(1-t)d^2(x, y),\ \forall t\in[0,1],
    % $$
    % we say that $f$ is $\mu$-strongly geodesic convex. Equivalently,
    % $$
    % f(y)\geq f(x) + \langle \grad f(x), \Exp_{x}^{-1}(y) \rangle_x + \frac{\mu}{2}d^2(x, y).
    % $$
\end{definition}

%We now have adequate ingredients to present our main results on consensus step and the algorithm for federated learning on Riemannian manifolds.

\section{The RFedSVRG Algorithm}
%\subsection{Consensus step on Riemannian manifolds}
The most challenging task for FL on Riemannian manifolds is the consensus step. Suppose the central server receives $x^{(i)}$, $i\in S_t\subset[n]$ from each of the local clients at round $t$, the question is how the central server aggregates the points to output a unique consensus. In Euclidean space, the most straightforward way is to take the average $\frac{1}{k}\sum_{i\in S_t}x^{(i)}$ with $k=|S_t|$. However, this approach does not apply to the Riemannian setting due to the loss of linearity: the arithmetic average of points can be outside of the manifold. A natural choice for the consensus step on the manifold is to take the Karcher mean of the points \cite{tron2012riemannian}:
\begin{equation}\label{karcher_mean}
    x_{t+1}\leftarrow\argmin_x \frac{1}{k}\sum_{i\in S_t}d^2(x, x^{(i)}),
\end{equation}
where $x_{t+1}$ is the next iterate point on the central server. This is a natural generalization of the arithmetic average because $d^2(x,y)=\|x-y\|^2$ in Euclidean space. However, solving \eqref{karcher_mean} can be time consuming in practice. 

We propose the following tangent space consensus step: 
\begin{equation}\label{tangent_space_mean}
    x_{t+1}\leftarrow \Exp_{x_{t}}\left(\frac{1}{k}\sum_{i\in S_t}\Exp_{x_{t}}^{-1}(x^{(i)})\right),
\end{equation}
where we project each of the point $x_t^{(i)}$ back to the tangent space $T_{x_t}\M$ and then take their average on the tangent space. The consensus step \eqref{tangent_space_mean} has several advantages over the Karcher mean method \eqref{karcher_mean}. First, \eqref{tangent_space_mean} is of closed-form and easy to compute. Second, \eqref{tangent_space_mean} still coincides with the arithmetic mean when the manifold reduces to the Euclidean space. Third, the tangent space mean \eqref{tangent_space_mean} can easily be extended to the following moving average mean: 
\[
\Exp_{x_{t}}\left(\frac{\beta}{k}\sum_{i\in S_t}\Exp_{x_{t}}^{-1}(x^{(i)})\right),
\] 
which corresponds to $(1-\beta)x_t+\frac{\beta}{k}\sum_{i\in S_t}x^{(i)}$ in the Euclidean space, while the Karcher mean cannot be easily extended in this scenario. Last, \eqref{tangent_space_mean} has the following "regularization" property as the distance between two consensus points can be controlled, and the Karcher mean method \eqref{karcher_mean} does not have this kind of property.

\begin{lemma}\label{lemma_regularization_tangent_mean}
    For the update defined in \eqref{tangent_space_mean}, it holds that
    $$
        d(x_{t+1}, x_t)\leq \frac{1}{k}\sum_{i\in S_t} d(x^{(i)}, x_t).
    $$
\end{lemma}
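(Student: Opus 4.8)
The plan is to reduce the inequality to the triangle inequality inside the single tangent space $T_{x_t}\M$, where every relevant vector lives. The whole point of the tangent-space consensus step \eqref{tangent_space_mean}, as opposed to the Karcher mean \eqref{karcher_mean}, is that it linearizes the aggregation at the base point $x_t$, so that both the left-hand side and every term on the right-hand side can be written as norms of vectors in one common inner product space. Once that is set up, the result is immediate.

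First, I would introduce the shorthand $\xi_i := \Exp_{x_t}^{-1}(x^{(i)}) \in T_{x_t}\M$ for each $i\in S_t$, so that the update reads $x_{t+1} = \Exp_{x_t}(\bar\xi)$ with $\bar\xi := \frac{1}{k}\sum_{i\in S_t}\xi_i$. Next I would rewrite the left-hand side using the identity $d(x,\Exp_x(\xi)) = \|\xi\|_x$ from the definition of the exponential mapping, which gives $d(x_{t+1}, x_t) = \|\bar\xi\|_{x_t}$. For the right-hand side, I would use the logarithm-mapping identity $d(x,y) = \|\Exp_x^{-1}(y)\|_x$ stated in the preliminaries, so that $d(x^{(i)}, x_t) = \|\xi_i\|_{x_t}$ for each $i$. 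Finally, since $(T_{x_t}\M, \langle\cdot,\cdot\rangle_{x_t})$ is a finite-dimensional real inner product space, the induced norm satisfies positive homogeneity and the triangle inequality, hence
$$
d(x_{t+1}, x_t) = \left\|\frac{1}{k}\sum_{i\in S_t}\xi_i\right\|_{x_t} \leq \frac{1}{k}\sum_{i\in S_t}\|\xi_i\|_{x_t} = \frac{1}{k}\sum_{i\in S_t} d(x^{(i)}, x_t),
$$
which is exactly the claim.

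There is no substantial obstacle here; the argument is essentially a bookkeeping exercise once the two norm-distance identities are in hand. The only conceptual point worth emphasizing is that the triangle inequality is applicable \emph{precisely because} all the vectors $\xi_i$ and their average $\bar\xi$ reside in the same linear space $T_{x_t}\M$. This is the structural feature that the Karcher-mean consensus lacks: there the distances $d(x, x^{(i)})$ are measured on the curved manifold rather than in a single tangent space, so no such elementary bound is available. This is what underlies the stated ``regularization'' advantage of \eqref{tangent_space_mean}.
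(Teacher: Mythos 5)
Your proof is correct and follows essentially the same route as the paper's: both reduce the claim to the triangle inequality for the norm on $T_{x_t}\M$ after identifying $d(x_{t+1},x_t)$ with $\bigl\|\frac{1}{k}\sum_{i\in S_t}\Exp_{x_t}^{-1}(x^{(i)})\bigr\|_{x_t}$ and each $d(x^{(i)},x_t)$ with $\|\Exp_{x_t}^{-1}(x^{(i)})\|_{x_t}$. The only cosmetic difference is that the paper labels the final step ``Cauchy--Schwarz'' where you (correctly) call it the triangle inequality.
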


To further illustrate this "regularization" property of the tangent space mean \eqref{tangent_space_mean}, we consider an (extreme) example on the unit sphere $\mathcal{S}^2$ (see Figure \ref{fig:consensus}) . Here we take $x_t$ on the north pole and two point from the local server as $x^{(1)}$ and $x^{(2)}$, also $\xi^{(i)}=\Exp_{x_t}^{-1}(x^{(i)})\in T_{x_t}\M$. Then the tangent space mean \eqref{tangent_space_mean} would yield the original point $x_t$, whereas the Karcher mean could yield any point on the vertical great circle, depending on the starting point in solving the optimization problem \eqref{karcher_mean}.
\begin{figure}
    \centering
    % \vskip -0.5in
    
    \begin{tikzpicture}[
      point/.style = {draw, circle, fill=black, inner sep=0.7pt}, scale = 0.8
    ]
    \def\rad{2cm}
    \coordinate (O) at (0,0); 
    \coordinate (N) at (0,\rad); 
    
    \filldraw[ball color=white] (O) circle [radius=\rad]; % draw the sphere
    \draw[dashed] % draw the front half of radius
      (0, \rad) arc [start angle=90,end angle=-90,x radius=5mm,y radius=\rad];
    \draw % draw the back half of radius
      (0, \rad) arc [start angle=90,end angle=270,x radius=5mm,y radius=\rad];
      
    \begin{scope}[xslant=0.5,yshift=\rad,xshift=-2] % draw the tangent plane
    \filldraw[fill=gray!10,opacity=0.3]
      (-4.5,1) -- (2.5,1) -- (3,-1) -- (-4,-1) -- cycle;
    \node at (2,0.6) {$T_{x_t}\mathcal{S}^2$};  
    \end{scope}
    
    \draw[dashed] % draw the line connecting o and x_t
      (N) node[above] {$x_t$} -- (O) node[below] {$O$};
    \node[point] at (N) {};
    
    \draw[line width=1pt,blue,-stealth](0,\rad)--(pi,\rad) node[anchor=north east]{$\xi^{(1)}$};
    \draw[line width=1pt,red,-stealth](0,\rad)--(-pi,\rad) node[anchor=south west]{$\xi^{(2)}$};
    
    \node[point] at (2, 0) {};
    \node[right] at (2, 0) {$x^{(1)}$};
    
    \node[point] at (-2, 0) {};
    \node[left] at (-2, 0) {$x^{(2)}$};
    
    \end{tikzpicture}
    
    \caption{Comparison of two consensus methods on $\mathcal{S}^2$}
    \label{fig:consensus}
    % \vskip -0.4in
\end{figure}
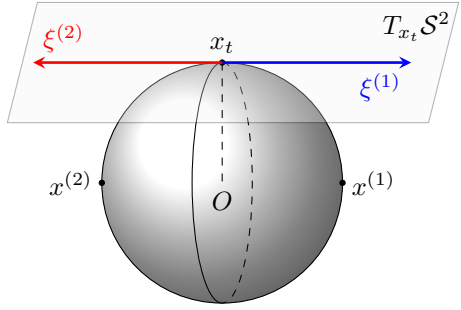

%\subsection{Algorithms for federated learning on manifolds}

Our \texttt{RFedSVRG} algorithm is presented in Algorithm \ref{manifold_fedsvrg}, which is a non-trivial manifold extension of the FSVRG algorithm \cite{konevcny2016federated}. 
%In this section, we generalize existing federated learning algorithms to federated optimization on Riemannian manifolds. The most straightforward idea is extending \texttt{FedAvg} to the manifold setting. However it turns out that the algorithm doesn't work well numerically (see the numerical experiment section) and observes a strong "client-drifting" effect. 
%Our RFedSVRG is a non-trivial manifold extension of the FSVRG algorithm \cite{konevcny2016federated}. 
%We thus consider the implementation of the manifold counterpart of \texttt{FSVRG} algorithm~\cite{konevcny2016federated} as in Algorithm \ref{manifold_fedsvrg}, which we call the Riemannian Federated SVRG \texttt{RFedSVRG}. It worth noticing that they include their Euclidean counterparts as special cases by taking $\M=\RR^d$.
For \texttt{RFedSVRG}, the local gradient update becomes
\begin{equation}\label{local_update_fedsvrg}
    x_{\ell+1}^{(i)}\leftarrow \Exp_{x_{\ell}^{(i)}}\left[-\eta^{(i)} \left(\grad f_i(x_{\ell}^{(i)}) - P_{x_t\rightarrow x_{\ell}^{(i)}}(\grad f_i(x_t) - \grad f(x_t))\right)\right],
\end{equation}
which matches the existing manifold SVRG work \cite{zhang2016fast}. The introduction of the parallel transport $P_{x_t\rightarrow x_{\ell}^{(i)}}$ is necessary because we need to "transport" all the vectors to the same tangent space to conduct addition and subtraction. The algorithm utilizes the gradient information at the previous iterate $\grad f(x_t)$, thus avoids the "client-drift" effect and correctly converges to the global stationary points. This is confirmed by both the theory and the numerical experiments.

\begin{algorithm}[ht]

\SetKwInOut{Input}{input}
\SetKwInOut{Output}{output}
\SetAlgoLined
\Input{$n$, $k$, $T$, $\{\eta^{(i)}\}$, $\{\tau_i\}$}
\Output{\textbf{Option 1:} $\Tilde{x}=x_T$; or \textbf{Option 2:} $\Tilde{x}$ is uniformly sampled from $\{x_1,...,x_T\}$}
    \For{$t=0,...,T-1$}{
        Uniformly sample $S_t\subset [n]$ with $|S_t|=k$\;
        \For{each agent $i$ in $S_t$}{
            Receive $x_0^{(i)}=x_t$ from the central server\;
            \For{$\ell=0,...,\tau_i-1$}{
                Take the local gradient step \eqref{local_update_fedsvrg}.
                %$x_{\ell+1}^{(i)}\leftarrow \Exp_{x_{\ell}^{(i)}}\left[-\eta^{(i)} \left(\grad f_i(x_{\ell}^{(i)}) - %P_{x_t\rightarrow x_{\ell}^{(i)}}(\grad f_i(x_t) - \grad f(x_t))\right)\right]$\;
            }
            Send $\hat{x}^{(i)}$ (obtained by one of the following options) to the central server 
            \begin{itemize}
                \item {\textbf{Option 1:}  $\hat{x}^{(i)}=x_{\tau_i}^{(i)}$;}
                \item {\textbf{Option 2:}} $\hat{x}^{(i)}$ is uniformly sampled from $\{x_{1}^{(i)},...,x_{\tau_i}^{(i)}\}$\;
            \end{itemize}
        }
        The central server aggregates the points by the tangent space mean \eqref{tangent_space_mean}\;
    }
    \caption{Riemannian FedSVRG Algorithm (RFedSVRG)}\label{manifold_fedsvrg}
\end{algorithm}

\section{Convergence analysis}
In this section we analyze the convergence behaviour of the \texttt{RFedSVRG} algorithm (Algorithm \ref{manifold_fedsvrg}). Before we proceed to the convergence results, we briefly review the necessary assumptions, which are standard assumptions for optimization on manifolds~\cite{zhang2016first,boumal2018global}.
\begin{assumption}[Smoothness]\label{assumption_smoothness}
    Suppose $f_i$ is $L_i$-smooth as defined in  \eqref{assumption_manifold_smooth}. It implies that $f$ is $L$-smooth with $L=\sum_{i=1}^{n}L_i$.
\end{assumption}

Now we give the convergence rate results for Algorithm \ref{manifold_fedsvrg}. Specifically, Theorem \ref{thm_nonconvex1} gives the convergence rate of Algorithm \ref{manifold_fedsvrg} with $\tau_i=1$, Theorem \ref{thm_nonconvex1.1} gives the convergence rate of Algorithm \ref{manifold_fedsvrg} with $\tau_i>1$, and Theorem \ref{thm_geodesic_convex} gives the convergence rate of Algorithm \ref{manifold_fedsvrg} when the objective function is geodescially convex. %We provide the convergence analysis for general nonconvex setting with either $\tau_i=1$ or $k=1$, whose convergence rates matches their Euclidean counterparts~\cite{mitra2021linear}. We also analyze the convergence for geodesic convex functions. It's worth mentioning that our convergence results (Theorem \ref{thm_nonconvex1}) doesn't rely on the full-batch update (i.e. $|S_t|=n$), which is also verified by our numerical experiments. We discuss such a behaviour in Remark \ref{remark_multiple_innersteps}. 
\begin{theorem}[Nonconvex, Algorithm \ref{manifold_fedsvrg} with $\tau_i=1$]\label{thm_nonconvex1}
    Suppose the problem \eqref{problem_finite_sum} satisfies Assumption \ref{assumption_smoothness}. If we run Algorithm \ref{manifold_fedsvrg} with \textbf{Option 1} in Line 8, $\eta^{(i)}\leq \frac{1}{L}$ and $\tau_i=1$ (i.e. only one step of gradient update for each agent), then the \textbf{Option 1} of the output of Algorithm \ref{manifold_fedsvrg} satisfies:
    \begin{equation}\label{thm-ineq}
        \min_{t=0,...,T}\|\grad f(x_t)\|^2\leq \mathcal{O}\left(\frac{L (f(x_0)-f(x^*))}{T}\right).
    \end{equation}
%    where $\Delta:=f(x_0)-f(x^*)$.
\end{theorem}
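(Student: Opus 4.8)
The plan is to exploit a structural simplification that occurs precisely when $\tau_i=1$: in this regime the variance-reduction correction in the local update \eqref{local_update_fedsvrg} cancels, and the whole algorithm collapses to a single step of Riemannian gradient descent. First I would observe that since each agent initializes at $x_0^{(i)}=x_t$, the parallel transport $P_{x_t\rightarrow x_0^{(i)}}$ appearing in \eqref{local_update_fedsvrg} is the identity on $T_{x_t}\M$. Substituting this into the single local step gives
$$
x_1^{(i)}=\Exp_{x_t}\left[-\eta^{(i)}\bigl(\grad f_i(x_t)-(\grad f_i(x_t)-\grad f(x_t))\bigr)\right]=\Exp_{x_t}\left(-\eta^{(i)}\grad f(x_t)\right),
$$
so every local iterate moves along the \emph{full} gradient $\grad f(x_t)$, with the client-specific term eliminated.

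Next I would push this through the consensus step. Under \textbf{Option 1} the server receives $\hat x^{(i)}=x_1^{(i)}$, hence $\Exp_{x_t}^{-1}(\hat x^{(i)})=-\eta^{(i)}\grad f(x_t)$, and the tangent-space mean \eqref{tangent_space_mean} yields
$$
x_{t+1}=\Exp_{x_t}\left(-\bar\eta_t\,\grad f(x_t)\right),\qquad \bar\eta_t:=\frac{1}{k}\sum_{i\in S_t}\eta^{(i)}.
$$
Thus \texttt{RFedSVRG} with $\tau_i=1$ is exactly Riemannian gradient descent with averaged step size $\bar\eta_t$; note $\bar\eta_t\le 1/L$ because each $\eta^{(i)}\le 1/L$.

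The remaining argument is the standard Riemannian descent lemma. I would apply the $L$-smoothness inequality \eqref{eq:lgsmoothness2} with $x=x_t$ and $y=x_{t+1}$. Since $\Exp_{x_t}^{-1}(x_{t+1})=-\bar\eta_t\grad f(x_t)$ and $d(x_t,x_{t+1})=\bar\eta_t\|\grad f(x_t)\|$, this gives
$$
f(x_{t+1})\le f(x_t)-\bar\eta_t\Bigl(1-\frac{L\bar\eta_t}{2}\Bigr)\|\grad f(x_t)\|^2\le f(x_t)-\frac{\bar\eta_t}{2}\|\grad f(x_t)\|^2,
$$
where the last step uses $\bar\eta_t\le 1/L$. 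Telescoping over $t=0,\dots,T-1$ and bounding $f(x_T)\ge f(x^*)$ gives $\sum_{t}\bar\eta_t\|\grad f(x_t)\|^2\le 2(f(x_0)-f(x^*))$; taking $\eta^{(i)}\equiv 1/L$ (so $\bar\eta_t\equiv 1/L$) and using $\min_t\le$ average yields the claimed bound $\mathcal{O}\!\left(L(f(x_0)-f(x^*))/T\right)$.

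The main obstacle is really the first observation: recognizing that the SVRG correction vanishes when $\tau_i=1$, which is exactly what makes the clean gradient-descent reduction possible. Once this is in hand, the rest is routine. The only minor care needed is the randomness in $\bar\eta_t$ coming from the sampling of $S_t$; this is harmless here, since the update \emph{direction} is always $\grad f(x_t)$ and the per-step descent inequality holds for every realization (and is fully deterministic when the step sizes are equal), so no expectation is required for this case.
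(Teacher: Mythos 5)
Your proposal is correct and follows essentially the same route as the paper: the key observation that the SVRG correction cancels when $\tau_i=1$ (so the consensus step reduces to a single Riemannian gradient step) is exactly the content of the paper's Lemma \ref{lemma_temp1}, and the descent-lemma-plus-telescoping argument is identical. Your handling of possibly heterogeneous step sizes via $\bar\eta_t$ is in fact slightly more careful than the paper's, which implicitly treats all $\eta^{(i)}$ as equal.
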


\begin{remark}\label{remark_multiple_innersteps}
    Our proof of Theorem \ref{thm_nonconvex1} relies heavily on the choice of $\tau_i=1$ and the consensus step \eqref{tangent_space_mean}. %, without which we cannot conduct our convergence analysis; Third, this one-step update free the convergence analysis from taking $S_t=[n]$, which was the case in its Euclidean counterparts~\cite{mitra2021linear}.
    When $\tau_i>1$, we need to introduce multiple exponential mappings at multiple points for each iteration, which makes the convergence analysis much more challenging due to the loss of linearity. Moreover, the aggregation step makes the situation even worse. However, we are able to show the convergence of Algorithm \ref{manifold_fedsvrg} with $\tau_i>1$ when $k=1$. Our numerical experiments show the effectiveness of the \texttt{RFedSVRG} algorithm with both $\tau_i=1$ and $\tau_i>1$.
\end{remark}

To prove the convergence of Algorithm \ref{manifold_fedsvrg} with $\tau_i> 1$, we also need the following regularization assumption over the manifold $\M$~\cite{zhang2016fast}.
\begin{assumption}[Regularization over manifold]\label{assumption_regu_manifold}
    The manifold is complete and there exists a compact set $\mathcal{D}\subset \M$ (diameter bounded by $D$) so that all the iterates of Algorithm \ref{manifold_fedsvrg} and the optimal points are contained in $\mathcal{D}$. The sectional curvature is bounded in $[\kappa_{\min}, \kappa_{\max}]$. Moreover, we denote the following key geometrical constant that captures the impact of manifold:
    \begin{equation}\label{zeta_eq}
        \zeta= \begin{cases}\frac{\sqrt{\left|\kappa_{\min }\right|} D}{\tanh \left(\sqrt{\left|\kappa_{\min }\right|} D\right)}, & \text { if } \kappa_{\min }<0 \\ 1, & \text { if } \kappa_{\min } \geq 0.\end{cases}
    \end{equation}
\end{assumption}
Notice that this assumption holds when the manifold is a sphere or a Stiefel manifold (since they are compact). Now we are ready to give the convergence rate result of Algorithm \ref{manifold_fedsvrg} with $\tau_i>1$ and $k=1$, the proof of which is inspired by~\cite{zhang2016fast}.

\begin{theorem}[Nonconvex, Algorithm \ref{manifold_fedsvrg} with $\tau_i>1$ and $k=1$]\label{thm_nonconvex1.1}
    Suppose the problem \eqref{problem_finite_sum} satisfies Assumptions \ref{assumption_smoothness} and \ref{assumption_regu_manifold}. If we run Algorithm \ref{manifold_fedsvrg} with \textbf{Option 2} in Line 8, $k=1$, $\tau_i=\tau>1$, $\eta^{(i)}=\eta\leq \mathcal{O}(\frac{1}{n L \zeta^2})$, then the \textbf{Option 2} of the output of Algorithm \ref{manifold_fedsvrg} satisfies:
    \[
        \E\|\grad f(\Tilde{x})\|^2\leq \mathcal{O}\left(\frac{\rho (f(x_0)-f(x^*))}{\tau T}\right),
    \]
    where $\rho$ is an absolute constant specified in the proof and the expectation is taken with respect to the random index $i$, as well as the randomness introduced by the \textbf{Option 2}.
\end{theorem}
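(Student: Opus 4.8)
The plan is to exploit the fact that when $k=1$ the tangent-space consensus step \eqref{tangent_space_mean} becomes trivial: the single sampled agent returns $\hat x^{(i)}$ and the server sets $x_{t+1}=\Exp_{x_t}(\Exp_{x_t}^{-1}(\hat x^{(i)}))=\hat x^{(i)}$. Hence Algorithm \ref{manifold_fedsvrg} collapses into a Riemannian SVRG scheme in which each outer round $t$ is an epoch: a snapshot $x_t$ is fixed, the full gradient $\grad f(x_t)$ is computed, a single index $i$ is drawn uniformly, and $\tau$ inner steps \eqref{local_update_fedsvrg} are taken with the variance-reduced direction $v_\ell^{(i)}=\grad f_i(x_\ell^{(i)})-P_{x_t\rightarrow x_\ell^{(i)}}(\grad f_i(x_t)-\grad f(x_t))$. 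This reduction lets me follow the nonconvex Riemannian SVRG analysis of \cite{zhang2016fast}, with the twist that a \emph{single} component function is reused across all $\tau$ inner steps of an epoch.

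The first key step is a \emph{deterministic} variance-reduction bound. Writing $h_i:=f_i-f$, the estimation error is $v_\ell^{(i)}-\grad f(x_\ell^{(i)})=\grad h_i(x_\ell^{(i)})-P_{x_t\rightarrow x_\ell^{(i)}}\grad h_i(x_t)$, so the smoothness \eqref{eq:lgsmoothness1} of the components gives $\|v_\ell^{(i)}-\grad f(x_\ell^{(i)})\|\le C\, d(x_\ell^{(i)},x_t)$ for a Lipschitz constant $C$ controlled by the $L_i$'s and $L=\sum_i L_i$. Crucially, because this holds for \emph{each fixed} $i$, reusing the same index throughout the inner loop causes no difficulty: I never need unbiasedness of $v_\ell^{(i)}$ for $\ell\ge 1$. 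I then apply the descent inequality \eqref{eq:lgsmoothness2} to consecutive inner iterates; since $\Exp_{x_\ell^{(i)}}^{-1}(x_{\ell+1}^{(i)})=-\eta v_\ell^{(i)}$, a Young's-inequality split yields
\[
f(x_{\ell+1}^{(i)})\le f(x_\ell^{(i)})-\Big(\tfrac{\eta}{2}-L\eta^2\Big)\|\grad f(x_\ell^{(i)})\|^2+\Big(\tfrac{\eta}{2}+L\eta^2\Big)C^2\, d^2(x_\ell^{(i)},x_t).
\]

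The second key step is to control the accumulated drift $d^2(x_\ell^{(i)},x_t)$ produced over the inner loop. Using $d(x_\ell^{(i)},x_t)\le\eta\sum_{j<\ell}\|v_j^{(i)}\|$ together with the curvature-dependent trigonometric distance comparison inequality supplied by Assumption \ref{assumption_regu_manifold}, I can bound $\sum_\ell d^2(x_\ell^{(i)},x_t)$ by a multiple of $\eta^2\zeta\tau\sum_\ell\|v_\ell^{(i)}\|^2$, and then by $\sum_\ell\|\grad f(x_\ell^{(i)})\|^2$ after expanding $\|v_\ell^{(i)}\|^2\le 2\|\grad f(x_\ell^{(i)})\|^2+2C^2 d^2(\cdot)$ and solving the resulting recursion. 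This is precisely where the step-size restriction $\eta\le\mathcal{O}(1/(nL\zeta^2))$ originates: it forces the coefficient of the drift term strictly below $\tfrac{\eta}{2}-L\eta^2$, so the error is absorbed and summing over $\ell=0,\dots,\tau-1$ gives a clean per-round inequality $f(x_\tau^{(i)})\le f(x_t)-\gamma\sum_{\ell=0}^{\tau-1}\|\grad f(x_\ell^{(i)})\|^2$ with an absolute constant $\gamma$ (the source of $\rho$).

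Finally, I take expectation over the index $i$ drawn in round $t$ and telescope the per-round descent across $t=0,\dots,T-1$; because the point returned to the server seeds the next round, the objective values chain and I obtain $\gamma\sum_{t}\E\big[\sum_\ell\|\grad f(x_\ell)\|^2\big]\le f(x_0)-f(x^*)$. Since the \textbf{Option 2} output draws $\Tilde x$ uniformly among the $\tau T$ inner iterates, the left-hand side equals $\gamma\,\tau T\,\E\|\grad f(\Tilde x)\|^2$, which rearranges to the stated $\E\|\grad f(\Tilde x)\|^2\le\mathcal{O}(\rho(f(x_0)-f(x^*))/(\tau T))$. I expect the main obstacle to be the drift-control step: the comparison inequality only gives a one-sided curvature bound through $\zeta$, and closing the recursion for $\sum_\ell d^2(x_\ell^{(i)},x_t)$ while keeping the dependence on $\tau$, $n$, and $\zeta$ tight — rather than leaking an extra factor of $\tau$ that would degrade the rate to $\mathcal{O}(1/T)$ — is the delicate part that fixes the precise step-size threshold.
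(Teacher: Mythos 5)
Your proposal is correct in its overall architecture and would establish a bound of the stated form, but it handles the one genuinely technical step --- controlling the drift $d^2(x_\ell^{(i)},x_t)$ accumulated over the inner loop --- by a different mechanism than the paper. The paper does not re-derive the inner-loop analysis at all: it imports Lemma \ref{lemma_temp2} (Lemma 2 of \cite{zhang2016fast}), i.e.\ the Lyapunov function $R_\ell=\E[f(x_\ell^{(i)})+c_\ell\|\Exp_{x_t}^{-1}(x_\ell^{(i)})\|^2]$ with backward-recursive coefficients $c_\ell$, and then tunes $\beta=L\zeta^{1/2}/n^{1/3}$, $\eta=1/(10Ln^{2/3}\zeta^{1/2})$ and $\tau=\lfloor 10n/3\rfloor$ so that $(1+\theta)^\tau<e$ and $\delta^{(i)}\geq\eta/2$. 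That machinery is exactly what lets the drift term enter with a single effective factor of $\tau$ and permits the larger stepsize $\eta\sim n^{-2/3}$. Your route replaces this with the deterministic component-wise bound $\|v_\ell^{(i)}-\grad f(x_\ell^{(i)})\|\leq C\,d(x_\ell^{(i)},x_t)$ plus a triangle-inequality expansion $d(x_\ell^{(i)},x_t)\leq\eta\sum_{j<\ell}\|v_j^{(i)}\|$; this is more elementary, needs no curvature comparison at all for the drift step, and (a real advantage) makes it transparent that reusing a single index $i$ across all $\tau$ inner steps causes no problem --- a point the paper glosses over when citing a lemma originally proved for freshly resampled indices. What you lose is quantitative: Cauchy--Schwarz on your expansion gives $\sum_\ell d^2(x_\ell^{(i)},x_t)\leq\tfrac{1}{2}\eta^2\tau^2\sum_\ell\|v_\ell^{(i)}\|^2$, with $\tau^2$, not the $\eta^2\zeta\tau$ you assert; as written that intermediate claim is not justified, and the honest version forces $\eta\lesssim 1/(L\tau)$, so the per-epoch descent constant $\gamma\sim\eta$ cancels the $\tau$ in the denominator and the benefit of multiple inner steps disappears into $\rho$. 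Since the theorem only promises $\E\|\grad f(\tilde x)\|^2\leq\mathcal{O}(\rho(f(x_0)-f(x^*))/(\tau T))$ with $\rho$ left to the proof, your argument still delivers the stated conclusion, but with a strictly worse $\rho$ than the paper's; if you want to match the paper's constants you need the $c_\ell$-recursion (or an equivalent weighted potential), not the raw triangle inequality.
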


Finally, we have the convergence result when the objective function of \eqref{problem_finite_sum} is geodesically convex.

\begin{theorem}[Geodesic convex]\label{thm_geodesic_convex}
    Suppose the problem \eqref{problem_finite_sum} satisfies Assumption \ref{assumption_smoothness} and \ref{assumption_regu_manifold}. Also the functions $f_i$'s are geodesically convex (see Definition \ref{assumption_geodesic_convex}) in $\mathcal{D}$ (as in Assumption \ref{assumption_regu_manifold}). If we run Algorithm \ref{manifold_fedsvrg} with \textbf{Option 1} in Line 8, $\tau_i=1$, $S_t=[n]$ (full parallel gradient), and  $\eta=\eta^{(1)}=\cdots=\eta^{(n)}\leq \frac{1}{2 L}$, then the \textbf{Option 1} of the output of Algorithm \ref{manifold_fedsvrg} satisfies:
    \begin{equation}
         f(x_T) - f^*\leq \mathcal{O}\left(\frac{L d^2(x_0,x^*)}{T}\right).
    \end{equation}
%    where $D:=d(x_0,x^*)$.
\end{theorem}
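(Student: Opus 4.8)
The plan is to first observe that under the stated parameter choices the algorithm collapses to plain Riemannian gradient descent, and then run the standard geodesically-convex analysis via a trigonometric distance comparison. First I would simplify the iteration. Since $\tau_i=1$ and each agent is initialized at $x_0^{(i)}=x_t$, the parallel transport $P_{x_t\rightarrow x_0^{(i)}}$ in the local update \eqref{local_update_fedsvrg} is the identity, so the variance-reduction correction cancels and every agent computes the same point $\Exp_{x_t}(-\eta\,\grad f(x_t))$. With $S_t=[n]$, the tangent-space consensus \eqref{tangent_space_mean} then averages identical tangent vectors, giving the exact recursion
$$x_{t+1}=\Exp_{x_t}\!\left(-\eta\,\grad f(x_t)\right).$$
This reduces the theorem to the convergence of Riemannian gradient descent for geodesically convex $f$.

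Next I would derive a one-step distance recursion toward $x^*$. Applying the $L$-smoothness inequality \eqref{eq:lgsmoothness2} with $y=x_{t+1}$, $x=x_t$ and $\Exp_{x_t}^{-1}(x_{t+1})=-\eta\,\grad f(x_t)$ yields the descent estimate $f(x_{t+1})\leq f(x_t)-\eta(1-\tfrac{L\eta}{2})\|\grad f(x_t)\|^2$, so that $\eta\leq\tfrac{1}{2L}$ gives monotone decrease with $f(x_{t+1})\leq f(x_t)-\tfrac{3\eta}{4}\|\grad f(x_t)\|^2$; this also keeps the iterates in $\mathcal{D}$. The geometric heart is the trigonometric distance bound on the geodesic triangle with vertices $x_t$, $x_{t+1}$, $x^*$: using the curvature constant $\zeta$ from Assumption \ref{assumption_regu_manifold} and identifying the cross term with $\langle\Exp_{x_t}^{-1}(x_{t+1}),\Exp_{x_t}^{-1}(x^*)\rangle_{x_t}=-\eta\langle\grad f(x_t),\Exp_{x_t}^{-1}(x^*)\rangle_{x_t}$, I obtain
$$d^2(x_{t+1},x^*)\leq d^2(x_t,x^*)+\zeta\eta^2\|\grad f(x_t)\|^2+2\eta\langle\grad f(x_t),\Exp_{x_t}^{-1}(x^*)\rangle_{x_t}.$$
Inserting geodesic convexity (Definition \ref{assumption_geodesic_convex}) in the form $\langle\grad f(x_t),\Exp_{x_t}^{-1}(x^*)\rangle_{x_t}\leq f^*-f(x_t)$ turns this into
$$2\eta\bigl(f(x_t)-f^*\bigr)\leq d^2(x_t,x^*)-d^2(x_{t+1},x^*)+\zeta\eta^2\|\grad f(x_t)\|^2.$$

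Finally I would telescope. The residual curvature term is controlled by the descent estimate, since $\zeta\eta^2\|\grad f(x_t)\|^2\leq\tfrac{4\zeta\eta}{3}\bigl(f(x_t)-f(x_{t+1})\bigr)$; summing the displayed inequality over $t=0,\dots,T-1$ makes the distance terms telescope and collapses the function-gap sum to $f(x_0)-f(x_T)\leq f(x_0)-f^*$. Using the monotonicity of $\{f(x_t)\}$ to lower-bound the left side by $2\eta T(f(x_T)-f^*)$, and bounding $f(x_0)-f^*\leq\tfrac{L}{2}d^2(x_0,x^*)$ via \eqref{eq:lgsmoothness2} at $x=x^*$ (where $\grad f(x^*)=0$), gives
$$f(x_T)-f^*\leq\frac{d^2(x_0,x^*)}{2\eta T}+\frac{2\zeta}{3T}\bigl(f(x_0)-f^*\bigr)=\mathcal{O}\!\left(\frac{L\,d^2(x_0,x^*)}{T}\right),$$
with the hidden constant depending on $\zeta$. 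The main obstacle is the second step: correctly invoking the curvature-dependent trigonometric comparison (which is where $\zeta$ and the bounded-diameter part of Assumption \ref{assumption_regu_manifold} enter) and then ensuring the leftover $\zeta\eta^2\|\grad f(x_t)\|^2$ term is absorbed by the descent lemma, so that the telescoped bound stays $\mathcal{O}(1/T)$ rather than degrading.
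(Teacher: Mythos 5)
Your proposal is correct and follows essentially the same route as the paper: reduce the iteration to Riemannian gradient descent, invoke the Zhang--Sra trigonometric distance comparison (the paper's Lemma \ref{lemma_temp3}) together with geodesic convexity, and use the descent estimate from $L$-smoothness to absorb the curvature residual $\zeta\eta^2\|\grad f(x_t)\|^2$ before telescoping. The only difference is bookkeeping --- you substitute $\|\grad f(x_t)\|^2\leq \tfrac{4}{3\eta}\bigl(f(x_t)-f(x_{t+1})\bigr)$ directly so the residual telescopes, whereas the paper adds $\zeta$ times the descent inequality to form the recursion $\zeta\Delta_{t+1}-(\zeta-1)\Delta_t\leq\cdots$ --- and both yield the same $\mathcal{O}(L\,d^2(x_0,x^*)/T)$ rate with a $\zeta$-dependent constant.
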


\section{Numerical experiments}
We now show the performance of RFedSVRG and compare it with two natural ideas for solving \eqref{finite_sum}: Riemannian FedAvg (\texttt{RFedAvg}) and Riemannian FedProx (\texttt{RFedProx}), which are natural extensions of FedAvg \cite{mcmahan2017communication} and FedProx \cite{li2020federated} to the Riemannian setting. Algorithms \texttt{RFedAvg} and \texttt{RFedProx} are descried in Algorithm \ref{manifold_fedavg} and Algorithm \ref{manifold_fedprox} in the supplementary material. We conducted our experiments on a desktop with Intel Core 9600K CPU, 32GB RAM and NVIDIA GeForce RTX 2070 GPU. For the codes of operations on Riemannian manifolds we used the ones from the \texttt{Manopt} and \texttt{PyManopt} packages~\cite{manopt,pymanopt}. Since the logarithm mapping (the inverse of the exponential mapping) on the Stiefel manifold is not easy to compute \cite{zimmermann2021computing}, we adopted the projection-like retraction~\cite{absil2012projection} and the inverse of it~\cite{kaneko2012empirical} to approximate the exponential and the logarithm mappings, respectively. %The proposed algorithm is still numerically efficient even with such approximations.

We tested the three algorithms on PCA \eqref{problem_PCA} and kPCA \eqref{problem_kPCA} problems. For both problems, we measure the norm of the global Riemannian gradients. Additionally, we also measure the sum of principal angles \cite{knyazev2012principal} for kPCA. \footnote{For the loss $f$ in \eqref{problem_kPCA}, note that $f(X)=f(XQ)$ for any orthogonal matrix $Q\in\RR^{r\times r}$. As a result, the optimal solution of $f(X)$ only represents the eigen-space corresponds to the $r$-largest eigenvalues. Therefore we need the principal angles to measure the angles between the subspaces.}

%We measure the norm of the global Riemannian gradients and (in kPCA) the sum of principal angles~\cite{knyazev2012principal}

%We compare the proposed Algorithm \ref{manifold_fedsvrg} with \textcolor{blue}{two benchmark algorithms:} the Riemannian generalization of the federated average~\cite{mcmahan2017communication} and federated proxy~\cite{li2020federated}, called \texttt{RFedAvg} (Algorithm \ref{manifold_fedavg}) and \texttt{RFedProx} (Algorithm \ref{manifold_fedprox}). Details of the algorithms are presented in the supplementary material~\ref{appendix_fedprox}.

\subsection{Comparison of the two consensus methods \eqref{karcher_mean} and \eqref{tangent_space_mean}}

We first compare the two consensus methods \eqref{karcher_mean} and \eqref{tangent_space_mean}. To this end, we randomly generate $x_t$ and $k=100$ points $x^{(i)}$ on the unit ball $\mathcal{S}^{d-1}$ with different dimensions $d$. We then compare the distances $\frac{1}{k}\sum_i d^2(x_{t}, x^{(i)})$, $\frac{1}{k}\sum_i d^2(x_{t+1}, x^{(i)})$ and $d^2(x_t, x_{t+1})$, as well as the CPU time for computing them. Note that the smaller these distances are, the better. To calculate the Karcher mean, we run the Riemannian gradient descent method starting at $x_t$ until the norm of the Riemannian gradient is smaller than $\epsilon=10^{-6}$. The results are shown in Table \ref{table:consensus}. %Note that our implementation of the Karcher mean algorithm may not be able to decrease $h(x):=\frac{1}{k}\sum_i d^2(x^{(i)}, x)$ efficiently since it may converge to a local stationary point that has higher loss value. The efficient implementation of Karcher mean is beyond the scope of this paper, nevertheless we can see 
From Table \ref{table:consensus} we see that the tangent space mean \eqref{tangent_space_mean} is indeed better than Karcher mean \eqref{karcher_mean} in terms of both quality and CPU time. %a good substitute with preferable numerical performance.

\begin{table}[t]
% \vskip -0.5in
\begin{center}
\caption{Comparison of the two consensus methods \eqref{karcher_mean} and \eqref{tangent_space_mean}. Here $h(x):=\frac{1}{k}\sum_i d^2(x^{(i)},x)$, CPU time is in seconds and the experiments are repeated and averaged over 10 times.}\label{table:consensus}
\begin{small}
\begin{tabular}{c|c|c|c|c|c|c|c}
\hline
\multirow{2}{*}{Dim $d$} & \multirow{2}{*}{$h(x_t)$} & \multicolumn{3}{c}{Karcher mean \eqref{karcher_mean}} & \multicolumn{3}{|c}{Tangent space mean \eqref{tangent_space_mean}} \\
\cline{3-8}
& & $d^2(x_{t+1}, x_t)$ & $h(x_{t+1})$ & Time & $d^2(x_{t+1}, x_t)$ & $h(x_{t+1})$ & Time \\
\hline
100 & 2.478 & 2.469 & 2.813 & 0.706 & 0.025 & 2.427 & 0.004 \\
\hline
200 & 2.472 & 2.484 & 2.804 & 0.641 & 0.025 & 2.422 & 0.004 \\
\hline
500 & 2.469 & 2.469 & 2.795 & 0.725 & 0.024 & 2.421 & 0.005 \\
\hline
\end{tabular}
\end{small}
\end{center}
% \vskip -0.5in
\end{table}

\subsection{Experiments for PCA and kPCA on synthetic data}

In this section, we report the results of the three algorithms for solving PCA \eqref{problem_PCA} and kPCA \eqref{problem_kPCA} on synthetic data. We first generate the data $X_i\in\RR^{d\times p}$ whose entries are drawn from standard normal distribution. We then set $A_i:=X_i X_i^\top$. Notice that under this experiment setting the data in different agents are homogeneous in distribution, which provides a mild environment for comparing the behavior of the proposed algorithms. We test highly heterogeneous real data later.

\paragraph{Experiments on PCA.} We first test the three algorithms on the standard PCA problem \eqref{problem_PCA}. We test our codes with different numbers of agents $n$ and set $k=n/10$ as the number of clients we pick up for each round. We terminate the algorithms if the number of rounds of communication exceeds 600. We sample $10000$ data points in $\RR^{100}$ and partition them into $n$ agents, each of which contains equal number of data. We test \texttt{RFedSVRG} with one iteration for each local agents, i.e. $\tau_i=1$ and test \texttt{RFedAvg} and \texttt{RFedProx} with $\tau_i=5$ iterations in \eqref{temp6}. We use the constant stepsizes for all three algorithms, and take $\mu=n/10$ for each choice of $n$. The results are presented in Figure \ref{fig:pca_changing_nk_norm}, from which we see that only \texttt{RFedSVRG} can efficiently decrease $\|\grad f(x_t)\|$ to an acceptable level. %Another observation that is not shown from the figure is that the algorithm runs much faster with smaller $k$, while still maintaining similar convergence results.

\begin{figure}[t!]
    \begin{center}
    \setcounter{subfigure}{0}
    \subfigure[$n=500$]{\includegraphics[width=0.32\textwidth]{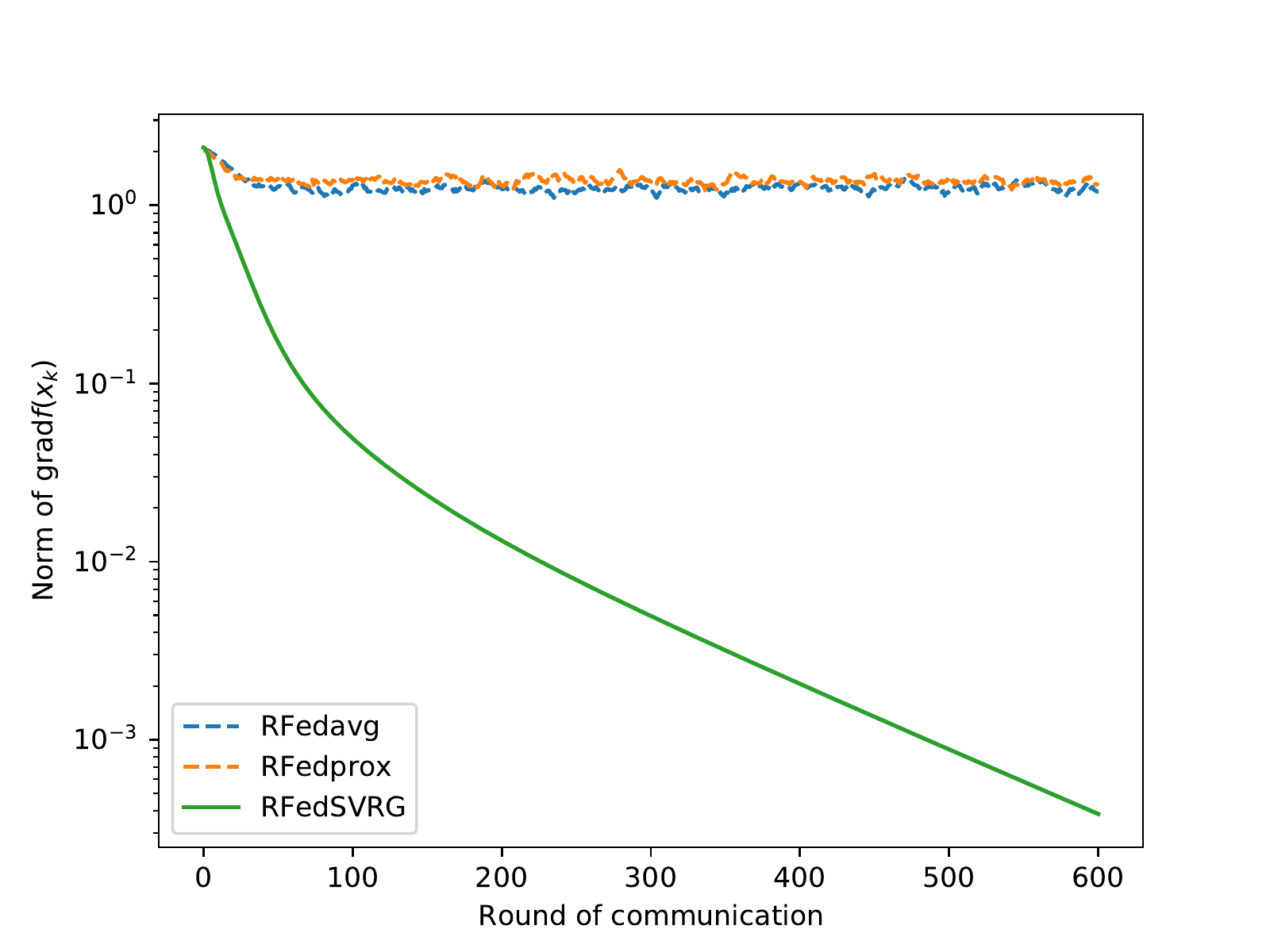}}
    \subfigure[$n=1000$]{\includegraphics[width=0.32\textwidth]{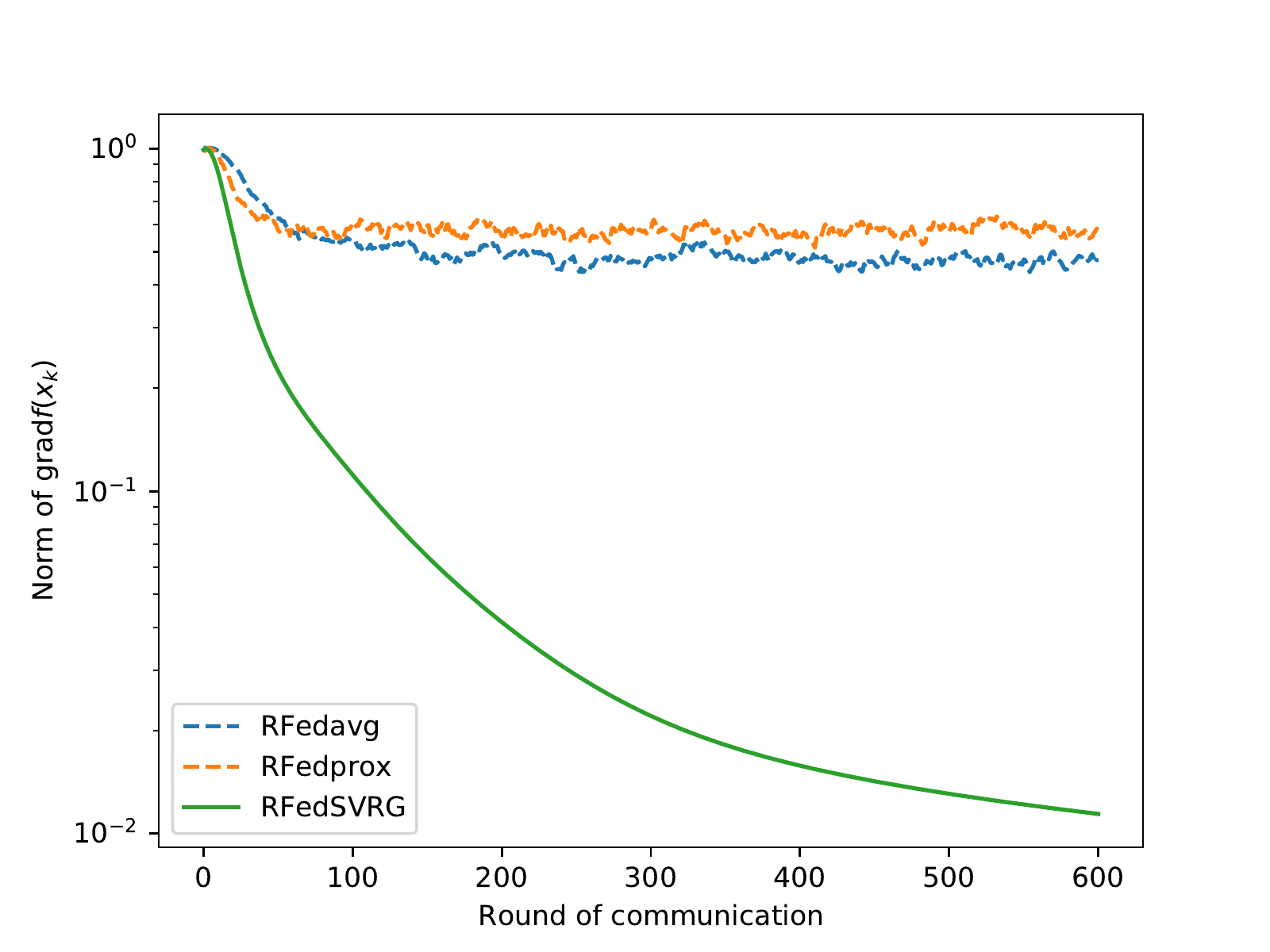}}
    \subfigure[$n=2500$]{\includegraphics[width=0.32\textwidth]{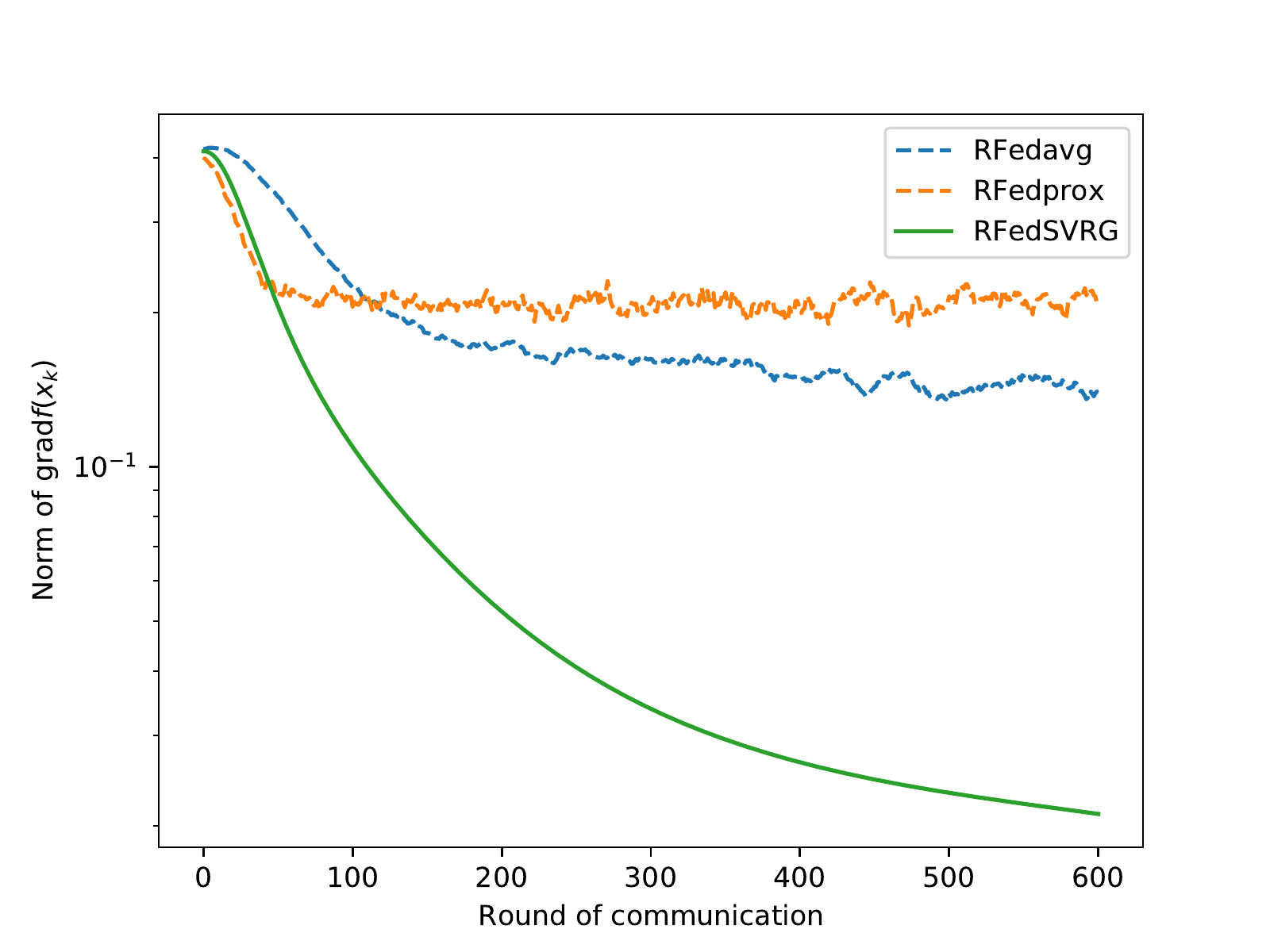}}
    \caption{Results for PCA \eqref{problem_PCA}. The y-axis denotes $\|\grad f(x_t)\|$. For each figure, the experiments are repeated and averaged over 10 times.}
    \label{fig:pca_changing_nk_norm}
    \end{center}
    % \vskip -0.5in
\end{figure}

\paragraph{Experiments on kPCA.} We now test the three algorithms on the kPCA problem \eqref{problem_kPCA}. In the first experiment we sample $10000$ data points in $\RR^{200}$ and partition them into $n$ agents, each of which contains equal number of data. We test our codes with different number of agents $n$, and again set $k=n/10$. Here we take $(d, r)=(200, 5)$. The results are given in Figure \ref{fig:kpca_changing_nk}, where we see that \texttt{RFedSVRG} can efficiently decrease $\|\grad f(x_t)\|$ and the principal angle in all tested cases.

In the second experiment we test the effect of the number of inner loops $\tau_i$.
We generate $10000$ standard Gaussian vectors. We set $(d,r) = (200,5)$, $k=10$ and $n=100$ so that $p=100$. We choose $\tau=[1, 10, 50, 100]$ for the inner steps for all three algorithms. The results are presented in Figure \ref{fig:kpca_changing_tau}. From this figure we again observe the great performance of \texttt{RFedSVRG}.

\begin{figure}[t!]
    \begin{center}
    % \subfigure{\includegraphics[width=0.245\textwidth]{changing_nk/kpca_sphere_function_val_50_5.pdf}}
    % \subfigure{\includegraphics[width=0.245\textwidth]{changing_nk/kpca_sphere_function_val_100_10.pdf}}
    % \subfigure{\includegraphics[width=0.245\textwidth]{changing_nk/kpca_sphere_function_val_500_50.pdf}}
    % \subfigure{\includegraphics[width=0.245\textwidth]{changing_nk/kpca_sphere_function_val_1000_100.pdf}}
    \subfigure{\includegraphics[width=0.23\textwidth]{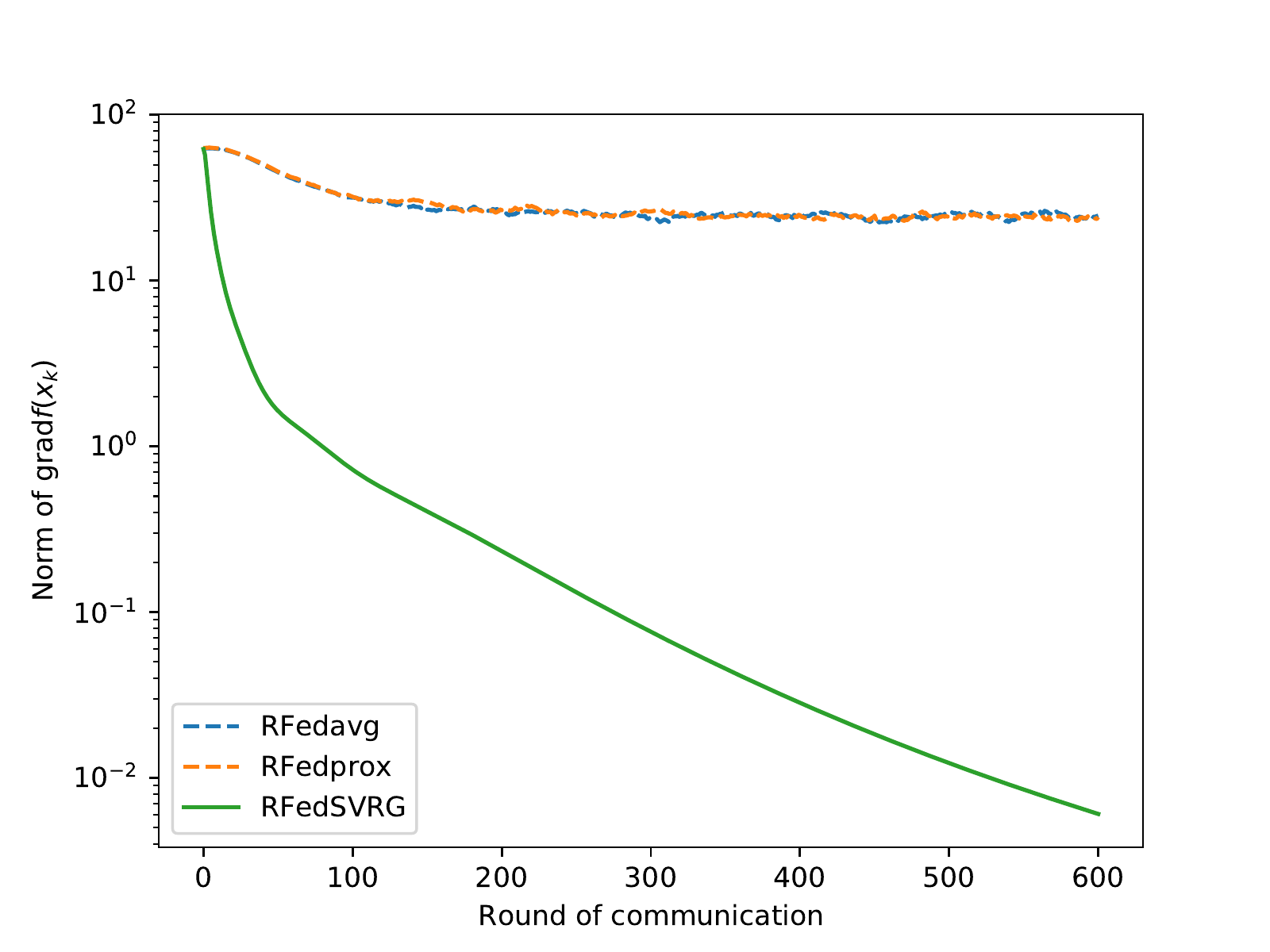}}
    \subfigure{\includegraphics[width=0.23\textwidth]{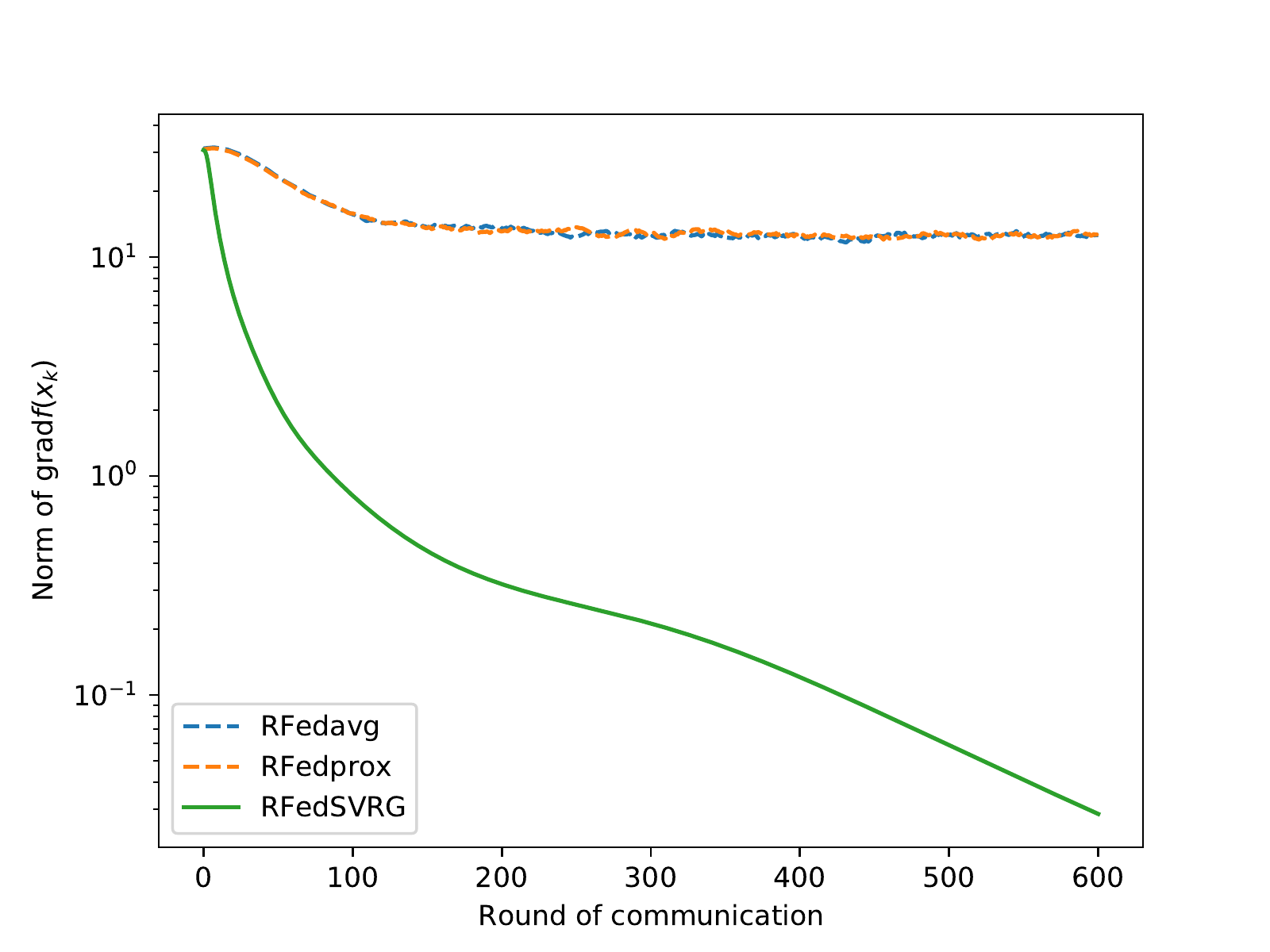}}
    \subfigure{\includegraphics[width=0.23\textwidth]{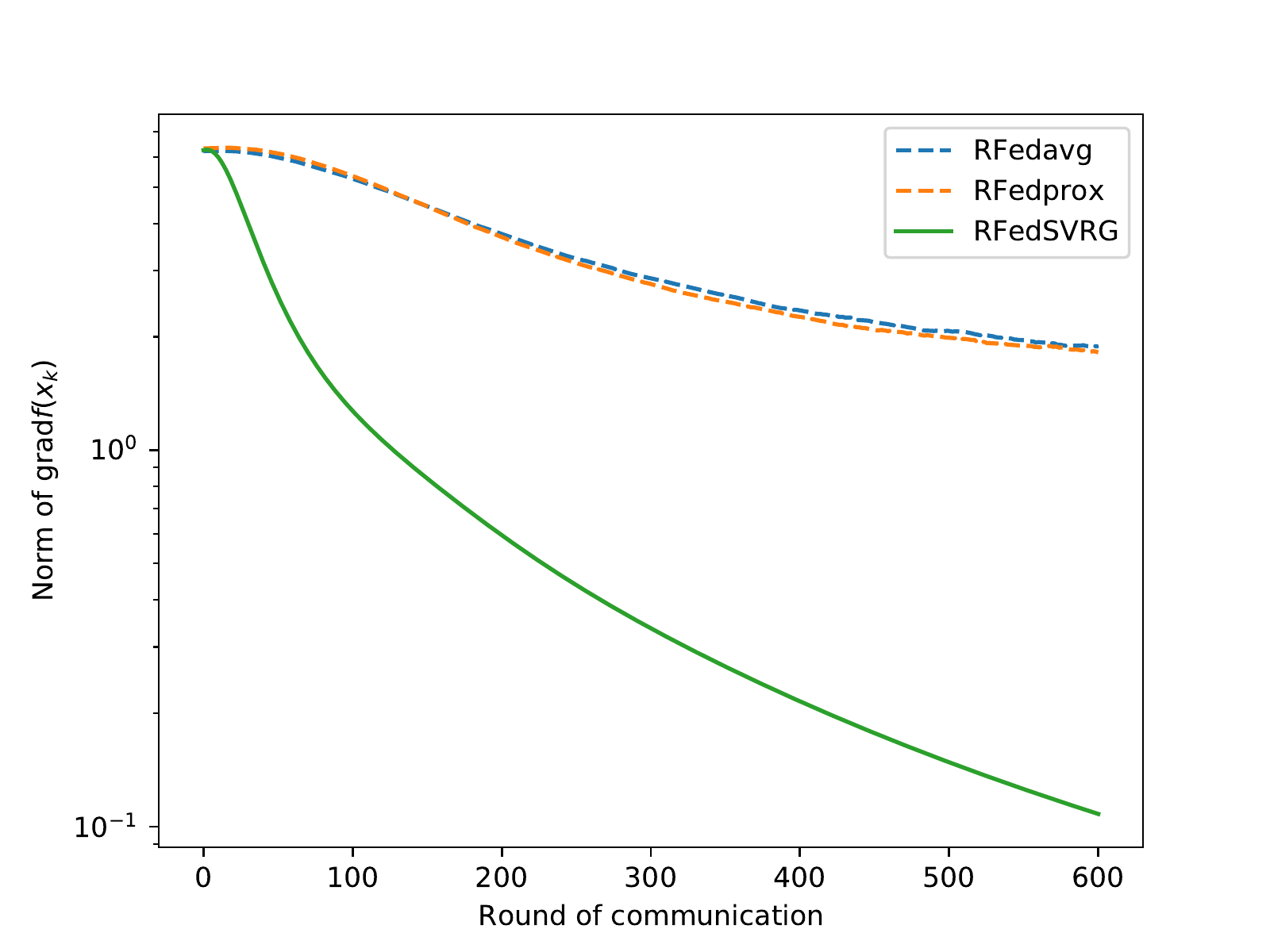}}
    \subfigure{\includegraphics[width=0.23\textwidth]{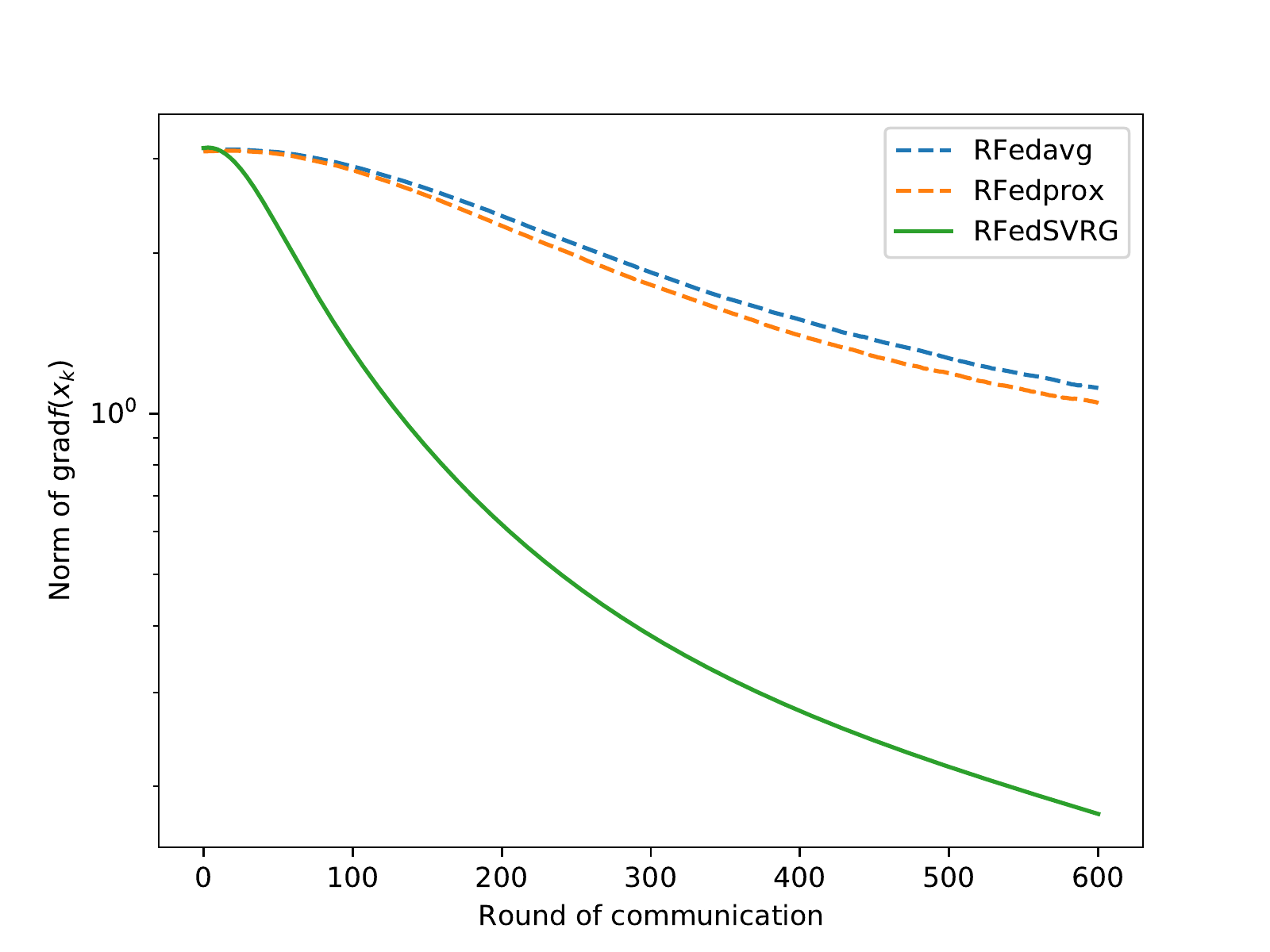}}
    
    \setcounter{subfigure}{0}
    \subfigure[$(n, k)=(50, 5)$]{\includegraphics[width=0.23\textwidth]{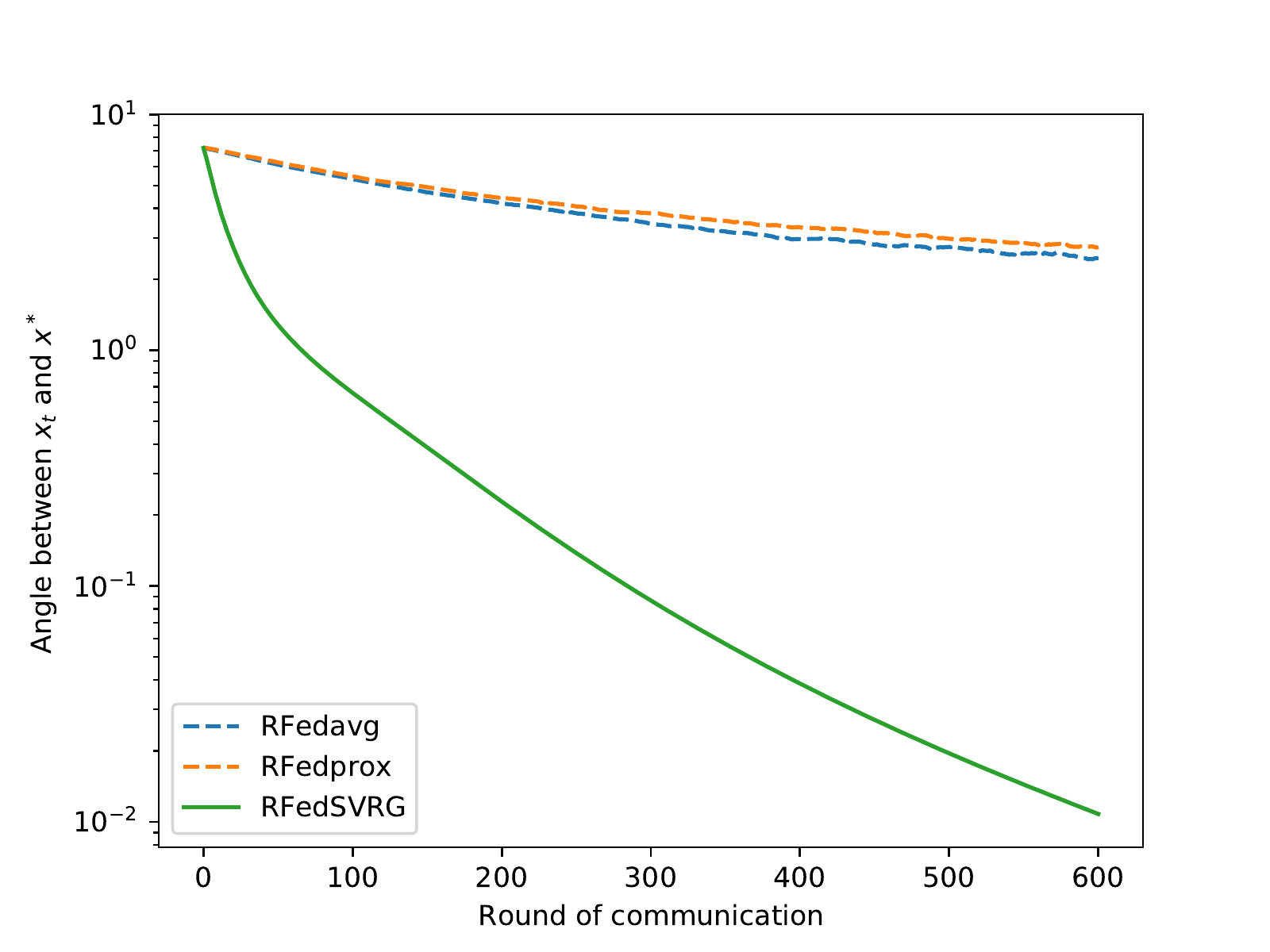}}
    \subfigure[$(n, k)=(100, 10)$]{\includegraphics[width=0.23\textwidth]{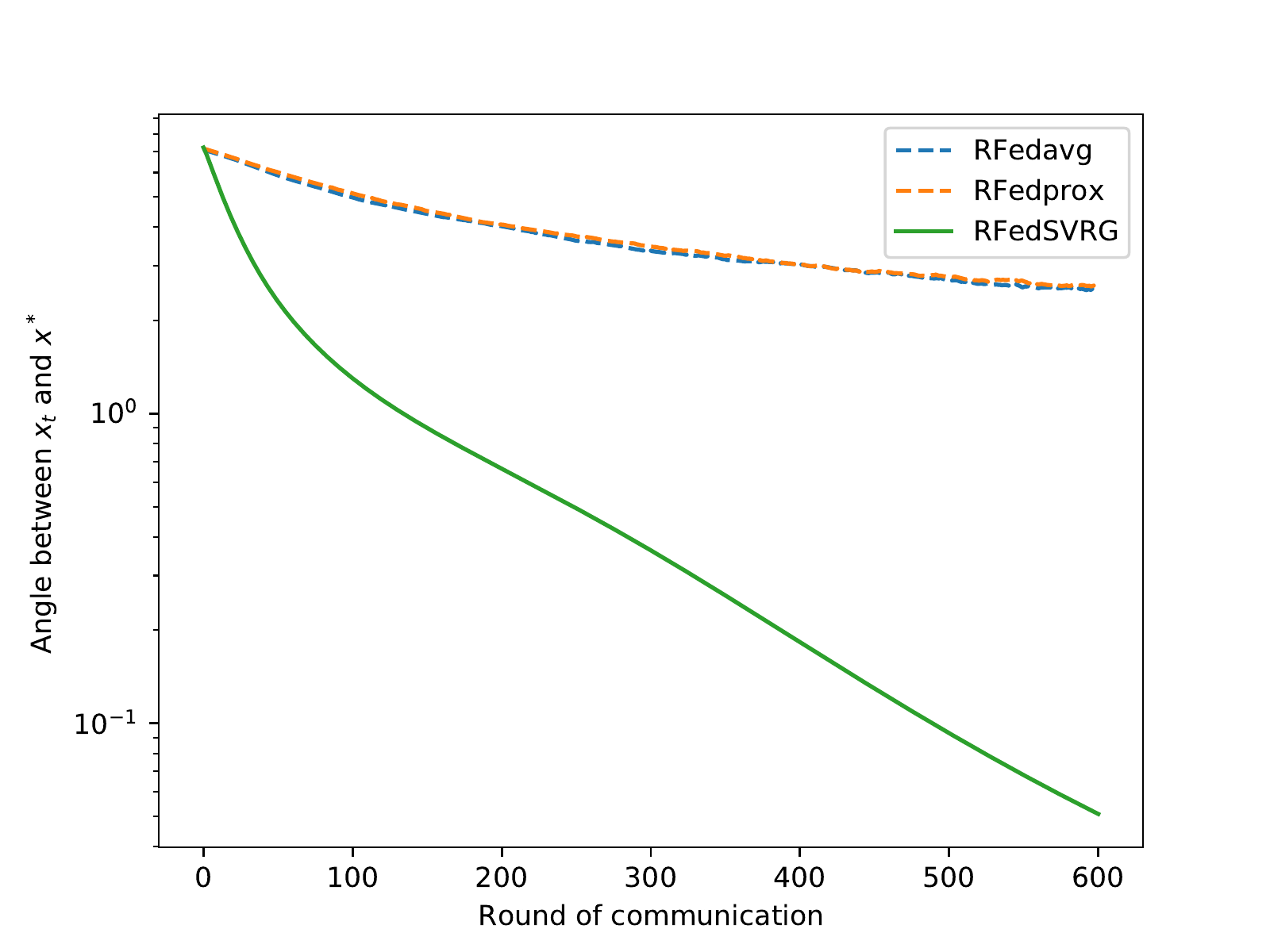}}
    \subfigure[$(n, k)=(500, 50)$]{\includegraphics[width=0.23\textwidth]{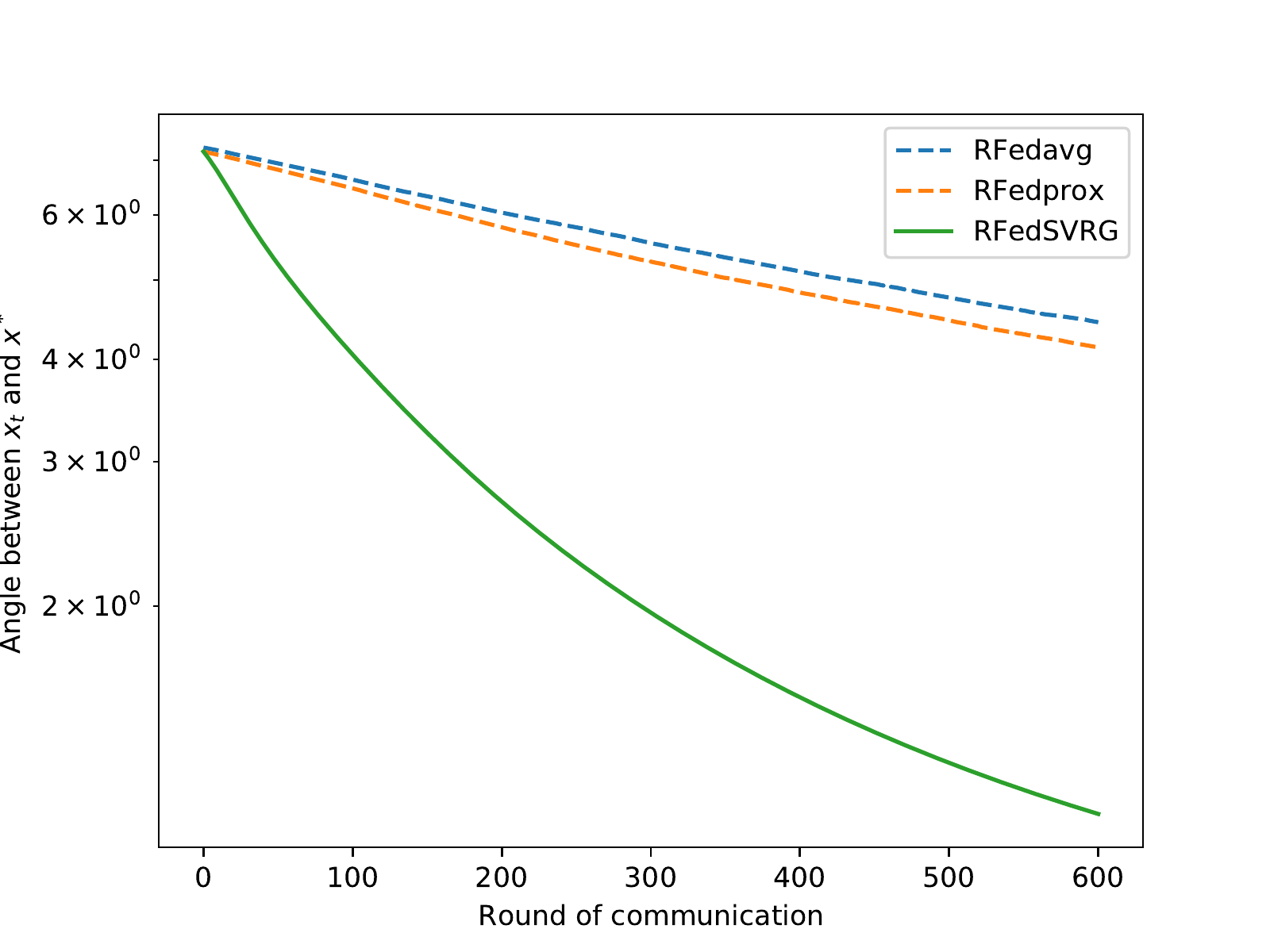}}
    \subfigure[$(n, k)=(1000, 100)$]{\includegraphics[width=0.23\textwidth]{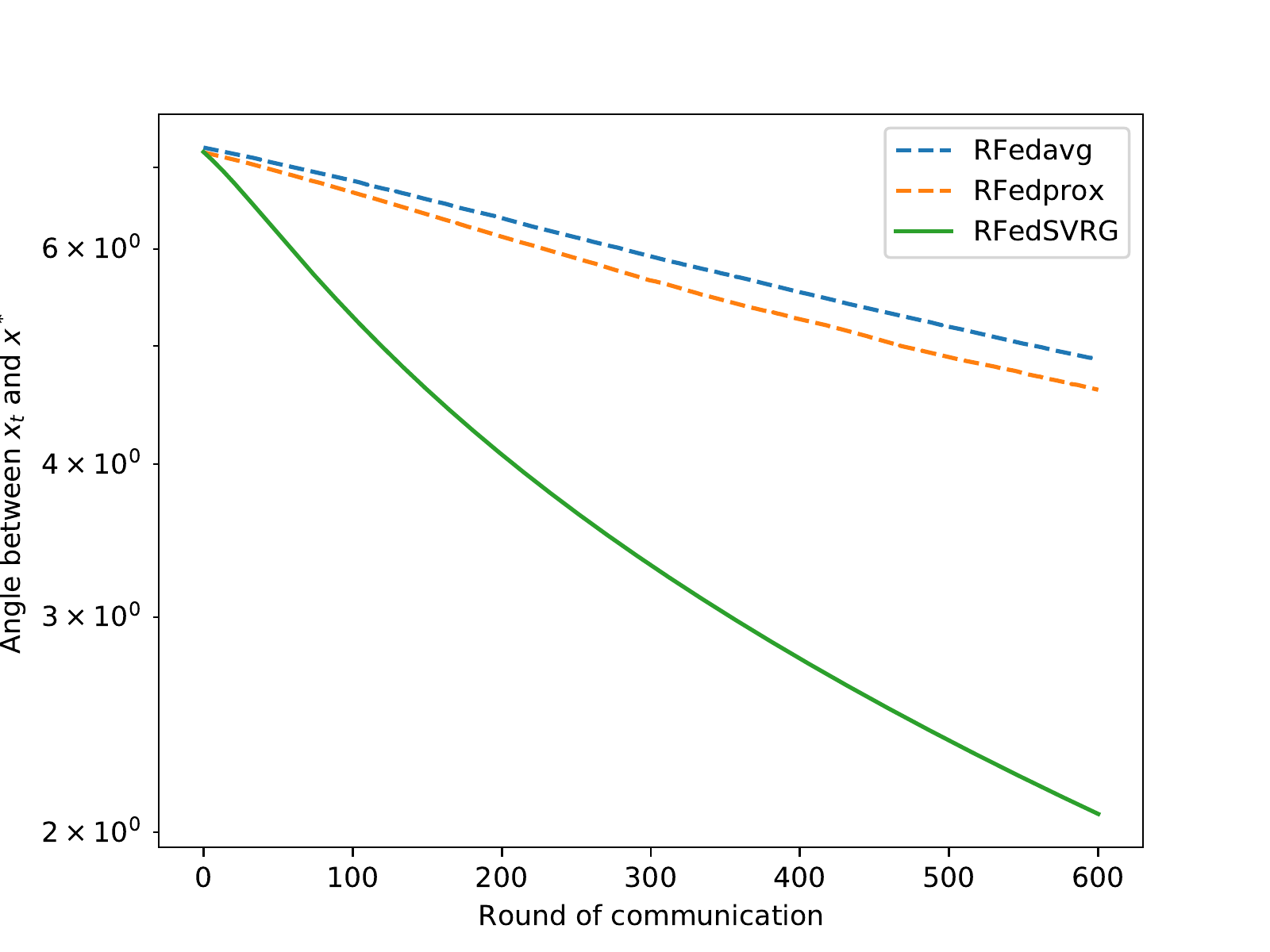}}
    \caption{Results for kPCA. The y-axis of the figures in the first row denotes $\|\grad f(x_t)\|$, and the y-axis of the figures in the second row denotes the principal angle between $x_t$ and $x^*$. The experiments are repeated and averaged over 10 times.}
    \label{fig:kpca_changing_nk}
    \end{center}
    % \vskip -0.5in
\end{figure}

\begin{figure}[t!]
    \begin{center}
    % \subfigure{\includegraphics[width=0.245\textwidth]{changing_tau/kpca_sphere_function_val_tau_1.pdf}}
    % \subfigure{\includegraphics[width=0.245\textwidth]{changing_tau/kpca_sphere_function_val_tau_10.pdf}}
    % \subfigure{\includegraphics[width=0.245\textwidth]{changing_tau/kpca_sphere_function_val_tau_50.pdf}}
    % \subfigure{\includegraphics[width=0.245\textwidth]{changing_tau/kpca_sphere_function_val_tau_100.pdf}}
    
    \subfigure{\includegraphics[width=0.23\textwidth]{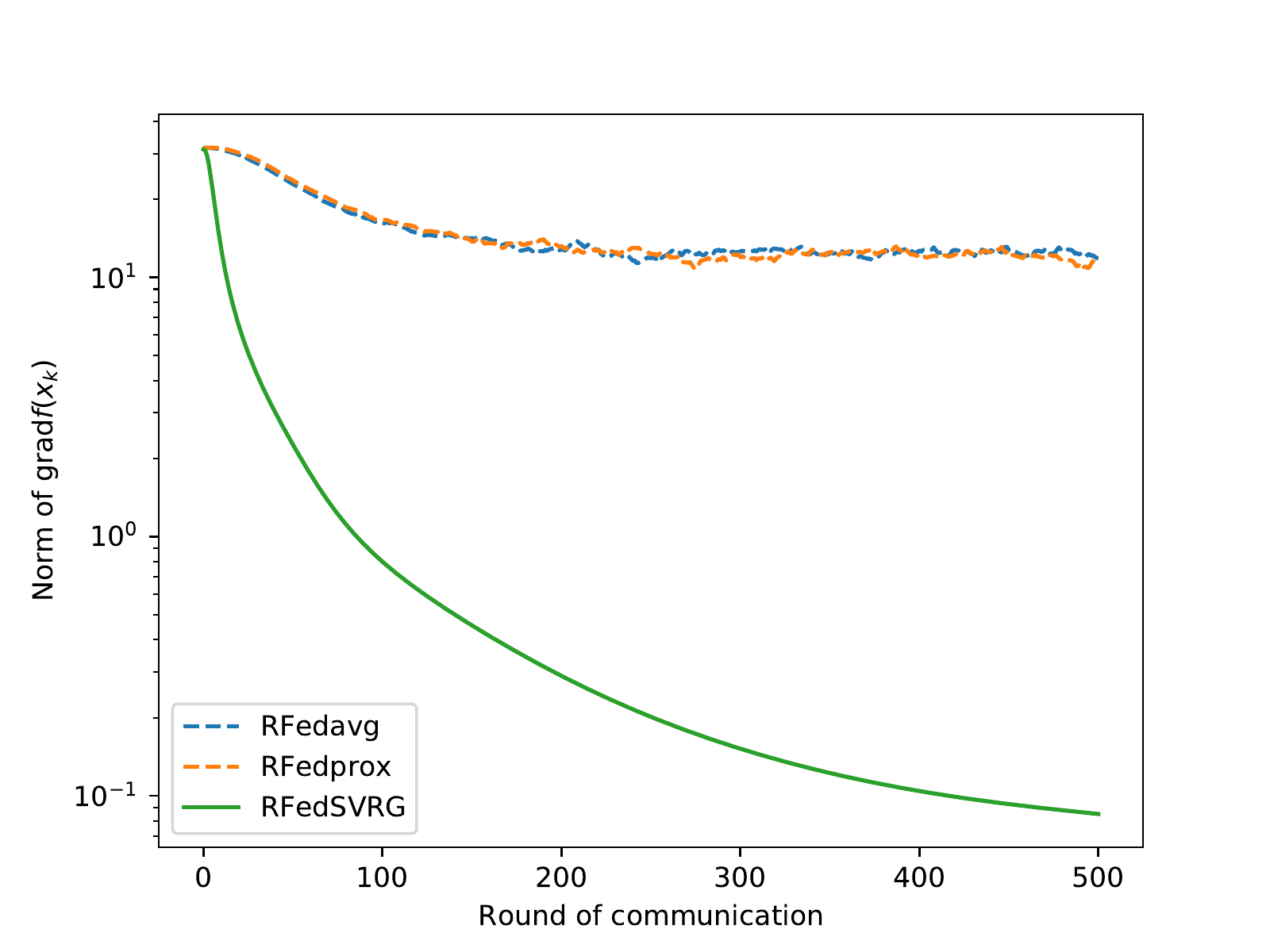}}
    \subfigure{\includegraphics[width=0.23\textwidth]{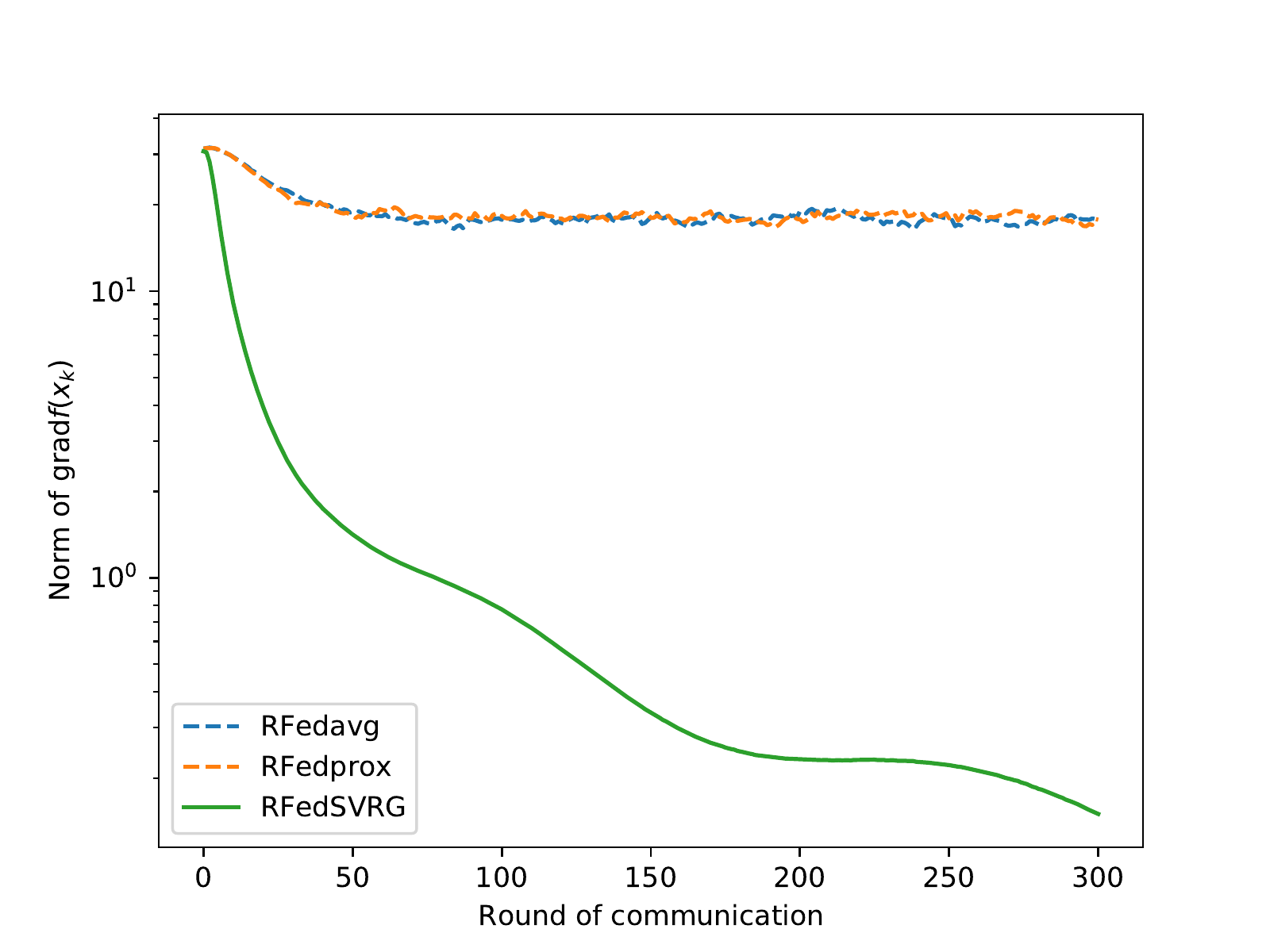}}
    \subfigure{\includegraphics[width=0.23\textwidth]{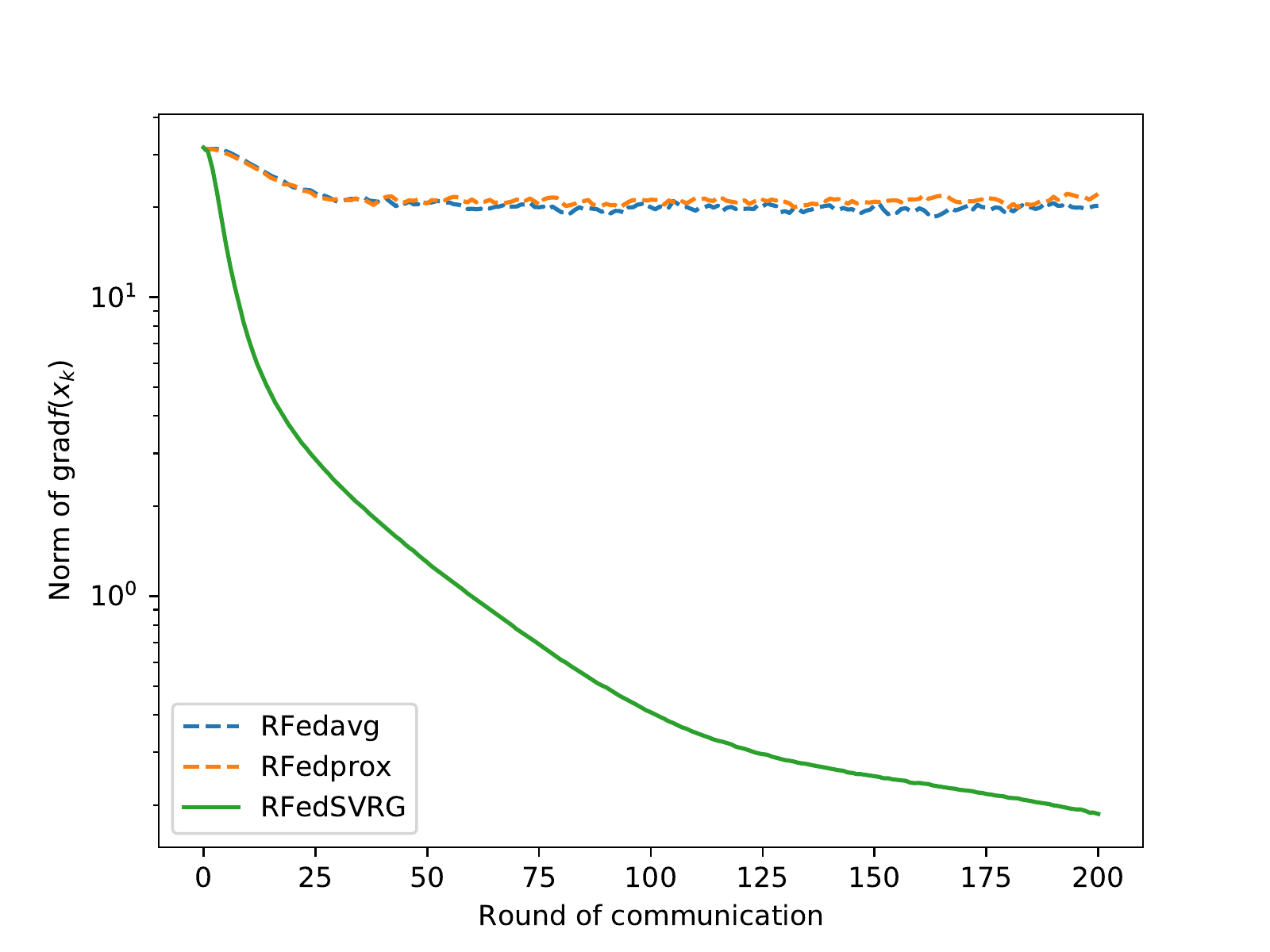}}
    \subfigure{\includegraphics[width=0.23\textwidth]{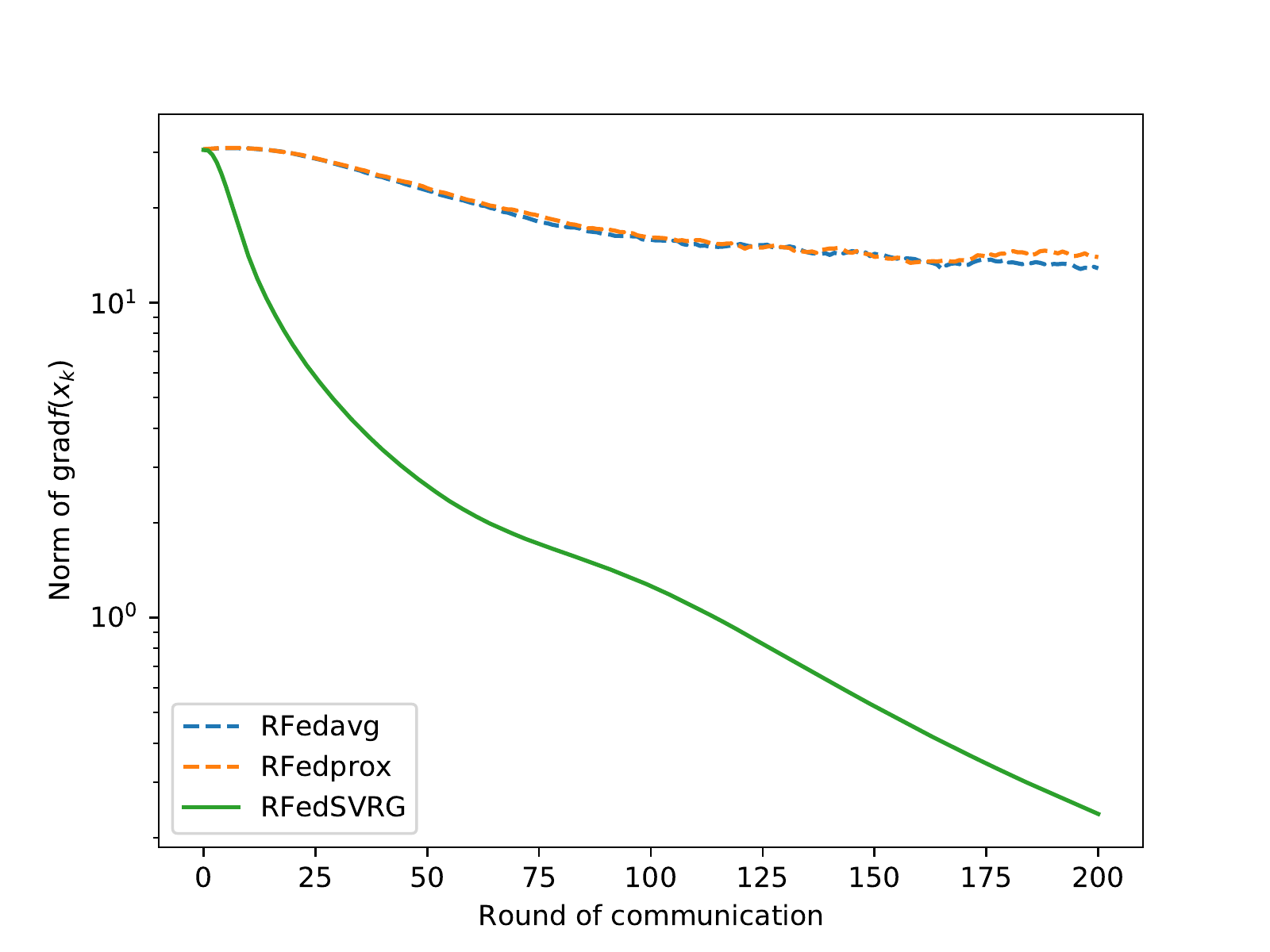}}
    
    \setcounter{subfigure}{0}
    \subfigure[$\tau=1$]{\includegraphics[width=0.23\textwidth]{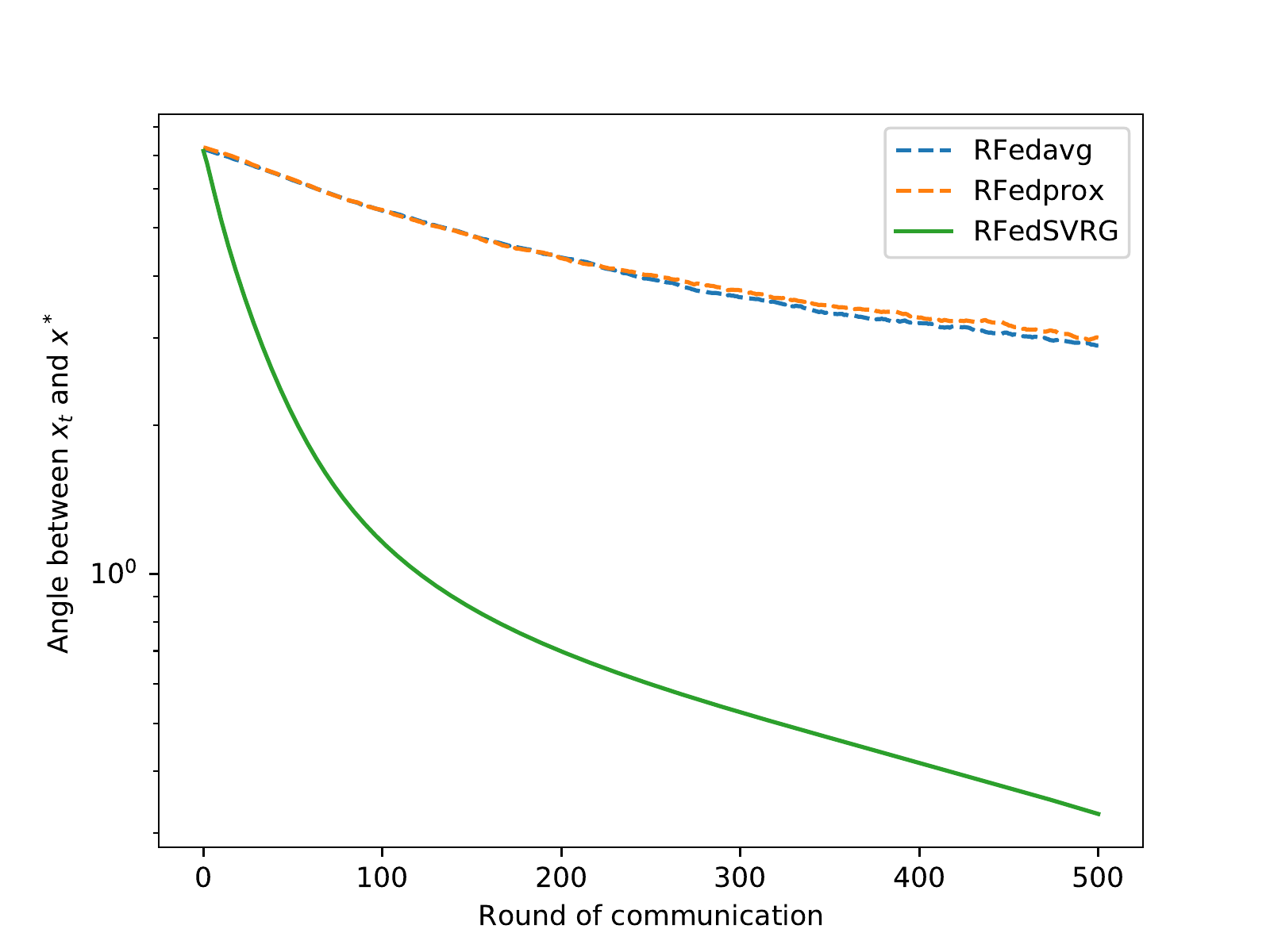}}
    \subfigure[$\tau=10$]{\includegraphics[width=0.23\textwidth]{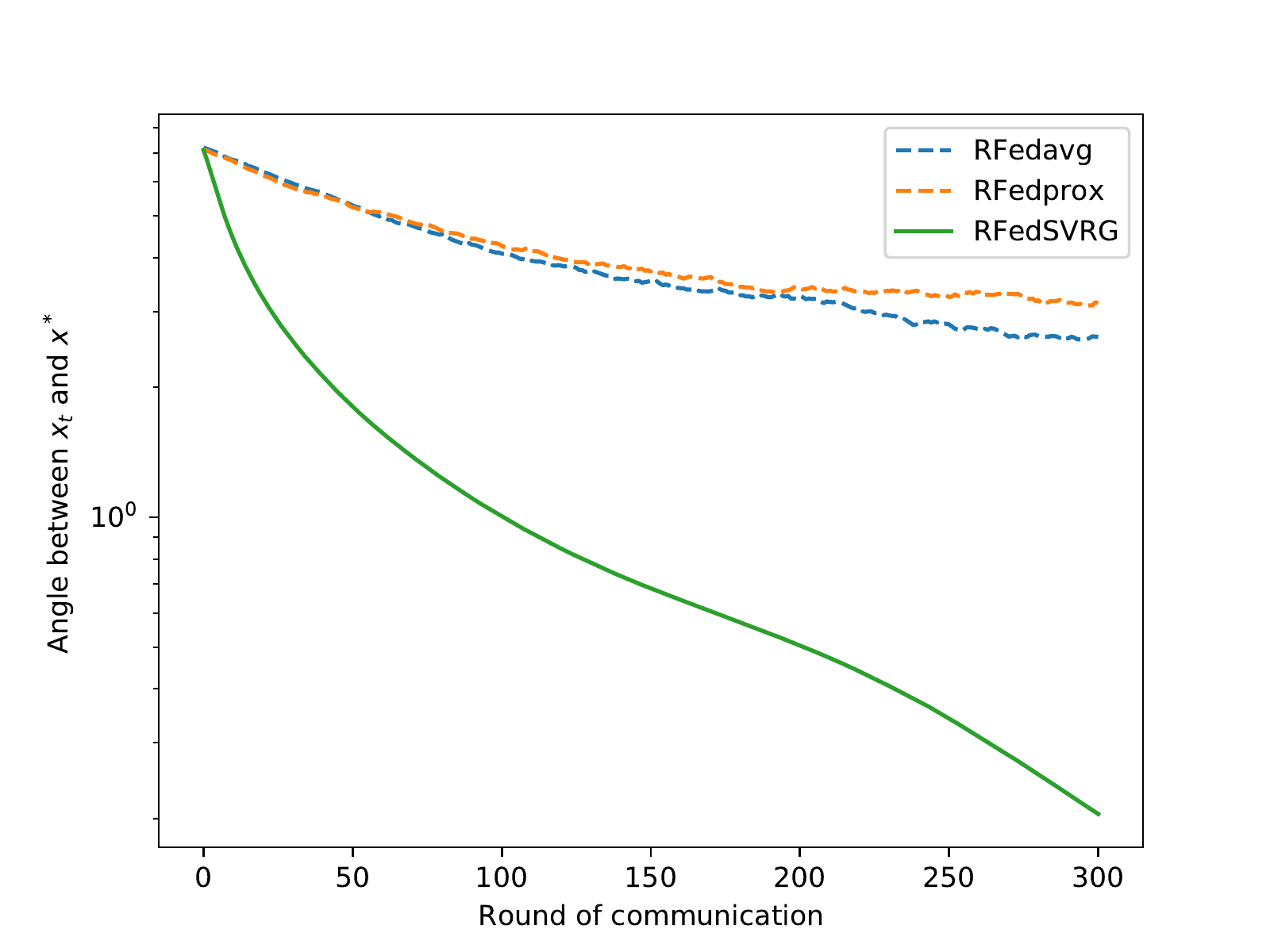}}
    \subfigure[$\tau=50$]{\includegraphics[width=0.23\textwidth]{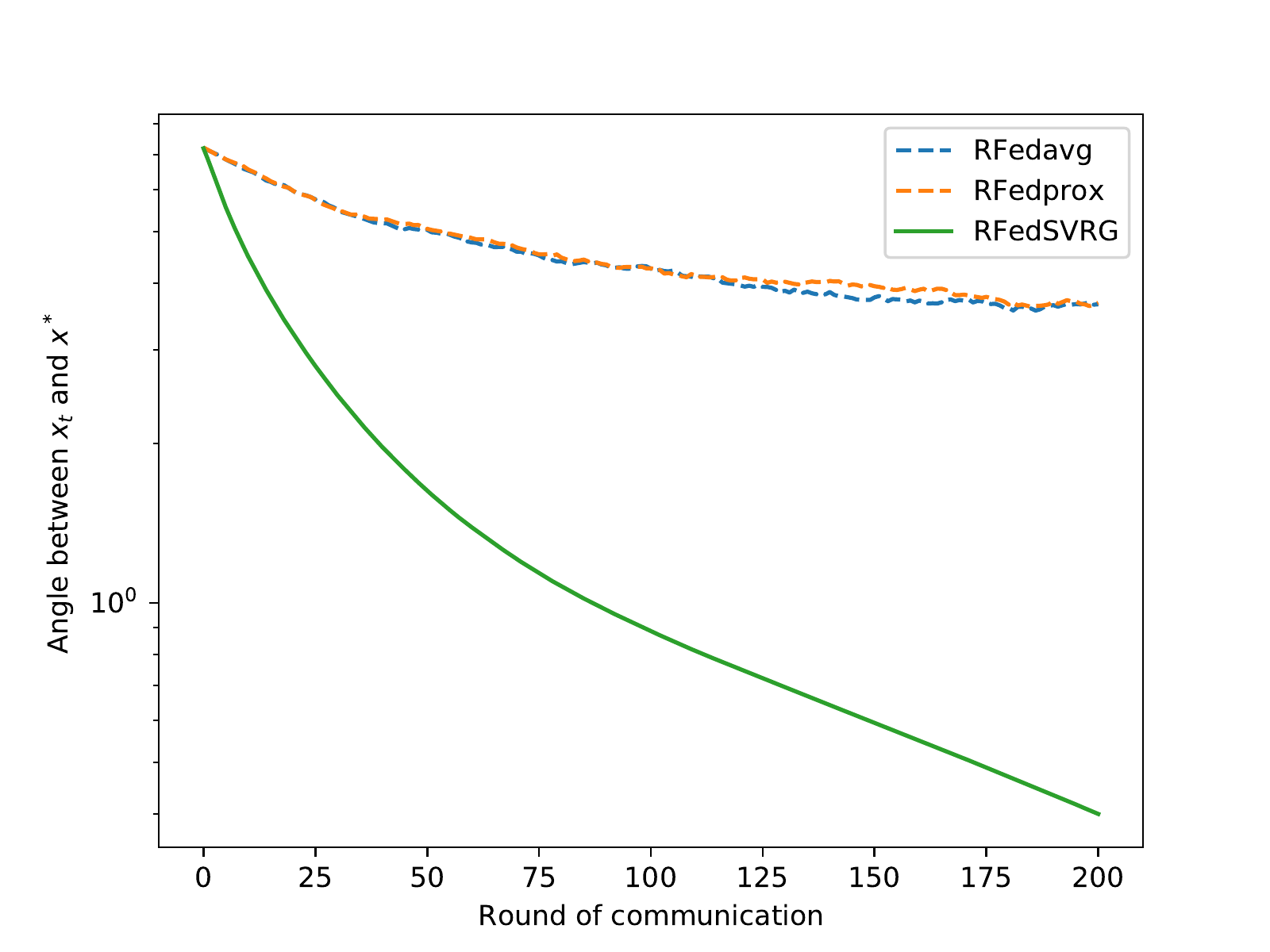}}
    \subfigure[$\tau=100$]{\includegraphics[width=0.23\textwidth]{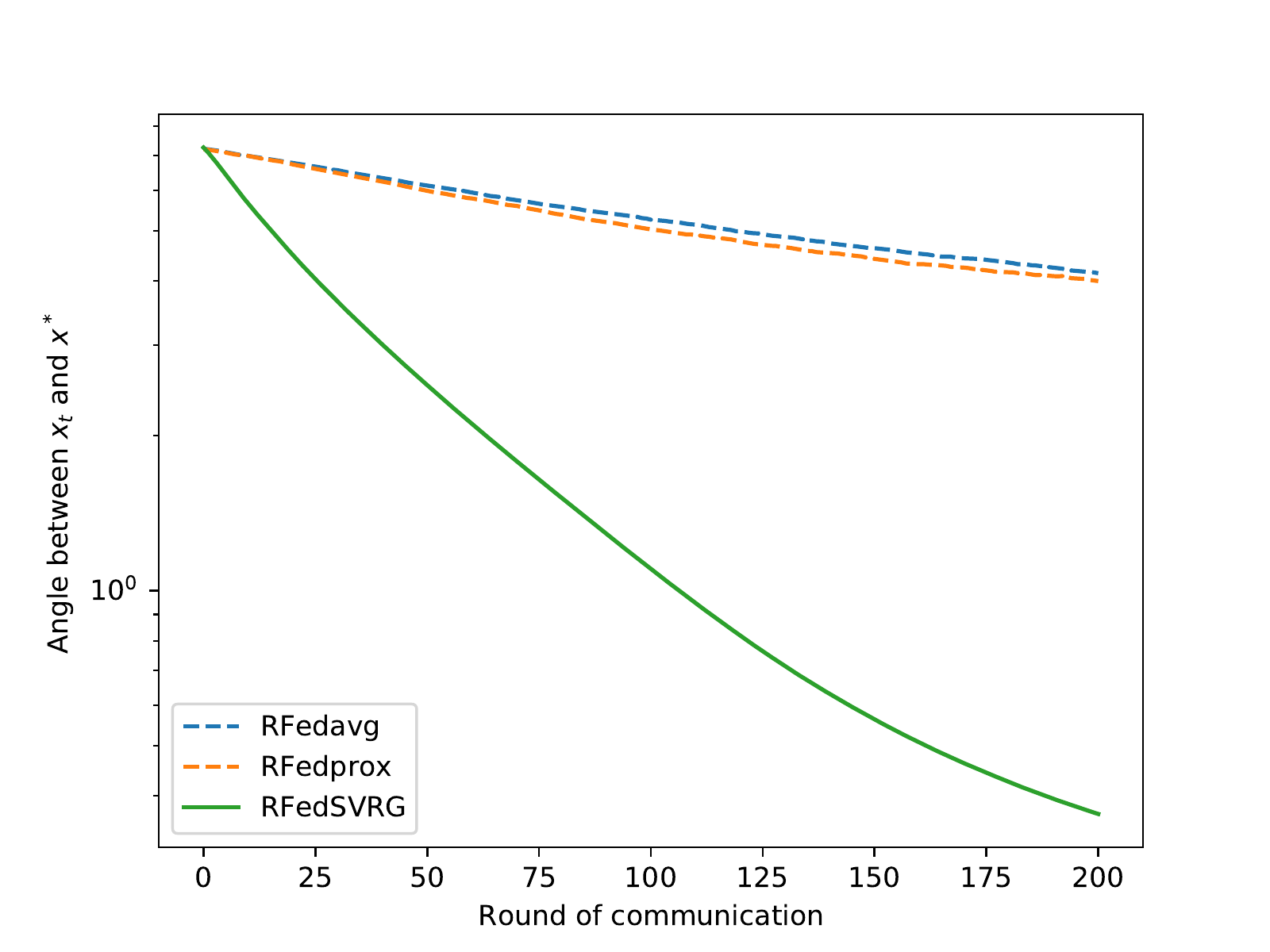}}
    
    \caption{Results for kPCA \eqref{problem_kPCA} with different number of inner loops $\tau=[1, 10, 50, 100]$. The y-axis of the figures in the first row denotes $\|\grad f(x_t)\|$, and the one in the second row denotes the principal angle between $x_t$ and $x^*$. The experiments are repeated and averaged over 10 times.}
    \label{fig:kpca_changing_tau}
    \end{center}
    % \vskip -0.5in
\end{figure}

\subsection{Experiments for kPCA on real data}

We now show the numerical results of the three algorithms on real data. We focus on the kPCA problem \eqref{problem_kPCA} here. We test the three algorithms on three real data sets: the Iris dataset~\cite{forinaextendible}, the wine dataset~\cite{forinaextendible} and the MNIST hand-written dataset~\cite{lecun1998gradient}. For all three datasets, we calculate the first $r$ principal directions and the true optimal loss value directly. We can thus compute the principal angles between the iterate and the ground truth. The experiments are repeated and averaged for 10 random initializations.

For the first two datasets, we randomly partition the datasets into $10$ agents and at each iteration we take $k=5$ agents. The Figures \ref{fig:kpca_iris} and \ref{fig:kpca_wine} show that \texttt{RFedSVRG} is able to effectively decrease the norm of Riemannian gradient and the principal angles while the other two are not as efficient.

\begin{figure}[t!]
    \begin{center}
    % \setcounter{subfigure}{0}
    % \subfigure[Function value]{\includegraphics[width=0.325\textwidth]{}}
    \subfigure[Gradient norm]{\includegraphics[width=0.36\textwidth]{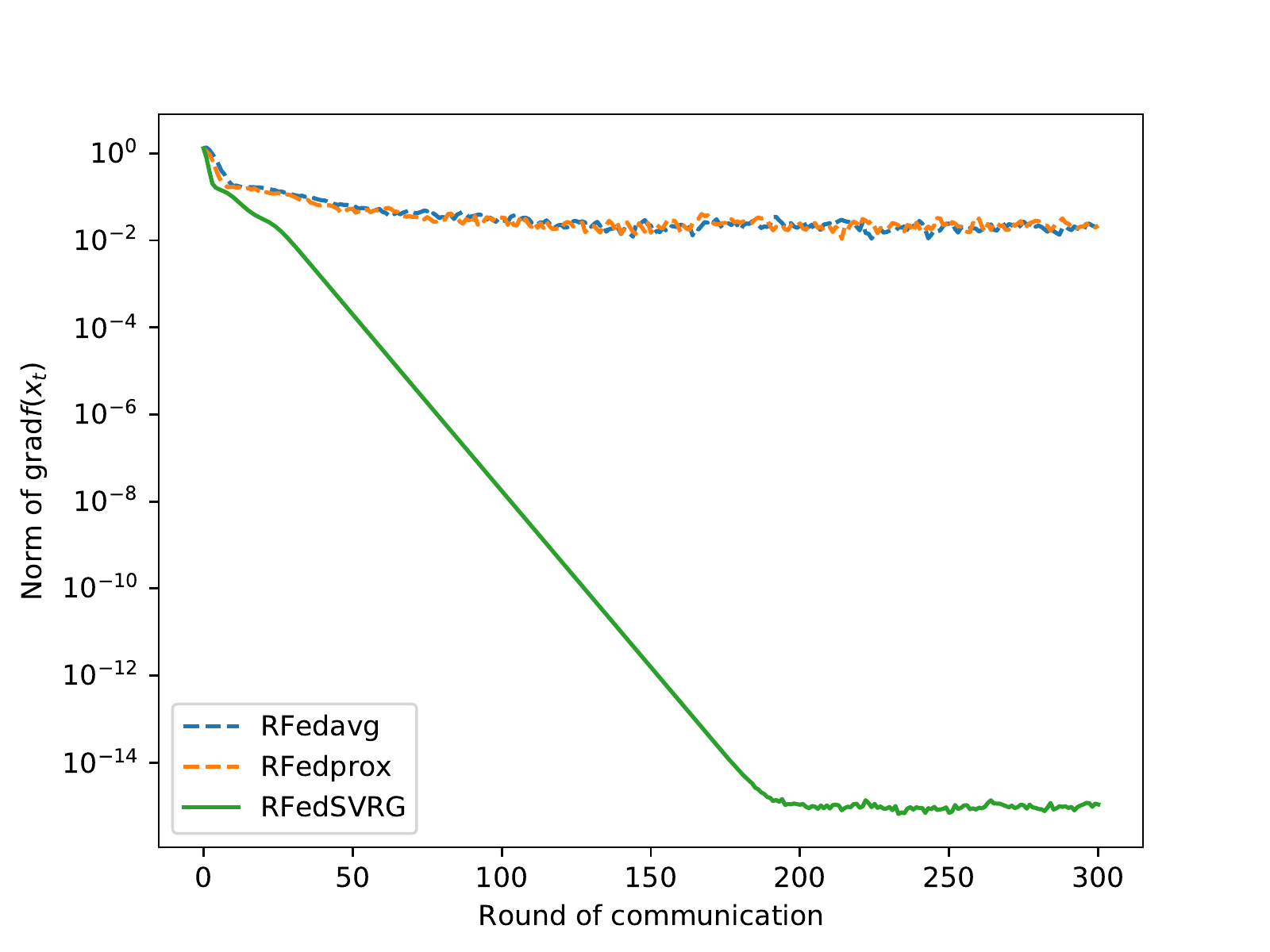}}
    \subfigure[Principal angle]{\includegraphics[width=0.36\textwidth]{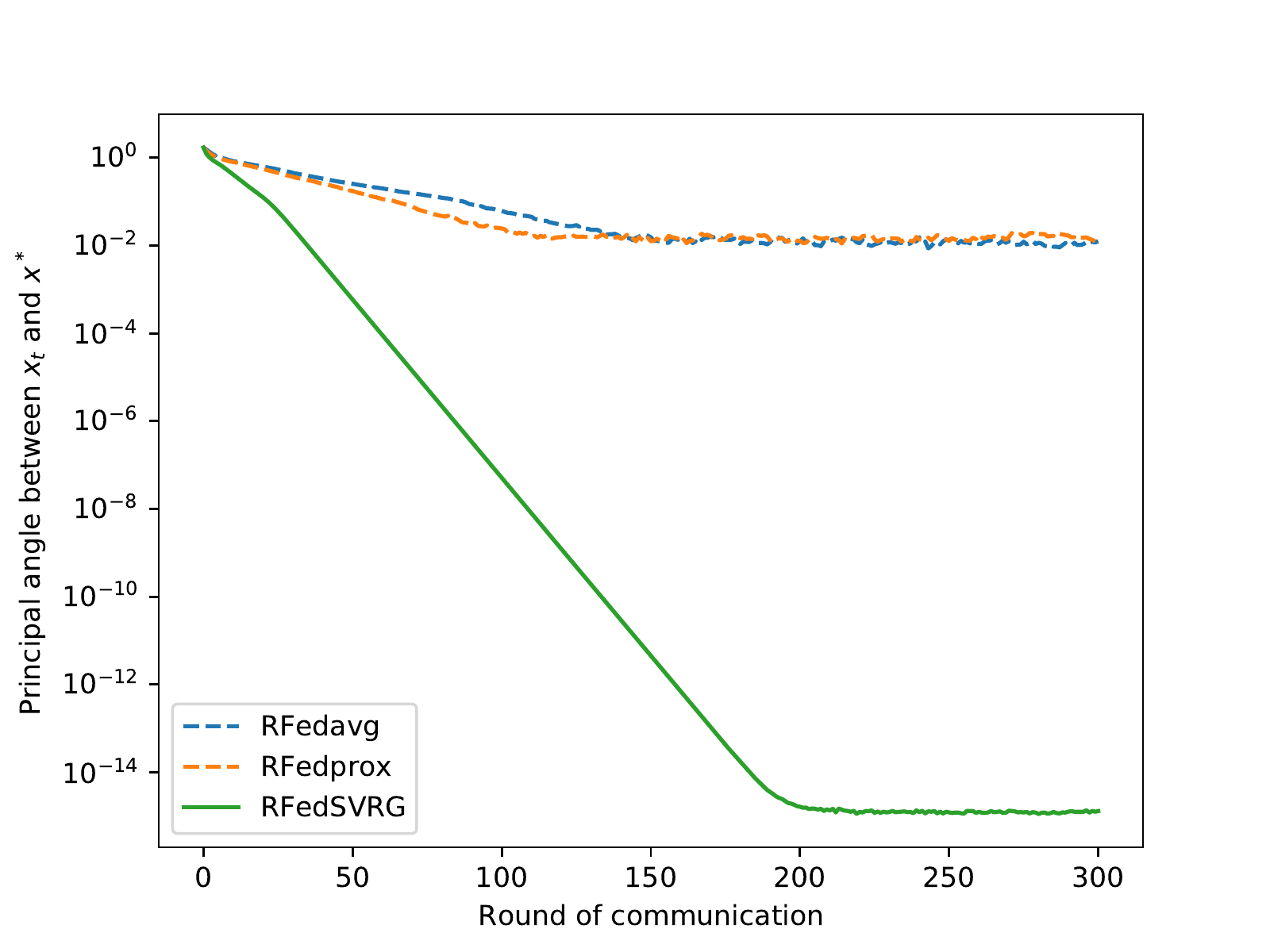}}
    
    \caption{Results for kPCA \eqref{problem_kPCA} on Iris dataset. The data is in $\RR^4$ ($d=4$) and we take $r=2$. The first figure is the norm of Riemannian gradient $\|\grad f(x_t)\|$ and the second is the principal angle between $x_t$ and the true solution $x^*$. }
    \label{fig:kpca_iris}
    \end{center}
    % \vskip -0.5in
\end{figure}

\begin{figure}[t!]
    \begin{center}
    % \setcounter{subfigure}{0}
    % \subfigure[Function value]{\includegraphics[width=0.325\textwidth]{}}
    \subfigure[Gradient norm]{\includegraphics[width=0.36\textwidth]{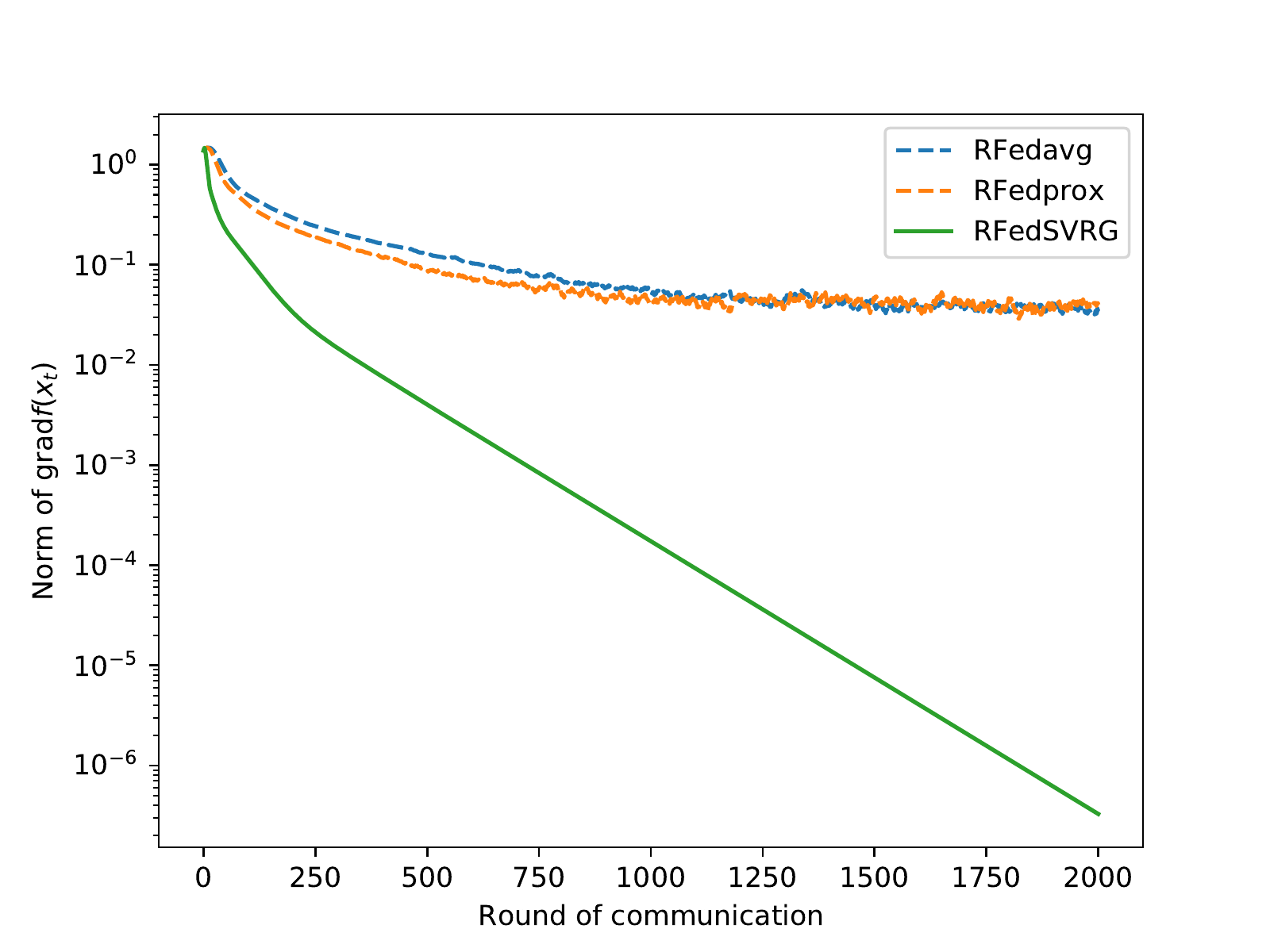}}
    \subfigure[Principal angle]{\includegraphics[width=0.36\textwidth]{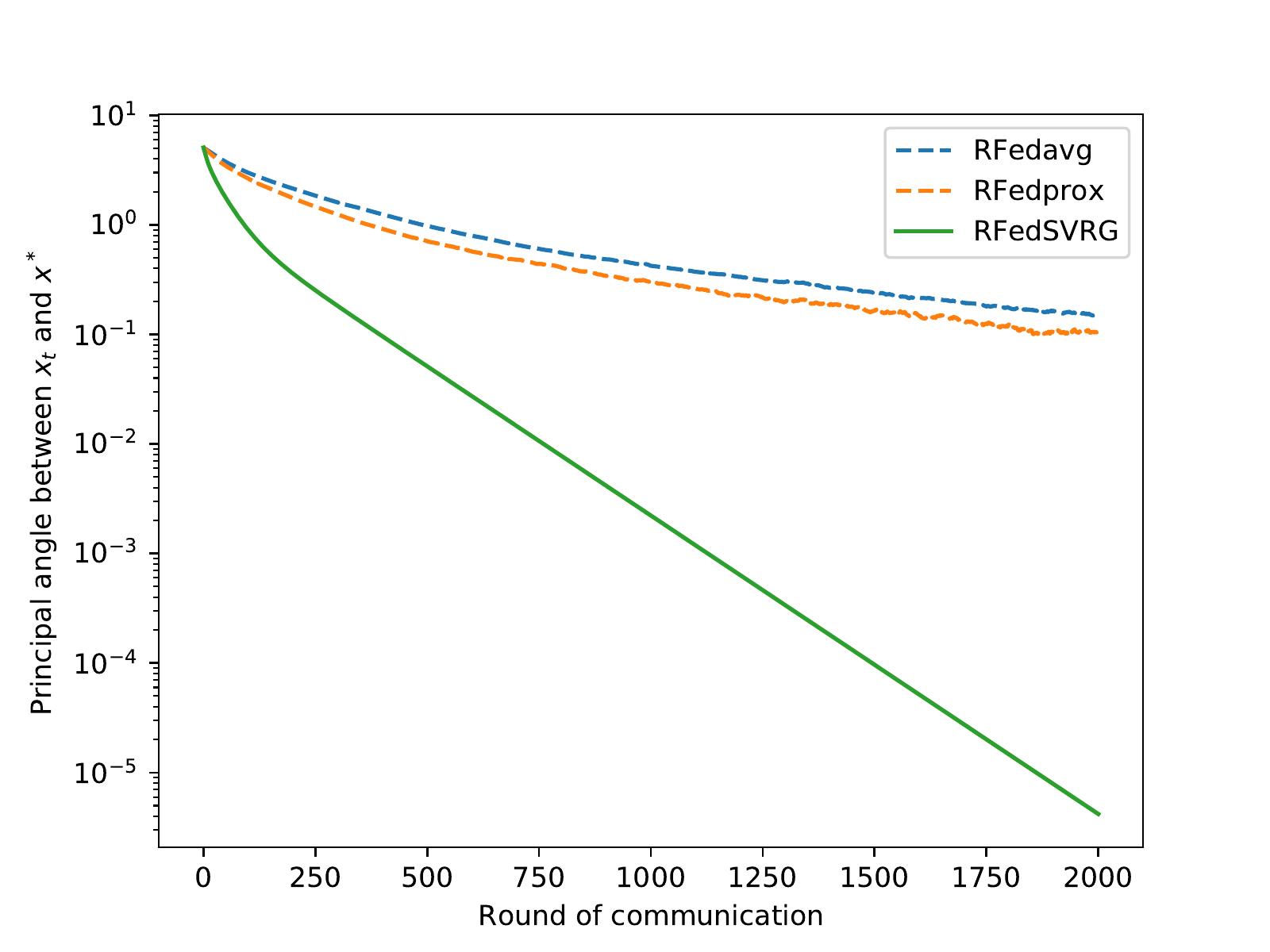}}
    
    \caption{Results for kPCA \eqref{problem_kPCA} with wine dataset. The data is in $\RR^{13}$ ($d=13$) and we take $r=5$. The first figure is the norm of Riemannian gradient $\|\grad f(x_t)\|$ and the second is the principal angle between $x_t$ and the true solution $x^*$.}
    \label{fig:kpca_wine}
    \end{center}
    % \vskip -0.5in
\end{figure}

For the MNIST hand-written dataset, the (training) dataset contains $60000$ hand-written images of size $28\times 28$, i.e. $d=784$. This is a relatively large dataset and we test the proposed algorithms with different number of clients. The results are shown in Figure \ref{fig:kpca_mnist} where the efficiency of \texttt{RFedSVRG} is demonstrated again. The comparison of the two rows of Figure \ref{fig:kpca_mnist} concludes that \texttt{RFedSVRG} shows better efficiency even with a larger number of clients $n$.

\begin{figure}[t!]
    \begin{center}
    % \subfigure{\includegraphics[width=0.325\textwidth]{}}
    \subfigure{\includegraphics[width=0.36\textwidth]{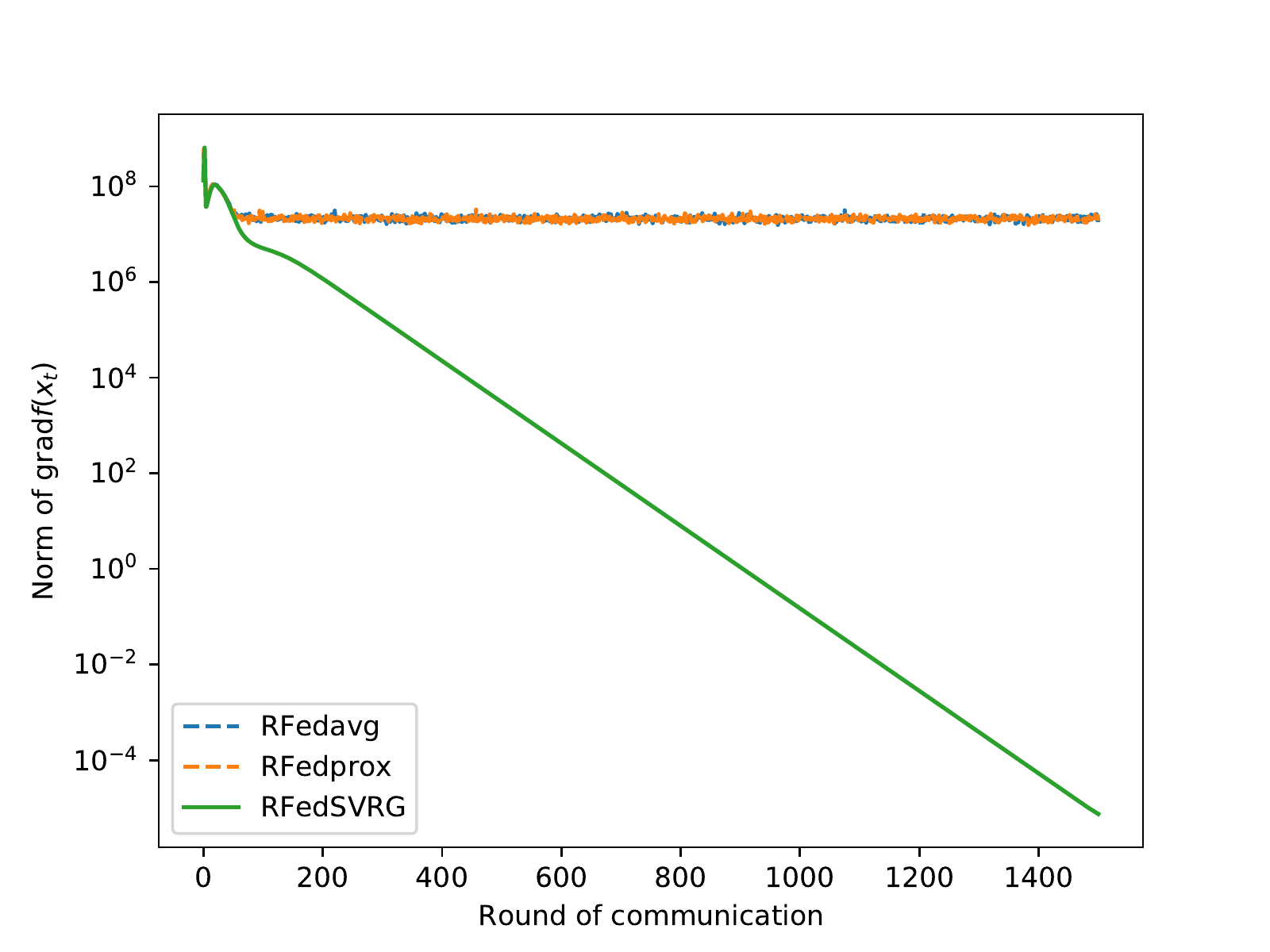}}
    \subfigure{\includegraphics[width=0.36\textwidth]{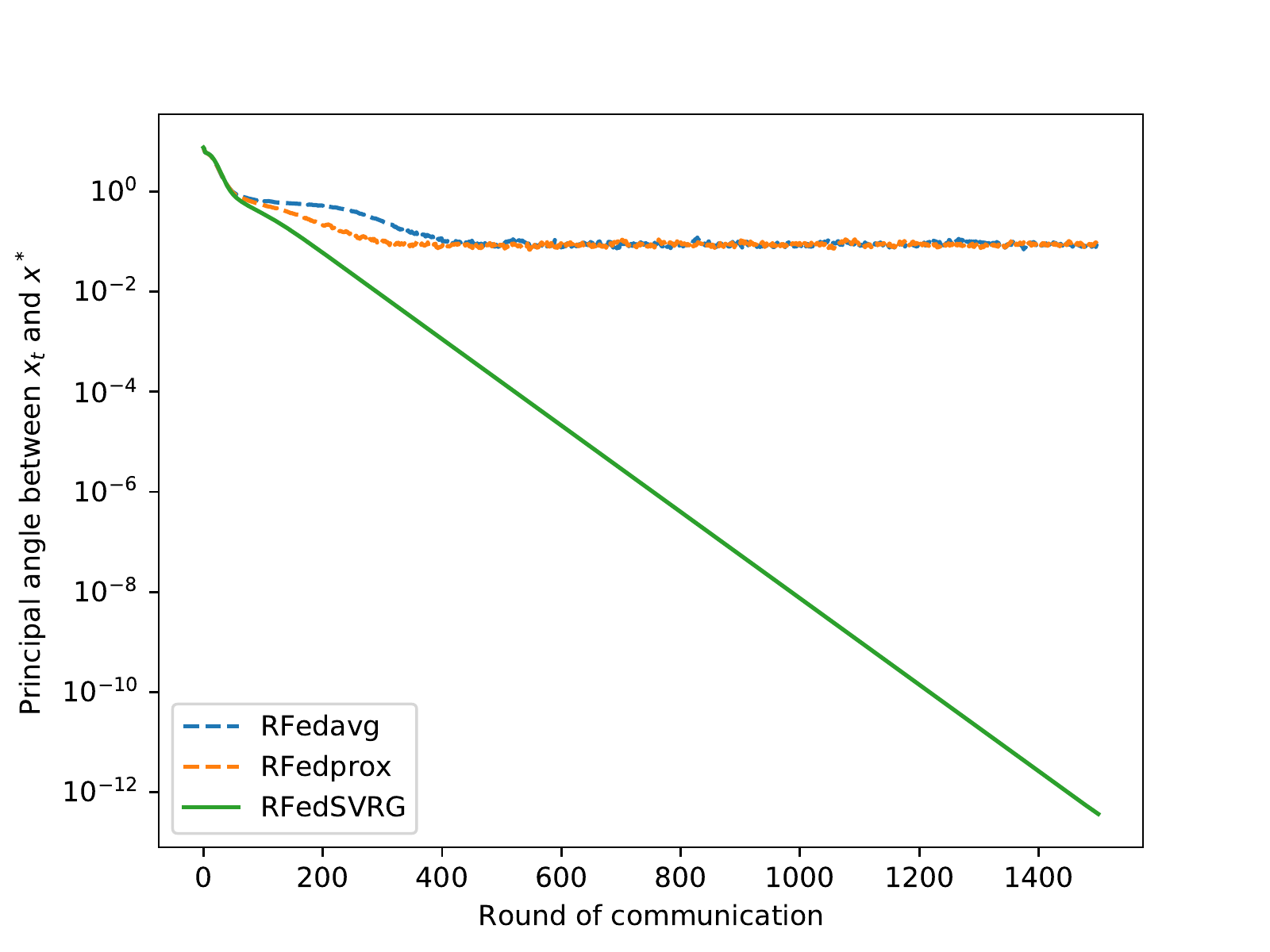}}
    \setcounter{subfigure}{0}
    % \subfigure[Function value]{\includegraphics[width=0.325\textwidth]{}}
    \subfigure[Gradient norm]{\includegraphics[width=0.36\textwidth]{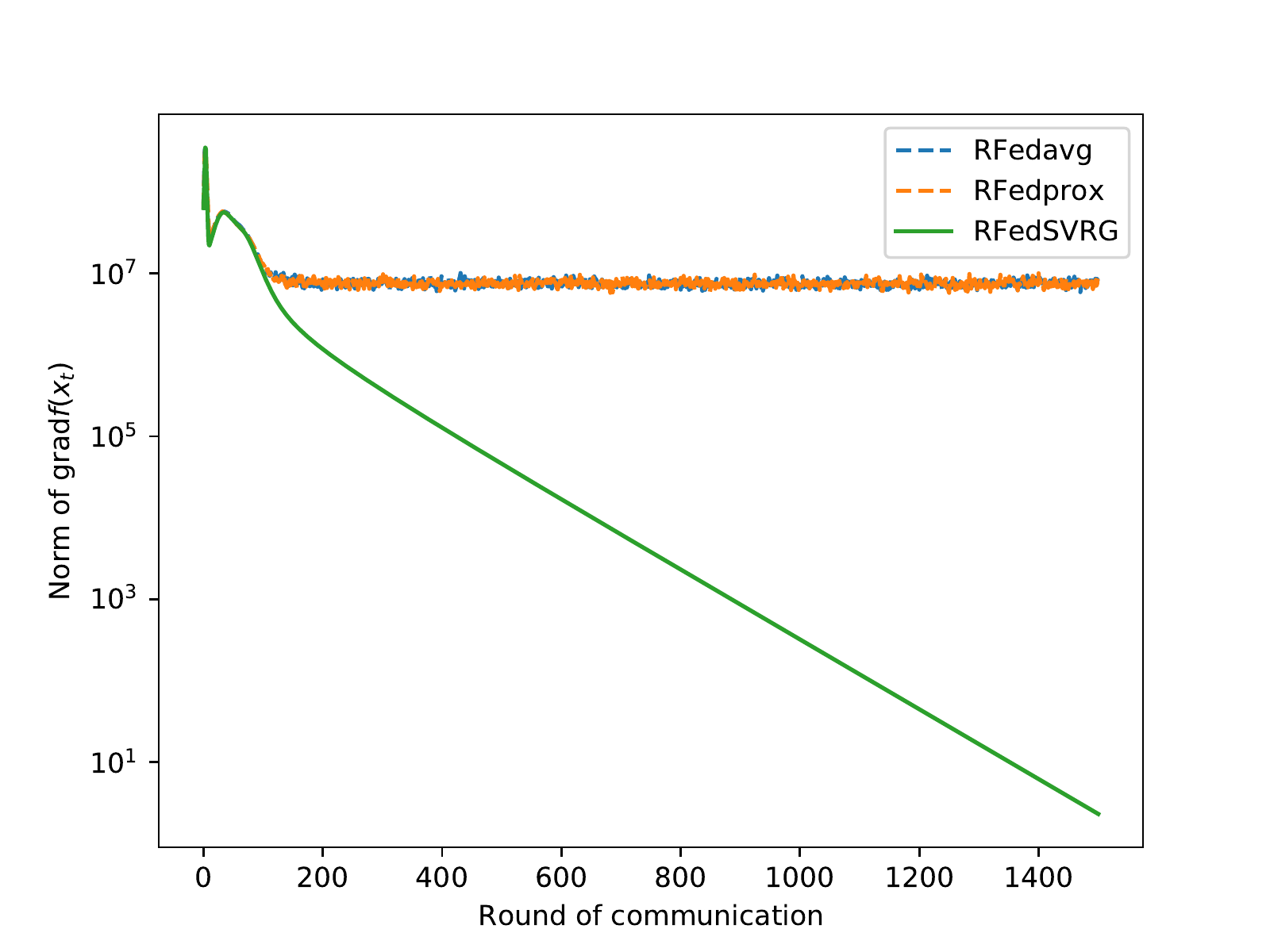}}
    \subfigure[Principal angle]{\includegraphics[width=0.36\textwidth]{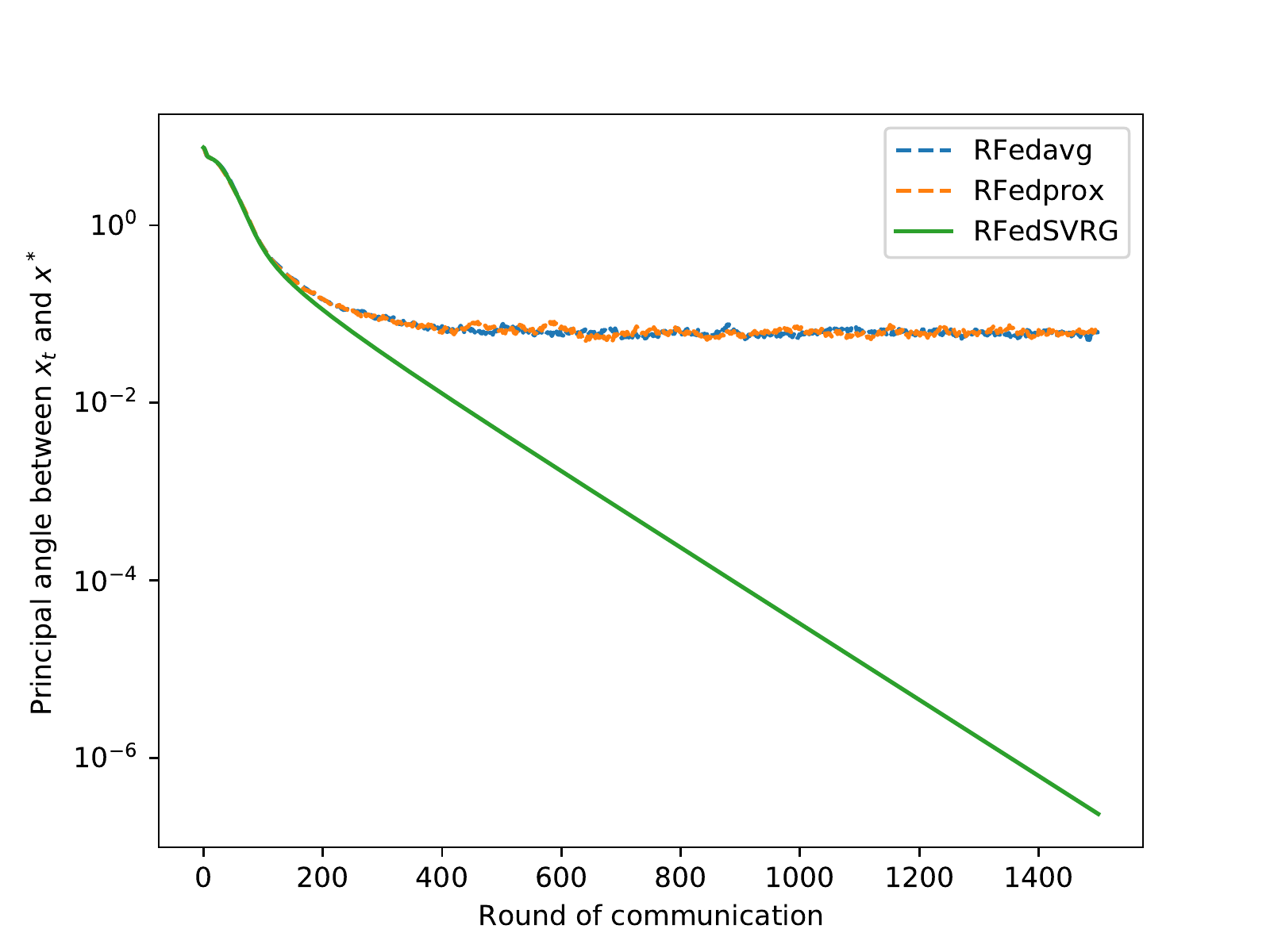}}
    
    \caption{Results for kPCA \eqref{problem_kPCA} with MNIST dataset. The data is in $\RR^{784}$ ($d=784$) and we take $r=5$. The first column is the norm of Riemannian gradient $\grad f(x_t)$ and the second is the principal angle between $x_t$ and the true solution $x^*$. The two rows corresponds to $n=100$ and $n=200$. We take $k=n/10$ and $\tau=5$ for all algorithms.}
    \label{fig:kpca_mnist}
    \end{center}
    % \vskip -0.5in
\end{figure}

\section{Conclusions}

In this paper, we studied the federated optimization over Riemannian manifolds. We proposed a Riemannian federated SVRG algorithm and analyzed its convergence rate to an $\epsilon$-stationary point. To the best of our knowledge, this is the first federated algorithm over Riemannian manifolds with convergence guarantees. Numerical experiments on federated PCA and federated kPCA were conducted to demonstrate the efficiency of the proposed method. {Developing algorithms with lower communication cost, better scalability and sparse solutions are some important topics for future research.}

% \section*{References}

% References follow the acknowledgments. Use unnumbered first-level heading for
% the references. Any choice of citation style is acceptable as long as you are
% consistent. It is permissible to reduce the font size to \verb+small+ (9 point)
% when listing the references.
% Note that the Reference section does not count towards the page limit.
% \medskip

\newpage
{\small
\bibliography{bibfile}
\bibliographystyle{plain}
}
\newpage
\appendix

\section{Detailed Preliminary Results of Optimization on Riemannian Manifolds}\label{appendix_manifold}
Suppose $\M$ is an $m$-dimensional differentiable manifold. The tangent space $T_x\M$ at $x\in\M$ is a linear subspace that consists of the derivatives of all differentiable curves on $\M$ passing through $x$: $T_x\M:=\{\gamma^{\prime}(0): \gamma(0)=x, \gamma([-\delta, \delta]) \subset \mathcal{M}\text { for some } \delta>0, \gamma \text { is differentiable}\}$. Notice that for every vector $\gamma^{\prime}(0)\in T_x\M$, it can be defined in a coordinate-free sense via the operation over smooth functions: $\forall f\in C^{\infty}(\M)$, $\gamma^{\prime}(0)(f):=\frac{d f\circ \gamma(t)}{dt}\mid_{t=0}$. The Riemannian manifold is a smooth manifold that is equipped with an \textbf{inner product} (called Riemannian metric) on the tangent space, $g(\cdot, \cdot)=\langle \cdot, \cdot \rangle _x : T_x\M \times T_x\M \rightarrow \RR$, that varies smoothly on $\M$.

We first review the notion of the differential between manifolds and the Riemannian gradients here.
\begin{definition}[Differential and Riemannian gradients]
    Let $F:\M\rightarrow \mathcal{N}$ be a $C^{\infty}$ map between two differential manifolds. At each point $x\in\M$, the differential of $F$ is a mapping: $F_*:T_x\M\rightarrow T_x\mathcal{N}$ such that $\forall\xi\in T_x\M$, $F_*(\xi)\in T_x\mathcal{N}$ is given by $(F_*(\xi))(f):=\xi(f\circ F)\in\RR,\ f\in C_{F(x)}^{\infty}(\M)$. 
    
    If $\mathcal{N}=\RR$, i.e. $f\in C^\infty(\M)$, the differential $f_*$ is also denoted as $d f$. For a Riemannian manifold with Riemannian metric $g$, the Riemannian gradient for $f\in C^\infty(\M)$ is the unique tangent vector $\grad f(x)\in T_x\M$ such that $df(\xi) = g(\grad f, \xi),\ \forall \xi\in T_x\M$.
\end{definition}

For the convergence analysis, we also need the notion of exponential mapping and parallel transport. To this end, we need to first recall the definition of a geodesic.
\begin{definition}[Geodesic and exponential mapping]
    Given $x\in\M$ and $\xi\in T_x\M$, the geodesic is the curve $\gamma:I\rightarrow\M$, $0\in I\subset\RR$ is an open set, so that $\gamma(0)=x$, $\Dot{\gamma}(0)=\xi$ and $\nabla_{\Dot{\gamma}}\Dot{\gamma}=0$ where $\nabla:T_x\M\times T_x\M\rightarrow T_x\M$ is the Levi-Civita connection defined by metric $g$. In local coordinates, $\gamma$ is the unique solution of the following second-order differential equations:
    $$
        \frac{d^2\gamma^k}{d t^2} + \Gamma_{i,j}^{k}\frac{d\gamma^i}{d t}\frac{d\gamma^j}{d t}=0
    $$
    under Einstein summation convention, where $\Gamma_{i,j}^{k}$ are Christoffel symbols defined by metric tensor $g$. The exponential mapping $\Exp_x$ is defined as a mapping from $T_x\M$ to $\M$ s.t. $\Exp_x(\xi):= \gamma(1)$ with $\gamma$ being the geodesic with $\gamma(0)=x$, $\Dot{\gamma}(0)=\xi$. A natural corollary is $\Exp_x(t\xi):= \gamma(t)$ for $t\in[0, 1]$. Another useful fact is $d(x,\Exp_x(\xi))=\|\xi\|_x$ since $\gamma'(0)=\xi$ which preserves the speed.
\end{definition}

\section{Proofs}\label{appendix_proof}
In this section we provide the proofs of lemmas and theorems mentioned in the main paper. We first finish the proof of Lemma \ref{lemma_regularization_tangent_mean}:

\begin{proof}[Proof of Lemma \ref{lemma_regularization_tangent_mean}]
    By Cauchy-Schwarz inequality we have
    \[
    \begin{split}
        d(x_{t+1}, x_t) &= \|\Exp_{x_t}^{-1}(x_{t+1})\| \\
        &=\|\frac{1}{k}\sum_{i\in S_t}\Exp_{x_{t}}^{-1}(x^{(i)})\|\leq \frac{1}{k}\sum_{i\in S_t}\|\Exp_{x_{t}}^{-1}(x^{(i)})\| = \frac{1}{k}\sum_{i\in S_t} d(x_t, x^{(i)}).
    \end{split}
    \]
\end{proof}

Now we turn to the proof of Theorem \ref{thm_nonconvex1}. We would utilize the following lemma:

\begin{lemma}\label{lemma_temp1}
    Under the same settings as Theorem \ref{thm_nonconvex1}, we have
    \[
        f(x_{t+1}) - f(x_t)\leq  -\eta_t^{(i)}\|\grad f(x_t)\|^2 + \frac{(\eta_t^{(i)})^2L}{2}\|\grad f(x_t)\|^2.
    \]
\end{lemma}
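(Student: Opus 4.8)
The plan is to show that, under the restriction $\tau_i=1$ with Option 1, each communication round of \texttt{RFedSVRG} collapses to a single Riemannian gradient descent step on the \emph{global} objective $f$, after which the bound follows immediately from the $L$-smoothness descent inequality \eqref{eq:lgsmoothness2}.

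First I would examine the local update \eqref{local_update_fedsvrg} when $\tau_i=1$. Since the inner loop runs only once and starts at $x_0^{(i)}=x_t$, the parallel transport $P_{x_t\rightarrow x_0^{(i)}}=P_{x_t\rightarrow x_t}$ is the identity, so the search direction is $\grad f_i(x_t)-\bigl(\grad f_i(x_t)-\grad f(x_t)\bigr)=\grad f(x_t)$; the local term $\grad f_i$ cancels exactly. Hence $\hat{x}^{(i)}=x_1^{(i)}=\Exp_{x_t}\!\left(-\eta_t^{(i)}\grad f(x_t)\right)$ for every agent $i\in S_t$, and crucially this direction is the same global gradient for all agents. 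Feeding these into the tangent space consensus \eqref{tangent_space_mean} and using that $\Exp_{x_t}^{-1}$ inverts $\Exp_{x_t}$ on the complete manifold, I get $\Exp_{x_t}^{-1}(\hat{x}^{(i)})=-\eta_t^{(i)}\grad f(x_t)$, so the averaged tangent vector is again a negative multiple of $\grad f(x_t)$. Therefore $x_{t+1}=\Exp_{x_t}\!\left(-\eta_t^{(i)}\grad f(x_t)\right)$, a single Riemannian gradient step, with $\Exp_{x_t}^{-1}(x_{t+1})=-\eta_t^{(i)}\grad f(x_t)$ and $d(x_t,x_{t+1})=\eta_t^{(i)}\|\grad f(x_t)\|$.

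Finally I would apply the smoothness bound \eqref{eq:lgsmoothness2} with $x=x_t$, $y=x_{t+1}$, and $g_{x_t}=\grad f(x_t)$:
\[
f(x_{t+1})-f(x_t)\leq \left\langle \grad f(x_t),\,-\eta_t^{(i)}\grad f(x_t)\right\rangle_{x_t}+\frac{L}{2}\,(\eta_t^{(i)})^2\|\grad f(x_t)\|^2,
\]
which simplifies directly to the claimed inequality.

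The only conceptual step is recognizing the exact cancellation that reduces the federated update to global Riemannian gradient descent; once this is identified, the remainder is a one-line application of the descent lemma and I do not expect a genuine technical obstacle. The mild subtleties worth flagging are the bookkeeping of possibly agent-dependent step sizes through the averaging in \eqref{tangent_space_mean} (when the $\eta_t^{(i)}$ differ, the effective step size is their average, so the statement is cleanest when they are taken common across agents, as denoted), and the implicit use of $\Exp_{x_t}^{-1}\circ\Exp_{x_t}=\mathrm{id}$, which requires the iterates to remain in a region where the exponential map is invertible.
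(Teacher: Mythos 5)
Your proposal is correct and follows essentially the same route as the paper: observe that with $\tau_i=1$ the local correction term cancels the local gradient so that $\Exp_{x_t}^{-1}(x_1^{(i)})=-\eta_t^{(i)}\grad f(x_t)$, substitute this into the tangent space mean \eqref{tangent_space_mean}, and apply the smoothness inequality \eqref{eq:lgsmoothness2}. Your remark about the implicit assumption of a common step size across agents is a fair observation about notation that the paper also glosses over.
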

\begin{proof}[Proof of Lemma \ref{lemma_temp1}]
    From the update we know that
    $$
    x_{\ell+1}^{(i)}\leftarrow \Exp_{x_{\ell}^{(i)}}\left[-\eta_t^{(i)} \left(\grad f_i(x_{\ell}^{(i)}) - P_{x_t\rightarrow x_{\ell}^{(i)}}(\grad f_i(x_t) - \grad f(x_t))\right)\right]
    $$
    i.e.
    $$
    \Exp_{x_{\ell}^{(i)}}^{-1}(x_{\ell+1}^{(i)})\leftarrow -\eta_t^{(i)} \left(\grad f_i(x_{\ell}^{(i)}) - P_{x_t\rightarrow x_{\ell}^{(i)}}(\grad f_i(x_t) - \grad f(x_t))\right).
    $$
    
    When $\tau_i=1$, $x_{0}^{(i)}=x_t$ thus
    $$
    \Exp_{x_t}^{-1}(x_{1}^{(i)})\leftarrow -\eta_t^{(i)} \left(\grad f_i(x_t) - P_{x_t\rightarrow x_{1}^{(i)}}(\grad f_i(x_t) - \grad f(x_t))\right) = -\eta_t^{(i)}\grad f(x_t)
    $$
    % therefore we have
    % \[
    % \begin{split}
    %     d(x_{1}^{(i)}, x_t)=&\|\Exp_{x_t}^{-1}(x_{1}^{(i)})\| = \eta^{(i)} \left\|\grad f_i(x_t) - P_{x_t\rightarrow x_{0}^{(i)}}(\grad f_i(x_t) - \grad f(x_t))\right\| \\
    %     \leq &  \eta^{(i)} \left\|\grad f_i(x_t) - P_{x_t\rightarrow x_{0}^{(i)}}\grad f_i(x_t)\| +\eta^{(i)} \|\grad f(x_t)\right\| \\
    %     \leq & \eta^{(i)} L_i d(x_{0}^{(i)}, x_t)+\eta^{(i)}\|\grad f(x_t)\| = \eta^{(i)}\|\grad f(x_t)\|
    % \end{split}
    % \]
    % where the last inequality is by the Lipschitz smooth of $f_i$.
    
    Using Lipschitz smooth of $f_i$ again and the tangent space mean \eqref{tangent_space_mean}, we have
    \[
    \begin{split}
        f(x_{t+1}) - f(x_t)\leq & \langle \Exp_{x_t}^{-1}(x_{t+1}),\grad f(x_t) \rangle + \frac{L}{2}d^2(x_{t+1}, x_{t}) \\
        = & \langle \frac{1}{k}\sum_{i\in S_t}\Exp_{x_{t}}^{-1}(x_{1}^{(i)}), \grad f(x_t) \rangle + \frac{L}{2}\|\frac{1}{k}\sum_{i\in S_t}\Exp_{x_{t}}^{-1}(x_{1}^{(i)})\|^2 \\
        = & -\eta_t^{(i)}\|\grad f(x_t)\|^2 + \frac{(\eta_t^{(i)})^2L}{2}\|\grad f(x_t)\|^2,
    \end{split}
    \]
    where we used the tangent space mean \eqref{tangent_space_mean} for the first equality.
\end{proof}

Now we are ready to present the proof of Theorem \ref{thm_nonconvex1}.

\begin{proof}[Proof of Theorem \ref{thm_nonconvex1}]
    % we have:
    % $$
    % f(x_{t+1}) - f(x_t)\leq  -\eta_t^{(i)}\|\grad f(x_t)\|^2 + \frac{(\eta_t^{(i)})^2L}{2}\|\grad f(x_t)\|^2
    % $$
    By taking $\eta^{(i)}\leq \frac{1}{L}$, from Lemma \ref{lemma_temp1} we have
    $$
        f(x_{t+1}) - f(x_t)\leq - \frac{1}{2 L}\|\grad f(x_t)\|^2.
    $$
    Summing this inequality over $t=0,1,\ldots,T$, we obtain
    \[
    \frac{1}{2L}\sum_{t=0}^T \|\grad f(x_t)\|^2 \leq f(x_0) - f(x_{T+1}) \leq f(x_0) - f(x^*),
    \]
    which yields \eqref{thm-ineq} immediately. 
%    and the convergence result is directly from telescoping above formula.
\end{proof}

Before we present the proof of Theorem \ref{thm_nonconvex1.1}, we need the following lemma, which is adopted from~\cite{zhang2016fast}.
\begin{lemma}[Lemma 2 in~\cite{zhang2016fast}]\label{lemma_temp2}
    Consider Algorithm \ref{manifold_fedsvrg} with \textbf{Option} 2. Suppose we run randomly chosen local agent $i$ at the $t$-th outer iteration. If we run the local agent $i$ for $\tau_i$ local gradient steps (\ref{local_update_fedsvrg}) with initial point $x_t$, then it holds:
    \begin{equation}\label{lemma_temp2-ineq}
        \E\|\grad f(x_{\ell}^{(i)})\|^2\leq \frac{R_{\ell} - R_{\ell+1}}{\delta_{\ell}},\ \ell=0,...,\tau_i-1,
    \end{equation}
    where the expectation is taken with respect to the randomly selected index $i$, $R_{\ell}:=\E[f(x_{\ell}^{(i)}) + c_\ell \|\Exp_{x_t}^{-1}(x_{\ell}^{(i)})\|^2]$, $c_\ell=c_{\ell+1}(1+\beta\eta +2\zeta L^2\eta^2) + L^3\eta^2$ and $\delta_{\ell}=\eta  - \frac{c_{\ell+1}\eta }{\beta}-L\eta^2 - 2c_{\ell+1}\zeta\eta^2$. Here $\beta$ is a free constant to be determined and we take $c_{\tau_i}=0$ in the recursive definition.
\end{lemma}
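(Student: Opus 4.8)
The plan is to track the Lyapunov (potential) function $R_\ell = \E[f(x_\ell^{(i)}) + c_\ell\|\Exp_{x_t}^{-1}(x_\ell^{(i)})\|^2]$, which augments the objective value with a penalty on the geodesic distance $\|\Exp_{x_t}^{-1}(x_\ell^{(i)})\| = d(x_\ell^{(i)}, x_t)$ traveled away from the snapshot point $x_t$, and to show that one inner step decreases $R_\ell$ by at least $\delta_\ell\,\E\|\grad f(x_\ell^{(i)})\|^2$. Writing $v_\ell := \grad f_i(x_\ell^{(i)}) - P_{x_t\rightarrow x_\ell^{(i)}}(\grad f_i(x_t) - \grad f(x_t))$ for the variance-reduced direction, so that $\Exp_{x_\ell^{(i)}}^{-1}(x_{\ell+1}^{(i)}) = -\eta v_\ell$, the whole argument rests on two facts about $v_\ell$: its (conditional) unbiasedness $\E v_\ell = \grad f(x_\ell^{(i)})$, which follows from the uniform choice of the index together with the fact that parallel transport preserves the snapshot correction, and a variance bound that shrinks as the iterate returns toward $x_t$.

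First I would apply the $L$-smoothness inequality \eqref{eq:lgsmoothness2} along the geodesic from $x_\ell^{(i)}$ to $x_{\ell+1}^{(i)}$. Since the logarithm of the step is exactly $-\eta v_\ell$ and the step length is $\eta\|v_\ell\|$, this yields $\E f(x_{\ell+1}^{(i)}) \le \E f(x_\ell^{(i)}) - \eta\,\E\langle\grad f(x_\ell^{(i)}), v_\ell\rangle + \frac{L\eta^2}{2}\E\|v_\ell\|^2$, and invoking unbiasedness turns the cross term into $-\eta\,\E\|\grad f(x_\ell^{(i)})\|^2$. Next I would establish the variance estimate $\E\|v_\ell\|^2 \le 2\E\|\grad f(x_\ell^{(i)})\|^2 + 2L^2\E\|\Exp_{x_t}^{-1}(x_\ell^{(i)})\|^2$: here the $L_i$-smoothness of the components enters, since the parallel-transport Lipschitz bound \eqref{eq:lgsmoothness1} controls $\|\grad f_i(x_\ell^{(i)}) - P_{x_t\rightarrow x_\ell^{(i)}}\grad f_i(x_t)\|$ by $L_i\,d(x_\ell^{(i)}, x_t)$, so the variance is proportional to the very squared distance that appears in the potential.

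The genuinely Riemannian step is the recursion for the distance term. Here I would invoke the curvature-dependent ``law of cosines'' comparison, valid under the bounded-curvature, bounded-diameter Assumption \ref{assumption_regu_manifold} that produces the constant $\zeta$ of \eqref{zeta_eq}, to obtain $\|\Exp_{x_t}^{-1}(x_{\ell+1}^{(i)})\|^2 \le \|\Exp_{x_t}^{-1}(x_\ell^{(i)})\|^2 + 2\eta\langle\Exp_{x_\ell^{(i)}}^{-1}(x_t), v_\ell\rangle + \zeta\eta^2\|v_\ell\|^2$. Taking expectation and using unbiasedness on the cross term, I would then split it with Young's inequality and the free parameter $\beta$, namely $2\langle\Exp_{x_\ell^{(i)}}^{-1}(x_t), \grad f(x_\ell^{(i)})\rangle \le \beta\|\Exp_{x_\ell^{(i)}}^{-1}(x_t)\|^2 + \frac{1}{\beta}\|\grad f(x_\ell^{(i)})\|^2$, noting $\|\Exp_{x_\ell^{(i)}}^{-1}(x_t)\|^2 = \|\Exp_{x_t}^{-1}(x_\ell^{(i)})\|^2$ by symmetry of the distance.

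Finally I would weight this distance recursion by $c_{\ell+1}$, add it to the function decrease, and substitute the variance bound wherever $\|v_\ell\|^2$ appears. Collecting the coefficient of $\E\|\Exp_{x_t}^{-1}(x_\ell^{(i)})\|^2$ reproduces exactly $c_{\ell+1}(1+\beta\eta+2\zeta L^2\eta^2)+L^3\eta^2 = c_\ell$, while the coefficient of $\E\|\grad f(x_\ell^{(i)})\|^2$ collapses to $-(\eta - \frac{c_{\ell+1}\eta}{\beta} - L\eta^2 - 2c_{\ell+1}\zeta\eta^2) = -\delta_\ell$, giving $R_{\ell+1}\le R_\ell - \delta_\ell\,\E\|\grad f(x_\ell^{(i)})\|^2$ and hence \eqref{lemma_temp2-ineq}. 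The hard part will be the curvature comparison together with the attendant bookkeeping: one must keep the geodesic triangle $(x_t, x_\ell^{(i)}, x_{\ell+1}^{(i)})$ inside the compact set $\mathcal{D}$ so that $\zeta$ is a legitimate uniform constant, and then the recursion for $c_\ell$ (with $c_{\tau_i}=0$) together with the stepsize restriction $\eta\le\mathcal{O}(1/(nL\zeta^2))$ must be arranged so that every $\delta_\ell$ stays strictly positive, which is exactly what makes the per-step bound usable when telescoping in the proof of Theorem \ref{thm_nonconvex1.1}.
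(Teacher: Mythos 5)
Your proposal is correct and is, in substance, the proof the paper relies on: the paper does not actually prove Lemma \ref{lemma_temp2} but imports it verbatim as Lemma 2 of \cite{zhang2016fast}, and your Lyapunov argument --- the smoothness descent step, the $\zeta$-weighted law-of-cosines recursion for $d^2(x_\ell^{(i)},x_t)$ on the geodesic triangle $(x_t,x_\ell^{(i)},x_{\ell+1}^{(i)})$, the variance bound $\E\|v_\ell\|^2\le 2\E\|\grad f(x_\ell^{(i)})\|^2+2L^2\,\E\, d^2(x_\ell^{(i)},x_t)$, and Young's inequality with the free parameter $\beta$ --- is exactly the argument given there. The bookkeeping also checks out: collecting coefficients yields $-\eta+L\eta^2+\tfrac{c_{\ell+1}\eta}{\beta}+2c_{\ell+1}\zeta\eta^2=-\delta_\ell$ on the gradient term and $L^3\eta^2+c_{\ell+1}(1+\beta\eta+2\zeta L^2\eta^2)=c_\ell$ on the distance term. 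The one step you should treat with more care is the unbiasedness $\E[v_\ell]=\grad f(x_\ell^{(i)})$ for $\ell\ge 1$: in \cite{zhang2016fast} a fresh component index is drawn at every inner iteration, so the index is independent of the current iterate, whereas in Algorithm \ref{manifold_fedsvrg} the same agent $i$ generates the entire inner trajectory, so $x_\ell^{(i)}$ is correlated with $i$ and $\E_i[\grad f_i(x_\ell^{(i)})]$ is not literally $\grad f(x_\ell^{(i)})$ conditional on the iterate. This is a gap the paper inherits by citing the lemma rather than proving it in its own setting; your write-up asserts the unbiasedness without addressing it, so a genuinely self-contained proof would need either to resample the component at each inner step or to control the resulting bias explicitly.
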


Now we turn to the proof of Theorem \ref{thm_nonconvex1.1}:

\begin{proof}[Proof of Theorem \ref{thm_nonconvex1.1}]
    Since $k=1$, without loss of generality, we denote $i$ as the agent that we choose at the $t$-th iteration. Moreover, we denote $\eta=\eta^{(i)}$ because there is only one agent. 
    
    %Following Lemma \ref{lemma_temp2}, we have the bound for the local agents:
    %\begin{equation}
    %    \E\|\grad f(x_{\ell}^{(i)})\|^2\leq \frac{R_{\ell} - R_{\ell+1}}{\delta_{\ell}},\ \ell=0,...,\tau_i-1,
    %\end{equation}
    %where the expectation is taken with respect to the randomly selected index $i$, $R_{\ell}:=\E[f(x_{\ell}^{(i)}) + c_\ell \|\Exp_{x_t}^{-1}(x_{\ell}^{(i)})\|^2]$, $c_\ell=c_{\ell+1}(1+\beta\eta +2\zeta L^2\eta^2) + L^3\eta^2$ and $\delta_{\ell}=\eta  - \frac{c_{\ell+1}\eta }{\beta}-L\eta^2 - 2c_{\ell+1}\zeta\eta^2$. Here $\beta$ is a free constant to be determined and we take $c_{\tau_i}=0$ in the recursive definition. 
    From \eqref{lemma_temp2-ineq}, we note that if we set $\eta < \frac{1}{L+2 c_{\ell+1}\zeta}(1-\frac{c_{\ell+1}}{\beta})$, then we have $\delta^{(i)}:=\min_{\ell=0,...,\tau_i} \delta_{\ell}>0$. In this case, summing \eqref{lemma_temp2-ineq} over $\ell=0,1,...,\tau_i-1$ yields
    \begin{equation}\label{temp5}
        \frac{1}{\tau_i}\sum_{\ell=0,...,\tau_i-1}\E\|\grad f(x_{\ell}^{(i)})\|^2\leq \frac{R_{0} - R_{\tau_i}}{\tau_i\delta^{(i)}}\leq \E\left(\frac{f(x_t) - f(x_{\tau_i}^{(i)})}{\tau_i\delta^{(i)}}\right),
    \end{equation}
    since $R_{0}=f(x_t)$ and $R_{\tau_i}=\E[f(x_{\tau_i}^{(i)}) + c_\ell \|\Exp_{x_t}^{-1}(x_{\tau_i}^{(i)})\|^2]\geq \E[f(x_{\tau_i}^{(i)})]$. 
    Now we take $\beta=L\zeta^{1/2}/n^{1/3}$ and $\eta = 1/(10 L n^{2/3}\zeta^{1/2})$\footnote{It is straightforward to verify that $\eta<\frac{1}{L+2 c_{\ell+1}\zeta}(1-\frac{c_{\ell+1}}{\beta})$ with this choice of $\eta$ for $\ell=0,...,\tau_i$.}. From the recurrence $c_\ell=c_{\ell+1}(1+\beta\eta +2\zeta L^2\eta^2) + L^3\eta^2$ and $c_{\tau_i}=0$ we have
    $$
    c_0=\frac{L}{100 n^{4/3}\zeta}\frac{(1+\theta)^{\tau_i}-1}{\theta},
    $$
    where
    $$
    \theta=\eta \beta+2 \zeta \eta^{2} L^{2}=\frac{1}{10 n}+\frac{1}{50 n^{4/3}} \in\left(\frac{1}{10 n}, \frac{3}{10 n}\right)
    $$
    is a parameter. If we take $\tau_i=\left\lfloor 10 n/ 3\right\rfloor$ such that $(1+\theta)^{\tau_i}<(1+\frac{3}{10n})^{\tau_i} < e$, then
    $$
        c_0\leq \frac{L}{10 n^{1/3} \zeta}(e-1),
    $$
    and $\delta^{(i)}$ is bounded by
    $$
    \begin{aligned}
    \delta^{(i)}&\geq\left(\eta-\frac{c_{0} \eta}{\beta}-\eta^{2} L-2 c_{0} \zeta \eta^{2}\right) \\
    &\geq \eta\left(1-\frac{e-1}{10\zeta^{3/2}}-\frac{1}{10n^{2/3} \zeta^{1/2}}-\frac{e-1}{50 n \zeta^{1/2}}\right) \\
    &\geq \frac{\eta}{2} = \frac{1}{20 L n^{2/3} \zeta^{1/2}},
    \end{aligned}
    $$
    where the last inequality is by $\zeta, n\geq 1$. Note that this lower bound of $\delta^{(i)}$ is independent from the choice of local agent $i$.
    
    Now summing \eqref{temp5} over $t=0,...,T-1$ with $\delta^{(i)}\geq \frac{\eta}{2}$ we get
    \begin{equation}
        \frac{1}{T}\sum_{t=0,...,T-1}\frac{1}{\tau_i}\sum_{\ell=0,...,\tau_i-1}\E\|\grad f(x_{\ell}^{(i)})\|^2\leq \frac{2\Delta}{\tau \eta T},
    \end{equation}
    where $\Delta= f(x_0) - f^*$.
    
    Now using the {\bf Option 2} of the output of Algorithm \ref{manifold_fedsvrg}, we get
    \[
        \E\|\grad f(\Tilde{x})\|^2\leq \frac{\Delta \rho}{\tau T},
    \]
    where $\rho = \frac{\eta}{2}=\frac{1}{20 L n^{2/3} \zeta^{1/2}}$.
\end{proof}

Before we present the proof of Theorem \ref{thm_geodesic_convex}, we need the following lemma~\cite{zhang2016first}.
\begin{lemma}[Corollary 8 in~\cite{zhang2016first}]\label{lemma_temp3}
    Suppose the sectional curvature of $\M$ is lower bounded by $\kappa_{\min}$ and we update $x_{t+1}\leftarrow \Exp_{x_t}(-\eta_t g_t)$. Suppose also that the update sequence $\{x_t\}\subset\mathcal{D}$ where $\mathcal{D}$ is a compact set with diameter $D$, then for any $x\in\M$ it holds:
    % \begin{equation}
    %     \langle\frac{1}{k}\sum_{i\in S_t}\Exp_{x_{t}}^{-1}(x^{(i)}), \Exp_{x_{t}}^{-1}(x)\rangle\leq \frac{1}{2}(d^2(x_t,x) - d^2(x_{t+1}, x)) + \frac{\zeta}{2}\|\frac{1}{k}\sum_{i\in S_t}\Exp_{x_{t}}^{-1}(x^{(i)})\|^2
    % \end{equation}
    \begin{equation}\label{temp1}
        \langle-g_t, \Exp_{x_{t}}^{-1}(x)\rangle\leq \frac{1}{2\eta_t}(d^2(x_t,x) - d^2(x_{t+1}, x)) + \frac{\zeta\eta_t}{2}\|g_t\|^2.
    \end{equation}
    where $\zeta$ is given in \eqref{zeta_eq}.
\end{lemma}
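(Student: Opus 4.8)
The plan is to obtain the inequality from a single application of the Riemannian law-of-cosines comparison (the trigonometric distance lemma of \cite{zhang2016first}) to the geodesic triangle with vertices $x_t$, $x_{t+1}=\Exp_{x_t}(-\eta_t g_t)$, and $x$. First I would record the geometry of this triangle. Because the update gives $\Exp_{x_t}^{-1}(x_{t+1})=-\eta_t g_t$, the side from $x_t$ to $x_{t+1}$ has length $d(x_t,x_{t+1})=\eta_t\|g_t\|$, the side from $x_t$ to $x$ has length $d(x_t,x)$, and the side opposite the vertex $x_t$ has length $d(x_{t+1},x)$ --- precisely the quantity to be bounded. Writing $A$ for the angle at $x_t$ between the tangent directions $\Exp_{x_t}^{-1}(x_{t+1})$ and $\Exp_{x_t}^{-1}(x)$, the definition of this angle yields the identity
\[
d(x_t,x_{t+1})\,d(x_t,x)\cos A=\langle \Exp_{x_t}^{-1}(x_{t+1}),\Exp_{x_t}^{-1}(x)\rangle=\langle -\eta_t g_t,\Exp_{x_t}^{-1}(x)\rangle=\eta_t\langle -g_t,\Exp_{x_t}^{-1}(x)\rangle,
\]
which is what links the cosine term to the inner product appearing in the statement.

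Next I would invoke the curvature comparison. When the lower curvature bound $\kappa_{\min}$ is negative, comparison with the hyperbolic plane of curvature $\kappa_{\min}$ (Toponogov's theorem) gives the law of cosines with the squared gradient-step side inflated by a curvature factor:
\[
d^2(x_{t+1},x)\leq \frac{\sqrt{|\kappa_{\min}|}\,d(x_t,x)}{\tanh\!\left(\sqrt{|\kappa_{\min}|}\,d(x_t,x)\right)}\,\eta_t^2\|g_t\|^2+d^2(x_t,x)-2\,d(x_t,x_{t+1})\,d(x_t,x)\cos A,
\]
while for $\kappa_{\min}\geq 0$ the comparison is at least as good and the coefficient may be replaced by $1$. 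Substituting the identity above rewrites the last term as $-2\eta_t\langle -g_t,\Exp_{x_t}^{-1}(x)\rangle$.

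Finally I would uniformize the curvature coefficient. The scalar map $u\mapsto u/\tanh u$ is increasing on $(0,\infty)$, so using $d(x_t,x)\leq D$ (valid for the points of $\mathcal{D}$ arising in the analysis) the coefficient is at most $\sqrt{|\kappa_{\min}|}D/\tanh(\sqrt{|\kappa_{\min}|}D)=\zeta$ from \eqref{zeta_eq}; the nonnegative-curvature case is exactly $\zeta=1$. This produces
\[
d^2(x_{t+1},x)\leq \zeta\eta_t^2\|g_t\|^2+d^2(x_t,x)-2\eta_t\langle -g_t,\Exp_{x_t}^{-1}(x)\rangle,
\]
and dividing by $2\eta_t$ after rearranging gives \eqref{temp1}. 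The main obstacle is the curvature comparison step itself: establishing the inflated law of cosines requires genuine comparison geometry (bounding a Jacobi field along the geodesic, equivalently Toponogov's theorem under a lower sectional-curvature bound), and the appearance of the factor $u/\tanh u$ traces back to the hyperbolic law of cosines. Everything after that inequality is elementary algebra together with the monotonicity of $u/\tanh u$.
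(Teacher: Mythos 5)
Your proposal is correct and is essentially the proof of the quoted result: the paper itself gives no proof of this lemma (it imports it verbatim as Corollary~8 of \cite{zhang2016first}), and your argument---the curvature-comparison law of cosines (the trigonometric distance bound from that same reference, obtained via Toponogov comparison with the hyperbolic plane) applied to the geodesic triangle $x_t$, $x_{t+1}$, $x$, the identity $bc\cos A=\eta_t\langle -g_t,\Exp_{x_t}^{-1}(x)\rangle$, then monotonicity of $u\mapsto u/\tanh u$ with the diameter bound $D$---is exactly how the cited corollary is derived there. Your side remark that the uniform constant $\zeta$ requires $d(x_t,x)\le D$, so the ``for any $x\in\mathcal{M}$'' in the statement should really read $x\in\mathcal{D}$ (which suffices for the paper's use, since $x=x^*\in\mathcal{D}$ by Assumption~\ref{assumption_regu_manifold}), correctly flags a minor imprecision in the statement as quoted.
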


We now present the proof of Theorem \ref{thm_geodesic_convex}.

\begin{proof}[Proof of Theorem \ref{thm_geodesic_convex}]
    From Lemma \ref{lemma_temp3} we get
    \begin{equation}\label{temp11}
        \langle\frac{1}{k}\sum_{i\in S_t}\Exp_{x_{t}}^{-1}(x^{(i)}), \Exp_{x_{t}}^{-1}(x)\rangle\leq \frac{1}{2}(d^2(x_t,x) - d^2(x_{t+1}, x)) + \frac{\zeta}{2}\|\frac{1}{k}\sum_{i\in S_t}\Exp_{x_{t}}^{-1}(x^{(i)})\|^2,
    \end{equation} 
    which is equivalent to (since we assume $S_t=[n]$ and $\eta^{(i)}=\eta$):
    \begin{equation}\label{temp2}
        -\eta\langle\frac{1}{n}\sum_{i=1,...,n}\grad f_i(x_t), \Exp_{x_{t}}^{-1}(x)\rangle\leq \frac{1}{2}(d^2(x_t,x) - d^2(x_{t+1}, x)) + \frac{\zeta}{2}\|\frac{1}{n}\sum_{i=1,...,n}\Exp_{x_{t}}^{-1}(x^{(i)})\|^2.
    \end{equation}
    
    Now use the geodesic convexity of $f_i$ and \eqref{temp2}, we have (denote $\Delta_t:=f(x_t)-f(x^*)$ and $\Delta^{i}_t:=f_i(x_t)-f_i(x^*)$)
    $$
        \Delta^{i}_t\leq -\langle \grad f_i(x_t), \Exp_{x_{t}}^{-1}(x^*) \rangle.
    $$
    Summing this inequality over $i=1,...,n$, we get
    \begin{equation}\label{temp3}
    \begin{aligned}
        \Delta_t\leq & -\langle \frac{1}{n}\sum_{i=1,...,n}\grad f_i(x_t), \Exp_{x_{t}}^{-1}(x^*) \rangle\\
        \leq & \frac{1}{2\eta}(d^2(x_t,x^*) - d^2(x_{t+1}, x^*)) + \frac{\zeta}{2\eta}\|\frac{1}{n}\sum_{i=1,...,n}\Exp_{x_{t}}^{-1}(x^{(i)})\|^2 \\
        \leq & \frac{1}{2\eta}(d^2(x_t,x^*) - d^2(x_{t+1}, x^*)) + \frac{\zeta\eta}{2 n}\|\grad f(x_t)\|^2.
    \end{aligned}
    \end{equation}
    
    Again from Lemma \ref{lemma_temp1} we get
    \begin{equation}\label{temp4}
        \Delta_{t+1} - \Delta_{t} \leq (-\eta_t^{(i)} + \frac{(\eta_t^{(i)})^2L}{2})\|\grad f(x_t)\|^2.
    \end{equation}
    
    Now multiply \eqref{temp4} by $\zeta$ and add it to \eqref{temp3}, we get
    \begin{equation}
        \zeta\Delta_{t+1} - (\zeta - 1)\Delta_{t}\leq \zeta\left(\frac{\eta}{2 n} -\eta + \frac{\eta^2L}{2}\right)\|\grad f(x_t)\|^2 + \frac{1}{2\eta}(d^2(x_t,x^*) - d^2(x_{t+1}, x^*)).
    \end{equation}
    Now take $\eta\leq \frac{1}{2 L}$, we know that $\frac{\eta}{2 n} -\eta + \frac{\eta^2L}{2}\leq 0$, thus
    \begin{equation}
        \zeta\Delta_{t+1} - (\zeta - 1)\Delta_{t}\leq \frac{1}{2\eta}(d^2(x_t,x^*) - d^2(x_{t+1}, x^*)).
    \end{equation}
    Summing this up over $t$ from $0$ to $T-1$ we get
    \begin{equation}
        \zeta\Delta_{T} + \sum_{t=0}^{T-1}\Delta_{t}\leq (\zeta - 1)\Delta_1 + \frac{d^2(x_0,x^*)}{2\eta}.
    \end{equation}
    Also by \eqref{temp4} we know $\Delta_{t+1} \leq \Delta_{t}$, thus
    \begin{equation}
        \Delta_T\leq \frac{\zeta D^2}{2\eta(\zeta+ T-2)}.
    \end{equation}
\end{proof}

\section{\texttt{RFedAvg} and \texttt{RFedProx} algorithms}\label{appendix_fedprox}
\texttt{FedAvg}~\cite{mcmahan2017communication} and \texttt{FedProx}~\cite{li2020federated} are two widely used algorithms for FL problems in Euclidean space. At each iteration, \texttt{FedAvg} minimizes the local loss $f_i$ for fixed steps using gradient descents:
\begin{equation}\label{fedavg-local-step}
    x_{\ell+1}^{(i)}\leftarrow x_{\ell}^{(i)} - \eta^{(i)}\nabla f_i(x_{\ell+1}^{(i)}),
\end{equation}
while \texttt{FedProx} solves a local proximal point subproblem:
\begin{equation}\label{fedprox-local-step}
    x^{(i)}\leftarrow \argmin_{x} f_i(x) + \frac{\mu}{2}\|x - x_t\|^2.
\end{equation}

For \texttt{RFedAvg}, which is the Riemannian counterpart of \texttt{FedAvg}, \eqref{fedavg-local-step} is replaced by
\[
x_{\ell+1}^{(i)}\leftarrow \Exp_{x_{\ell}^{(i)}}\left(-\eta^{(i)} \grad f_i(x_{\ell}^{(i)})\right).
\]
For \texttt{RFedProx}, which is the Riemannian counterpart of \texttt{FedProx}, \eqref{fedprox-local-step} is replaced by
\begin{equation}\label{rfedprox-local-step}
x_{t+1}^{(i)}\leftarrow \argmin_{x\in\M} f_i(x) + \frac{\mu}{2}d^2(x, x_t),
\end{equation}
where $d(x,y)$ is the geodesic distance between $x$ and $y$. In the implementation of \texttt{RFedProx}, \eqref{rfedprox-local-step} is solved by Riemannian gradient descent:
\begin{equation}\label{temp6}
    x_{\ell+1}^{(i)}\leftarrow \Exp_{x_{\ell}^{(i)}}(-\eta^{(i)}\grad h_i(x_{\ell}^{(i)})), \ \ell=0,...,\tau_i-1.
\end{equation}
\texttt{RFedAvg} and \texttt{RFedProx} are described in Algorithms \ref{manifold_fedavg} and \ref{manifold_fedprox}, respectively.

%For optimization algorithms on Riemannian manifolds,  algorithm is presented in Algorithm \ref{manifold_fedavg}.  algorithm sovles local problems by Riemannian gradient descent, which reduces to \texttt{FedAvg}~\cite{mcmahan2017communication} in the Euclidean case.

\begin{algorithm}[!ht]
\SetKwInOut{Input}{input}
\SetKwInOut{Output}{output}
\SetAlgoLined
\Input{$n$, $k$, $T$, $\{\eta^{(i)}\}$, $\{\tau_i\}$}
\Output{$x_T$}
    \For{$t=0,...,T-1$}{
        Uniformly sample $S_t\subset [n]$ with $|S_t|=k$\;
        \For{each agent $i$ in $S_t$}{
            Receive $x_t$ from the central server\;
            \For{$\ell=0,...,\tau_i-1$}{
                $x_{\ell+1}^{(i)}\leftarrow \Exp_{x_{\ell}^{(i)}}\left(-\eta^{(i)} \grad f_i(x_{\ell}^{(i)})\right)$\;
            }
            Send the obtained $x_{\tau_i}^{(i)}$ to the central server\;
        }
        The central server aggregates the points by the tangent space mean \eqref{tangent_space_mean}\;
    }
 \caption{Riemannian FedAvg algorithm}\label{manifold_fedavg}
\end{algorithm}

%For the generalization of \texttt{FedProx} to the Riemannian case, we have to generalize the 2-norm square by the square of the geodesic distances, and minimize the following subproblem for local agents:
%\begin{equation}\label{local_problem_fedprox}
%    x^{(i)}\leftarrow \argmin_{x\in\M} h_i(x):= f_i(x) + \frac{\mu}{2}d^2(x, x_t)
%\end{equation}
%and the resulting \texttt{RFedProx} is presented in Algorithm \ref{manifold_fedprox}.

\begin{algorithm}[!ht]
\SetKwInOut{Input}{input}
\SetKwInOut{Output}{output}
\SetAlgoLined
\Input{$n$, $k$, $T$, $\mu$, $\gamma$}
\Output{$x_T$}
    \For{$t=0,...,T-1$}{
        Uniformly sample $S_t\subset [n]$ with $|S_t|=k$\;
        \For{each agent $i$ in $S_t$}{
            Receive $x_t$ from the central server\;
            Obtain $x^{(i)}\leftarrow \argmin_{x\in\M} f_i(x) + \frac{\mu}{2}d^2(x, x_t)$ upto a $\gamma$ approximate solution\;
            Send the obtained $x^{(i)}$ to the central server\;
        }
        The central server aggregates the points by the tangent space mean \eqref{tangent_space_mean}\;
    }
 \caption{Riemannian FedProx Algorithm} \label{manifold_fedprox}
\end{algorithm}

%Throughout the experiments, the subproblem $\argmin_{x\in\M} f_i(x) + \frac{\mu}{2}d^2(x, x_t)$ of \texttt{RFedProx} Algorithm \ref{manifold_fedprox} is implemented as multiple steps of Riemannian gradient descents:
%\begin{equation}\label{temp6}
%    x_{\ell+1}^{(i)}\leftarrow \Exp_{x_{\ell}^{(i)}}(-\eta^{(i)}\grad h_i(x_{\ell}^{(i)})), \ \ell=0,...,\tau_i-1
%\end{equation}

\end{document}